\documentclass[11pt]{article}
\usepackage[marginratio=1:1,height=600pt,width=460pt,tmargin=100pt]{geometry}
\usepackage{authblk}
\usepackage{mathtools}
\usepackage{microtype}
\usepackage{graphicx}
\usepackage{subcaption}
\usepackage{booktabs} 
\usepackage{makecell}
\usepackage{wrapfig}
\usepackage{diagbox}
\usepackage{tikz}
\usepackage{enumitem}%
\usepackage{amsfonts}
\usepackage{xcolor}

\usepackage{amsthm}
\usepackage{cprotect}
\usepackage{hyperref}
\usepackage{amsmath,amssymb}
\usepackage{multirow}
\usepackage{fancyhdr}
\pagestyle{fancy}
\fancyhead{}
\fancyfoot{}
\fancyfoot[R]{\thepage}

\fancypagestyle{firststyle}
{
   \fancyhf{}
   \fancyfoot[R]{\thepage}
    
}
\usepackage{fancyvrb}
\usepackage{natbib}
\usepackage{algorithm}
\usepackage{algorithmic}
\allowdisplaybreaks

\newcommand{\diag}[1]{\text{diag}(#1)}

\newcommand{\R}{\mathbb{R}}
\newcommand{\p}{\mathbb{P}}

\newcommand*\circled[1]{\tikz[baseline=(char.base)]{
\node[shape=circle,draw,inner sep=0.6pt] (char) {#1};}
}

\newcommand{\SPCI}{\texttt{SPCI}}
\newcommand{\MultiSPCI}{\texttt{MultiDimSPCI}}
\newcommand{\Ctalpha}{\widehat{C}_{t-1}(X_t,\alpha)}
\newcommand{\hatQt}{\widehat{Q}_t}
\newcommand{\betahat}{\hat{\beta}}
\newcolumntype{P}[1]{>{\centering\arraybackslash}p{#1}}

\newlength{\tempdima}
\newcommand{\rowname}[1]
{\rotatebox{90}{\makebox[\tempdima][c]{\textbf{#1}}}}
\renewcommand{\thesubfigure}{\alph{subfigure}}
\newcommand{\mycaption}[1]
{\refstepcounter{subfigure}\textbf{(\thesubfigure) }{\ignorespaces #1}}

\theoremstyle{plain}
\newtheorem{theorem}{Theorem}[section]

\newtheorem{lemma}[theorem]{Lemma}
\newtheorem{corollary}[theorem]{Corollary}
\theoremstyle{definition}
\newtheorem{definition}[theorem]{Definition}
\newtheorem{assumption}[theorem]{Assumption}
\theoremstyle{remark}
\newtheorem{remark}[theorem]{Remark}

\usepackage{setspace}
\onehalfspacing

\begin{document}
\title{Conformal prediction for multi-dimensional time series by ellipsoidal sets}

\author[1]{Chen Xu$^*$}
\author[1]{Hanyang Jiang\thanks{Equal contribution}}
\author[1]{Yao Xie\thanks{Correspondence: yao.xie@isye.gatech.edu}}
\affil[1]{{\small H. Milton Stewart School of Industrial and Systems Engineering, Georgia Institute of Technology.}}

\maketitle
\thispagestyle{firststyle}
\vspace{-0.3in}
\begin{abstract}
Conformal prediction (CP) has been a popular method for uncertainty quantification because it is distribution-free, model-agnostic, and theoretically sound. For forecasting problems in supervised learning, most CP methods focus on building prediction intervals for univariate responses. In this work, we develop a sequential CP method called \MultiSPCI{} that builds prediction \textit{regions} for a multivariate response, especially in the context of multivariate time series, which are not exchangeable. Theoretically, we estimate \textit{finite-sample} high-probability bounds on the conditional coverage gap. Empirically, we demonstrate that \MultiSPCI{} maintains valid coverage on a wide range of multivariate time series while producing smaller prediction regions than CP and non-CP baselines.
\end{abstract}

\section{Introduction}

Conformal prediction (CP) has been a popular distribution-free technique to quantify uncertainty in modern machine learning \citep{vovkinductive}. In building predictive algorithms, CP can enhance trained machine learning estimators to output not just point estimates but also provide uncertainty sets that contain the unobserved ground truth with user-specified high probability. As a result, CP has been applied successfully to many applications, such as anomaly detection \citep{xu2021ECAD}, classification \citep{MJ_classification,xuERAPS2022}, regression \citep{jackknife+}, and so on. In a nutshell, CP methods work as wrappers that take in three components: a black-box predictive model $f$, an input feature $X$, and a potential output $Y$. Then, it designs a so-called ``non-conformity'' score that measures how \textit{non-conforming} the potential output is. Naturally, the uncertainty set conditioning on the input feature and the predictor model would contain all potential outputs that are \textit{conforming} to the past.

Most successful applications of CP have considered $Y$ as an univariate variable. In the regression setting \citep{j+ab}, CP methods thus output \textit{prediction intervals} while in the classification setting \citep{Candes_classification}, these methods produce \textit{prediction sets}. With mild assumptions, such as assuming that data $(X,Y)$ are exchangeable, these one-dimensional uncertainty sets can theoretically guarantee high coverage probability. Recent works have also extended such guarantees to non-exchangeable observations and either quantify the coverage gap in finite training samples \citep{barber2023conformal} or show asymptotic coverage convergence \citep{xu2023sequential}.

Despite the success of CP on scalar outputs $Y$, effective use of CP on multi-dimensional outputs is considerably less studied, \textit{especially} when data are non-exchangeable as in the case of multivariate time-series forecasting. Moreover, there can be complex dependence between the multiple dimensions of the time series, making the problem more interesting and important. Specifically, the goal is not just to provide a prediction interval for each dimension of $Y$ but to produce an uncertainty region that captures the correlation within $Y$ and jointly contains all coordinates of $Y$. While uncertainty quantification methods for this problem have existed outside CP, as in vector auto-regressive models \citep{salinas2020deepar}, these approaches often have strong modeling or data assumption and lack rigorous theoretical justifications. On the other hand, various multi-dimensional CP methods have been proposed. Yet, they are either repeated use of one-dimensional CP methods \citep{stan021conformal} or fail to work beyond exchangeability \citep{messoudi2022ellipsoidal}.

We highlight the differences against copula-based CP methods, which have been developed for multi-dimensional forecasting. The initial approach developed in \citep{messoudi2021copula} assumed exchangeability, which is unsuitable for time series. The recent development by \citep{sun2024copula} proposes copula CP for multi-step time-series forecasting. However, their theory assumed that each data sample of an entire time series is drawn i.i.d. from an unknown distribution, hence ignoring temporal dependency. We further introduce copula and its use in CP in Section \ref{sec:copula}, with additional comparison against baselines in Section \ref{sec:real_data}.

\begin{table*}[!t]
\vspace{-0.1in}
\caption{A $2\times2$ taxonomy of conformal prediction approaches (not an exhaustive list), categorized based on the dimension of the response variable $Y$ (rows) and data assumptions (columns).}\label{CP_table}
\resizebox{\linewidth}{!}{
\begin{tabular}{c|c|c}
 & Exchangeable & Non-exchangeable \\\hline
Univariate $Y$  & \makecell[c]{\citep{vovkinductive} \\ \citep{jackknife+, j+ab}}  & \makecell[c]{\citep{Zaffran2022AdaptiveCP,xu2023conformal} \\ \citep{xu2023sequential, barber2023conformal}} \\  \hline
Multivariate $Y$  &  \makecell[c]{\citep{messoudi2021copula,diquigiovanni2022conformal} \\ \citep{johnstone2022exact,feldman2023calibrated}}  &  \makecell[c]{\textbf{Ours} \\ \citep{stan021conformal, sun2024copula}} \\
\end{tabular}}
\vspace{-0.1in}
\end{table*} 
Hence, the central focus of this work is to advance CP in the context of multivariate time-series forecasting. Specifically, we build ellipsoidal prediction sets whose size is adaptively and efficiently calibrated during test time. We also provide rigorous theoretical guarantees and extensive experiments to showcase the utility of the proposed method. In summary, our contributions are
\begin{itemize}[itemsep=0em,topsep=0.5em]
    \item We propose a sequential conformal prediction method that builds ellipsoidal prediction regions for multivariate time series. In particular, sizes of the ellipsoids are sequentially re-estimated during test time to ensure adaptiveness and accuracy.
    \item We provide finite-sample high-probability bounds for the coverage gap of constructed prediction regions, which do not depend on the exchangeability of the observations.
    \item We empirically verify on multivariate time-series (up to dimension 20) that \MultiSPCI{} can yield smaller prediction regions than baseline CP and non-CP methods without losing coverage.
\end{itemize}
For clarity, a taxonomy of existing CP methods is in Table \ref{CP_table} to highlight our role within the CP literature. In this paper, we assume that the noise sequence in the data-generating process (see Eq \eqref{eq:dgp}) is stationary and strongly mixing, where the original data sequence does not need to be exchangeable. Meanwhile, we impose some standard assumptions on the tail behavior of the distribution to derive a non-asymptotic bound on the conditional guarantee. We highlight that our guarantees differ from existing multi-dimensional CP works that assume exchangeability \citep{messoudi2021copula,messoudi2022ellipsoidal} or i.i.d. data \citep{sun2024copula}.

We adopt the standard notations. For a random process $\{X_n\}_n$, $X_n =o_p(a_n)$ means that $|X_n|/a_n\stackrel{p}{\rightarrow}0$. For function $f(n)$ and $g(n)$, $f(n)=\tilde{O}(g(n))$ means that $f(n)=O(g(n)\log(g(n))^k)$ for some $k$. Besides, for event $A$ and $B$, the notation $A\big\vert B$ means $A$ under the condition $B$. For vector $u, v \in \R^p$, $u\otimes v$ is the outer product of $u$ and $v$.

\subsection{Literature review}
\noindent \textit{CP with exchangeable data.} ~The field of CP started in \citep{vovk2005algorithmic} and has been widely used for uncertainty quantification due to its flexibility and robustness. On a high level, we define a ``non-conformity'' score and evaluate such scores on a hold-out calibration set. Then, uncertainty sets include all potential observations whose non-conformity scores are within the empirical quantiles of the calibration scores. Assuming nothing but that input data are exchangeable, CP methods have been successfully developed in different applications \citep{wisniewski20a,xu2021ECAD,xuERAPS2022}, in addition to the active research on proper designs of non-conformity scores \citep{MJ_classification,huang2023conformalized_gnn,gui2023conformalized}. Comprehensive reviews of conformal prediction can be found in \citep{conformalreview,angelopoulos2021gentle}.

\vspace{0.05in}
\noindent \textit{CP for one-dimensional time series.} ~ Two general trends of extending beyond exchangeability work well for univariate $Y$. The first considers adaptively adjusting the significance level $\alpha$ during test time to account for mis-coverage. Such works include \citep{Gibbs2021AdaptiveCI,Zaffran2022AdaptiveCP,lin2022conformal_1}. The recent work \citep{angelopoulos2024conformal} extends such framework through the lens of control theory to prospectively model non-conformity scores in online settings. The second considers weighing the past non-conformity score non-equally so that scores more similar to the present are given higher weights. Such works include \citep{CPcovshift,xu2021conformal,xu2023sequential,xu2023spatio,nair2023randomization}, some of which have successfully been applied to univariate time series. The recent work \citep{barber2023conformal} also suggests that re-weighting can be a general scheme to account for non-exchangeability. Our 
\MultiSPCI{} is similar to the second line of approaches but works in high dimensions.

\vspace{0.05in}
\noindent \textit{CP for multi-dimensional data.} ~ Numerous works have been on this topic. \citep{stan021conformal} builds coordinate-wise prediction intervals for multi-horizon time-series prediction using Bonferroni correction of the significance level. For multivariate functional data, a similar idea of building prediction \textit{bands} was studied in \citep{diquigiovanni2022conformal}, where this idea was further developed for time series \citep{diquigiovanni2021distribution}. In addition, \citep{messoudi2021copula} develops a principled way to determine the length of coordinate-wise intervals by using copula, resulting in hyper-rectangular prediction regions. The extension of copula for time-series forecasting was later studied in \citep{sun2024copula}. 
However, it is important to note that the use of hyper-rectangles can be sub-optimal in many cases, even in the two-dimensional instances when the true conditional distribution $Y|X$ is $N(f(X), \Sigma)$ with a non-zero off-diagonal entry in $\Sigma$. 
To overcome this, \citep{messoudi2022ellipsoidal} considers ellipsoidal uncertainty sets that rely on data exchangeability. A more exact quantification of the uncertainty set is studied in \citep{johnstone2022exact}, which, however, strongly depends on the underlying predictive model of $Y$.
As a result, extending CP for multivariate time-series forecasting beyond using hyper-rectangles still needs to be explored.

\vspace{0.05in}
\noindent \textit{Uncertainty quantification beyond CP.} ~The task of building uncertainty set for unobserved response has been widely studied beyond CP. There has been a long history of using copula to capture the joint distribution of multivariate response by relating the joint cumulative distribution function (CDF) with each marginal CDF \citep{sklar1959fonctions,elidan2013copulas}. Meanwhile, \citep{dobriban2023joint} uses conditional pivots to construct joint coverage regions for parameters and observations, extending beyond CP. However, its utility beyond exchangeable data remains unclear. On the other hand, there has been extensive development in the \textit{probabilistic forecasting} literature, popular examples of which include the DeepAR \citep{salinas2020deepar} and Temporal Fusion Transformer \citep{lim2021temporal}. Such approaches optimize (variants of) the pinball loss to estimate quantiles of multivariate responses but typically require extensive hyper-parameter tuning and return hyper-rectangular uncertainty sets. We will show experimentally that their performances are often worse than their CP counterparts.
Lastly, \citep{feldman2023calibrated} uses CP in the representation space learnt by a deep generative model, allowing general prediction sets for multivariate data. Coverage guarantee for exchangeable data is proved, and extension to time series remains unexplored.

\section{Problem setup}

We consider a multi-dimensional time-series regression setup: for time index $t =1,2,\ldots$, assume observations $Z_t=(X_t, Y_t)$ are sequentially revealed, where $Y_t\in \R^p$ are $p$-dimensional vector variables and $X_t \in \R^d$ are $d$-dimensional features. The features $X_t$ may be the history of $Y_t$ or contain other variables that help predict $Y_t$.
In particular, we allow arbitrarily unknown correlation among the observations $Z_t$. Let the first $T$ samples $\{Z_t\}_{t=1}^T$ be the training data.

Our goal is to sequentially construct prediction \textit{regions} $\Ctalpha$ starting from $t=T+1$, which depends on past observations, the current feature $X_t$, and a user-specified significance level $\alpha \in [0,1]$. In particular, we desire the prediction regions to contain the true observations $Y_t$ with a probability at least $1-\alpha$. Mathematically, there are two types of coverage guarantees to be satisfied by $\Ctalpha$. The first is the weaker \textit{marginal} coverage:
\begin{equation}\label{marginal_cov}
    \mathbb P(Y_t \in \Ctalpha) \geq 1-\alpha, \forall t,
\end{equation}
while the second is the stronger \textit{conditional} coverage:
\begin{equation}\label{cond_cov}
    \mathbb P(Y_t \in \Ctalpha|X_t) \geq 1-\alpha, \forall t.
\end{equation}
If $\Ctalpha$ satisfies \eqref{marginal_cov} or \eqref{cond_cov}, it is called marginally or conditionally valid, respectively. When $\Ctalpha$ satisfies the coverage guarantees, we further construct regions that are as small as possible to quantify uncertainty precisely.

\section{Method}

In this section, we first propose the ellipsoidal uncertainty set that effectively quantifies multi-dimensional prediction. We then discuss several benefits of the proposed approach against alternatives. We finally suggest alternative forms of the uncertainty set beyond using ellipsoids.

\subsection{Ellipsoidal uncertainty set}

We build the prediction regions in the shape of ellipsoids and calibrate the radius of ellipsoids using conformal prediction for univariate time series. Recall that we have access to $T$ training data $Z_t=(X_t,Y_t)$ for $t=1,\ldots,T$. Assume we have been given an algorithm $\hat{f}$, trained on a separate set $I$ to perform point prediction for $Y$. Meanwhile, we collect \textit{prediction} residuals $\hat \varepsilon_t \in \mathbb R^p$
\[\hat \varepsilon_t = Y_t-\hat{f}(X_t),~t=1,\ldots,T.\] This approach is similar to the split conformal prediction \citep{vovkinductive} or leave-one-out techniques \citep{jackknife+, xu2023conformal}.

To define the ellipsoidal uncertainty set, we first denote the covariance estimator over the prediction residuals as 
\begin{equation}\label{eq:global_cov}
    \widehat \Sigma=\frac{1}{T-1}\sum_{t=1}^{T}(\hat \varepsilon_t-\bar{\varepsilon})(\hat \varepsilon_t-\bar{\varepsilon})^T,  
\end{equation}
where $\bar{\varepsilon}=\frac{1}{T}\sum_{t=1}^{T} \hat \varepsilon_t$ is the sample mean vector over the residuals. As the definition of an ellipsoid will rely on the inverse of $\widehat \Sigma$ in \eqref{eq:global_cov}, which may not be invertible, we consider a low-rank approximation of $\widehat \Sigma$ as follows. Let the singular value decomposition of $\widehat{\Sigma}$ be $\widehat{\Sigma}=USV^T$, where $S=\diag{\lambda_1,\ldots,\lambda_p}$ is the diagonal matrix of singular values satisfying $\lambda_1\geq\ldots\geq \lambda_p\geq 0$, and $U$ and $V$ satisfy $UU^T=VV^T=I_p$. Given a small positive threshold $\rho>0$, the low-rank approximation $\widehat{\Sigma}_{\rho}$ is
\begin{equation}\label{eq:cov_est_rank_k}
    \widehat{\Sigma}_{\rho}=U_{\rho}S_{\rho}V_{\rho}^T,
\end{equation}
where $S_{\rho}=\diag{\lambda_1,\ldots,\lambda_k}$ for which $\lambda_k \geq \rho$, and $U_{\rho}$ and $V_{\rho}^T$ contain the first $k$ columns and rows of $U$ and $V^T$, respectively. The pseudo-inverse $\widehat{\Sigma}_{\rho}^{-1}$ is thus written as 
\begin{equation}\label{eq:pseudo_inv}
\widehat{\Sigma}_{\rho}^{-1}=V_{\rho}S_{\rho}^{-1}U_{\rho}^T,
\end{equation} 
where $S_{\rho}^{-1}=\diag{1/\lambda_1,\ldots,1/\lambda_k}$. Using $\widehat{\Sigma}_{\rho}^{-1}$, which is always well-defined, an ellipsoid with radius $r$ can thus be written as 
\[\mathcal B(r, \bar{\varepsilon}, \widehat \Sigma_{\rho})=\{x \in \R^p: (x-\bar{\varepsilon})^T\widehat{\Sigma}_{\rho}^{-1} (x-\bar{\varepsilon}) \leq r\}.\]

We then find an appropriate radius $r$ using time-series conformal prediction methods. First, given a new residual $\hat \varepsilon=Y-\hat{f}(X)$ and the pseudo-inverse $\widehat{\Sigma}_{\rho}^{-1}$ in \eqref{eq:pseudo_inv}, we define the scalar non-conformity score $\hat e(Y)$ as 
\begin{equation}\label{eq:non_conform_score}
    \hat e(Y) = (\hat \varepsilon - \bar{\varepsilon})^T \widehat \Sigma_{\rho}^{-1} (\hat \varepsilon - \bar{\varepsilon}) \in \mathbb R.
\end{equation}
We then compute the non-conformity scores on the training set $(X_t,Y_t)$ for $t=1,\ldots,T$ to obtain the set 
\[\mathcal E_T = \{\hat e(Y_t)\}_{t=1}^{T}.\]
Note that non-conformity scores in $\mathcal E_T$ can be sequentially dependent due to the inherent dependency among the original data $(X_t, Y_t)$. We take this into account by using \SPCI{} \citep{xu2023sequential}, a sequential conformal inference method for univariate time series. Specifically, rather than directly taking the \textit{empirical} quantile over $\mathcal E_T$, we fit a \textit{quantile regression estimator} $\hatQt$ on $\mathcal E_T$, where $\hatQt(\alpha)$ aims to predict the $\alpha$-quantile of the unseen non-conformity score. There is no specific restriction on the quantile regression method used here. For example, \SPCI \citep{xu2023sequential} uses the quantile random forest.

We first define the set difference of two sets $A$ and $B$ as $A\setminus B=\{x:x\in A \text{ and } x\notin B\}.$ Thus, the prediction set $\Ctalpha \subset \R^p$ for a given confidence level $\alpha$ takes the form
\begin{align}
    \Ctalpha &= \{Y: \hatQt(\betahat) \leq \hat{e}(Y) \leq \hatQt(1-\alpha+\betahat)\} \label{eq:UQ_set}\\
             &=\hat{f}(X_t)+\mathcal{B}(\sqrt{\hatQt(1-\alpha+\betahat)},\bar{\varepsilon}, \widehat \Sigma_{\rho})\setminus \mathcal{B}(\sqrt{\hatQt(\betahat)}, \bar{\varepsilon}, \widehat \Sigma_{\rho}) \nonumber \\
     \betahat &= \underset{\beta \in [0,\alpha]}{\arg\min}\ V(\widehat \Sigma_{\rho}, \hatQt(1-\alpha+\beta))-V(\widehat \Sigma_{\rho}, \hatQt(\beta)) \label{eq:betahat}.
\end{align}    
In \eqref{eq:UQ_set}, the prediction region contains all $Y$ such that their non-conformity scores $\hat{e}(Y)$ are within the respective quantiles of the ellipsoid, which centers at the prediction $\hat{f}(X_t)$ as shown in the second line of Eq. \ref{eq:UQ_set}.
In \eqref{eq:betahat}, $V(\widehat \Sigma_{\rho}, r)$ denotes the volume of the ellipsoid with radius $r$, and we find $\betahat$ empirically as the tightest significance level at which the volume of the prediction region is smallest. Note that optimizing $\beta$ further allows us to consider asymmetry in the distribution of non-conformity scores. 
When the optimal $\hat{\beta}$ is zero, it reduces to an ellipsoid as follows, which is also shown in Figure \ref{fig_spci_tllustrate}(c).
\begin{equation*}
\begin{aligned}
\Ctalpha = \{Y:\hat{e}(Y) \leq \hatQt(1-\alpha)\} =\hat{f}(X_t)+\mathcal{B}\left(\sqrt{\hatQt(1-\alpha)},\bar{\varepsilon}, \widehat \Sigma_{\rho}\right) 
\end{aligned}
\end{equation*}

\begin{figure}[!t]
  \begin{minipage}{0.32\textwidth}
      \includegraphics[width=\linewidth]{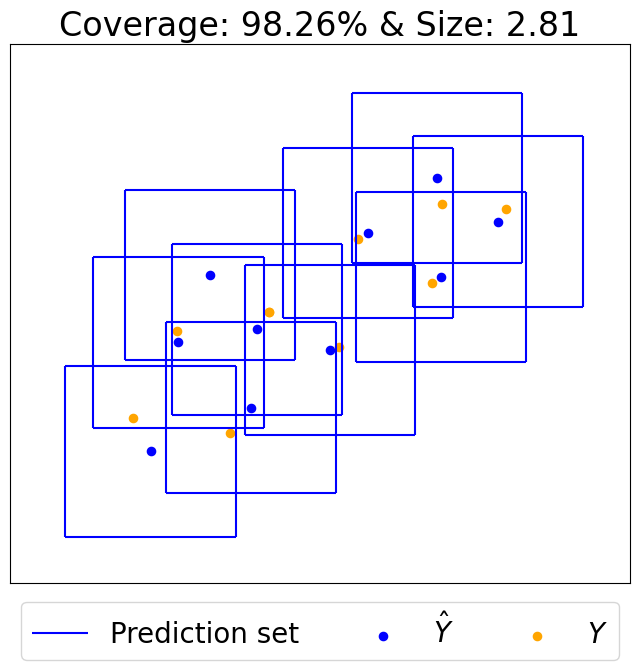}
      \subcaption{}
  \end{minipage}
  \begin{minipage}{0.32\textwidth}
      \includegraphics[width=\linewidth]{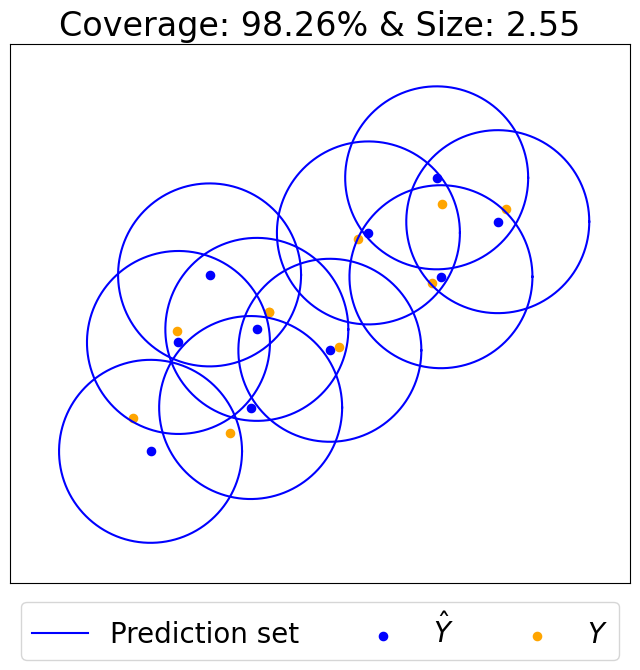}
      \subcaption{}
  \end{minipage}
  \begin{minipage}{0.32\textwidth}
      \includegraphics[width=\linewidth]{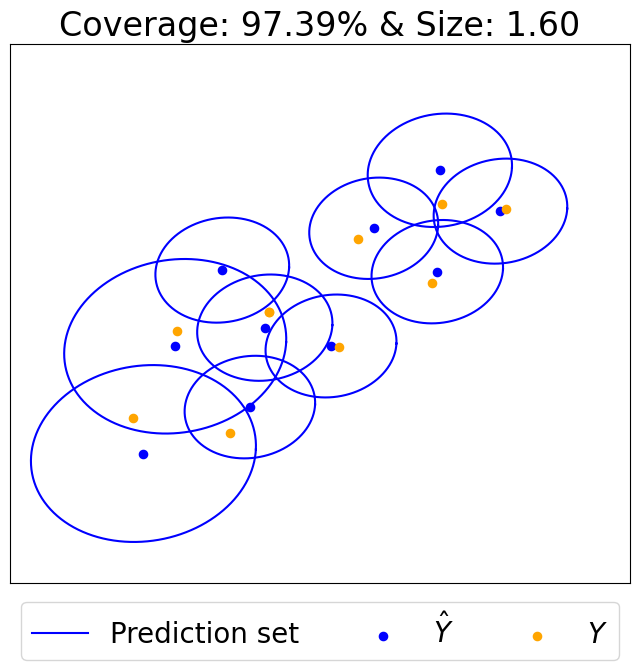}
      \subcaption{}
  \end{minipage}
  \caption{Comparison of multivariate CP method on real two-dimensional wind data (see Section \ref{sec:real_data}). {Left (a):} Empirical copula \citep{messoudi2021copula} which constructs coordinate-wise prediction intervals. {Middle (b):} Spherical confidence set introduced in \citep{sun2024copula}. {Right (c):} our proposed ellipsoidal confidence set via \MultiSPCI{}. While all methods yield coverage at least above the target 95\% on test data, our method yields the smallest average size.}
  \label{fig_spci_tllustrate}
\end{figure}

We propose \MultiSPCI{} in Algorithm \ref{algo_SPCI_multi_dim} as a multi-dimensional generalization of the original \SPCI{} \citep{xu2023sequential} method. The main benefits lie in the extension to quantify uncertainty in multi-dimensional prediction. 
The method we propose is simple and uses an ellipsoid uncertainty set. However, we will show later that this method can achieve conditional coverage. Besides, even though our method only uses the information of the first two moments, it outperforms the copula method, which is the state of the art.
Figure \ref{fig_spci_tllustrate} illustrates the benefit of \MultiSPCI{} over existing methods in yielding smaller uncertainty sets for multi-dimensional UQ time series.

\begin{algorithm}[!t]
\cprotect\caption{Multi-dimensional \SPCI{} (\MultiSPCI{})}
\label{algo_SPCI_multi_dim}
\begin{algorithmic}[1]
\REQUIRE{Training data $\{(X_t, Y_t)\}_{t=1}^T$, 
prediction algorithm $\mathcal{A}$,
significance level $\alpha$, quantile regression algorithm $\mathcal{Q}$, positive threshold $\rho>0$.
}
\ENSURE{Prediction intervals $\Ctalpha, t>T$}
\STATE Obtain $\hat{f}$ and residuals $\{\hat{ \varepsilon}_t\}_{t=1}^{T} \subset \R^p$ (computed on the holdout set) with $\mathcal{A}$ and $\{(X_t, Y_t)\}_{t=1}^T$
\STATE Compute non-conformity scores $\mathcal E_T$ from $\{\hat{ \varepsilon}_t\}_{t=1}^{T}$ and $\widehat \Sigma_{\rho}$ using \eqref{eq:non_conform_score}
\FOR {$t>T$}
\STATE Use quantile regression to obtain $\hatQt\leftarrow\mathcal{Q}(\mathcal E_T)$
\STATE Obtain uncertainty set $\Ctalpha$ as in \eqref{eq:UQ_set}
\STATE Obtain new residual $\hat \varepsilon_t$
\STATE Update residual set $\{\hat{ \varepsilon}_t\}_{t=1}^{T}$ by adding $\hat \varepsilon_t$ and removing the oldest one and update $\mathcal E_T$
\ENDFOR
\end{algorithmic}
\end{algorithm}

\subsection{Comparison with copula}\label{sec:copula}

We briefly introduce copula and explain how copula has been utilized in multivariate conformal prediction. We then highlight the key differences of the copula-based CP method with our \MultiSPCI{}.

Let $\boldsymbol{X}=(X_1,\ldots, X_p)$ 
be a generic $p$-dimensional continuous random vector with the joint CDF $F$ and marginal CDFs $F_j$ of $X_j$ for $j=1,\ldots,p$. We remark that in this subsection, for notation convenience, the subscript $_j$ in $X_j$ denotes the $j$-th component of $\boldsymbol{X}$ rather than the $j$-th feature vector of the original time series (i.e., $X_j$ in the sequence $\{(X_t, Y_t)\}$). Define $U_j:=F_j(X_j)$, where for $u\in [0,1]$, $\mathbb{P}(U_j\leq u)=\mathbb{P}(X_j\leq F^{-1}_j(u))=F_j(F^{-1}_j(u))=u$. Hence, $U_j\sim \rm{Unif}[0,1]$ is a uniform random variable on $[0,1]$. Now, the joint CDF of $(U_1,\ldots,U_p)$ is the \textit{copula} $C$ of $(X_1,\ldots,X_p)$:
\begin{align}
    C(u_1,\ldots,u_p)
    &= \mathbb{P}(U_1\leq u_1,\ldots,U_p \leq u_p) \label{eq:copula_def}\\
    &= \mathbb{P}(X_1 \leq F^{-1}_1(u_1),\ldots,X_p \leq F^{-1}_p(u_p))\nonumber \\
    &= F(F^{-1}_1(u_1),\ldots,F^{-1}_p(u_p)). \nonumber
\end{align}
Hence, the copula $C$ links $p$ marginal CDFs $\{F_j\}$ to the joint CDF $F$. 
For instance, consider bivariate Gaussian copula as an example, where we can explicitly write down the copula $C$. Let $(X_1,X_2)\sim \mathcal{N}(\boldsymbol{0},\Sigma)$ with $\Sigma_{11}=\Sigma_{22}=1$ and $\Sigma_{12}=\Sigma_{21}=\kappa$ for $\kappa\in [-1,1]$. Then, 
\begin{align}
    C(u_1,u_2)
    &= \mathbb{P}(U_1\leq u_1,U_2 \leq u_2) \nonumber\\
    &= \mathbb{P}(X_1 \leq \Phi^{-1}(u_1),X_2 \leq \Phi^{-1}(u_2))\nonumber \\
    &= \Phi_2(\Phi^{-1}(u_1),\Phi^{-1}(u_2);\kappa), \label{eq:copula_gaussian}
\end{align}
where $\Phi$ is the CDF of $\mathcal{N}(0,1)$ and $\Phi_2(\cdot;\kappa)$ is the joint CDF of $\mathcal{N}(\boldsymbol{0},\Sigma)$. Note that the bivariate Gaussian copula is parametric, assuming the marginal and joint distributions follow Gaussian distributions.

In conformal prediction, copula has been used to calibrate the coordinate-wise quantile of prediction residuals. Let $|\hat{\varepsilon}_{tj}|$ be the $j$-th coordinate of the $t$-th prediction residual in absolute value, and let $F_{tj}$ be its marginal distribution. Then, past works \citep{messoudi2021copula} fit a copula $C_t$ to the $p$-dimensional random vector $(|\hat{\varepsilon}_{t1}|,\ldots,|\hat{\varepsilon}_{tp}|)$. Specifically, they find $(u_{t1}, \ldots, u_{tp})\in [0,1]^p$ so that
\begin{align*}
    & \mathbb{P}(|\hat{\varepsilon}_{t1}|\leq F_{t1}^{-1}(u_{t1}),\ldots,|\hat{\varepsilon}_{tp}|\leq F_{tp}^{-1}(u_{tp}))
    = C_t(u_{t1}, \ldots, u_{tp}) = 1-\alpha,
\end{align*}
where $\alpha$ is a pre-specified significance level (e.g., $\alpha=0.05$). In practice, $F_{tj}$ is unknown so it is replaced by $\hat{F}_{tj}$, the empirical distribution defined using past residuals, and the values $(u_{t1}, \ldots, u_{tp})$ are found under special assumptions (e.g., $u_{t1}=\ldots=u_{tp}$ \citep{messoudi2021copula}) or searched via stochastic gradient descent \citep{sun2024copula}. 

We remark two main differences between copula conformal prediction and our proposed \MultiSPCI{}. First, the use of copula CP requires searching for multi-dimensional vectors $\boldsymbol{u}_t=(u_{t1}, \ldots, u_{tp})$ at each $t$, whose efficiency and accuracy also highly depends on the choice of copula $C_t$. How to design copula and search for the best $\boldsymbol{u}_t$ remains unclear. In contrast, our \MultiSPCI{} requires much less design effort, as it only uses an estimation of the covariance matrix of residuals $\{\hat{\varepsilon}_t\}$. Second, note that copula CP returns \textit{hyper-rectangular} prediction sets, as the method constructs one prediction interval at each $p$ coordinates. Such hyper-rectangular sets can be too large compared to ellipsoidal sets, as we experimentally find ours are significantly smaller without affecting test coverage (see Section \ref{sec:real_data}).

\subsection{Benefits of the proposed approach}\label{sec:compare}

We further discuss the benefits of \MultiSPCI{} against other approaches.

\vspace{0.05in}
\noindent \textit{Against coordinate-wise use of \SPCI{} \citep{xu2023sequential}:} Rather than building ellipsoidal uncertainty sets, a naive but perhaps more intuitive approach is to apply \SPCI{} $p$ times, once per dimension of $Y\in \R^p$. The resulting uncertainty sets are hyper-rectangles, which can be unnecessarily large in many cases. In addition, the significance values for \SPCI{} at different dimensions need to be adjusted appropriately to achieve valid coverage of $Y$. Computationally, such use of \SPCI{} is also more expensive than \MultiSPCI{} because $(p-1)T_1$ additional quantile regression models are fitted ($T_1$ is the length of the test set). 

\vspace{0.05in}
\noindent \textit{Against copula-based CP methods \citep{messoudi2021copula,sun2024copula}:} Besides the limitation above of returning hyper-rectangular uncertainty sets, these copula-based methods fail to account for the sequential dependency of non-conformity scores when taking the empirical quantile over scores. In contrast, the proposed \MultiSPCI{} explicitly takes dependency into account by adaptively re-estimating the quantile of non-conformity scores.

\vspace{0.05in}
\noindent \textit{Against probabilistic forecasting methods \citep{salinas2020deepar,lim2021temporal}:} There are two main benefits of \MultiSPCI{}. First, our proposed method is compatible with any user-specified prediction model $\hat{f}$ of $Y$. In contrast, such probabilistic forecasting methods often require specifically designed deep neural networks to predict the quantiles of $Y$ directly. Second, we can provide coverage guarantees for the proposed method, whereas those methods often lack sound justifications.

\subsection{Improvements using local ellipsoids}
In practice, constructing the scalar non-conformity scores in \eqref{eq:non_conform_score} based on a global covariance matrix in \eqref{eq:global_cov} fails to capture local variation in data. We can improve our \MultiSPCI{} through using local ellipsoids proposed in \citep{messoudi2022ellipsoidal}.

Specifically, given a test data feature $X_t$ for $t>T$, we first consider its $k$ nearest neighbors among previous $T$ samples $\{X_{t-1},X_{t-2},\ldots,X_{t-T}\}$. Let the index set of neighbors be $N_t$ with $|N_t|=k$. Then, we denote $\widehat{\rm{Cov}}_t$ as the sample covariance estimator using $\{\hat{\varepsilon}_t\}_{t\in N_t}$ (see Eq. \eqref{eq:global_cov} for the definition using $\{\hat{\varepsilon}_t\}_{t=1}^T$). As a result, given a parameter $\lambda \in [0,1]$, the local covariance estimator at time $t$ is written as the weighted average
\begin{equation}\label{eq:local_cov}
    \widehat{\Sigma}_t=\lambda \widehat{\rm{Cov}}_t + (1-\lambda) \widehat{\Sigma},
\end{equation}
where $\widehat{\Sigma}$ is the global empirical covariance matrix in \eqref{eq:global_cov}. We recommend setting $k=0.1T$ and $\lambda=0.95$ to capture local variations effectively. Lastly, as $\widehat{\Sigma}_t$ in \eqref{eq:local_cov} may not be invertible, we use its low-rank approximation in \eqref{eq:cov_est_rank_k} and the corresponding pseudo-inverse in \eqref{eq:pseudo_inv}, which would be used to compute the non-conformity score \eqref{eq:non_conform_score} at time $t$. We empirically find that compared to using \eqref{eq:global_cov}, the use of \eqref{eq:local_cov} can lead to up to 25\% reduction in the average size of prediction sets.

\section{Theoretical Analysis}
\label{TA}

In this section, we will present theoretical results for bounding the conditional coverage of our method. The result is based on the case where the sample covariance matrix is invertible. We first recall and define the notations and then give out the assumptions required, which are general and identifiable. After that, we will present our coverage guarantees when using the empirical quantile function as the quantile regression predictor. The norm $\|\cdot\|$ used in the paper is the spectral norm ($2$-norm). The proof details will be in Appendix \ref{AppP}. The main idea of the proof consists of two parts. The first part is the convergence of the empirical CDF to the true CDF of the residual. The second part is to control the estimation error of the sample covariance matrix.

We assume that $Y_t\in\R^p$ is generated from a true model with unknown additive noise:
\begin{equation}\label{eq:dgp}
Y_t=f(X_t)+\varepsilon_t, \quad t = 1, 2, \ldots,
\end{equation}
where $f$ is an unknown function and $\varepsilon_t$ represents the process noise, whose marginal distribution is not necessarily Gaussian, the process noise may have temporal dependence. For the simplicity of notation, we assume $\bar{\varepsilon}=0$ here so that the non-conformity score simplifies to $\hat{\varepsilon}_t^T\widehat{\Sigma}^{-1}\hat{\varepsilon}_t$. Besides, without loss of generality, we can assume $\mathbb{E}[\varepsilon]=0$. Otherwise, we can subtract the mean from the noises and add to the function $f$.

We define $\hat{\varepsilon}_t=y_t-\hat{f}(x_t)$ as the {\it vector prediction residual}, and the {\it scalar non-conformity score}
\[\hat{e}_t=\hat{\varepsilon}_t^T\widehat \Sigma^{-1}\hat{\varepsilon}_t \in \mathbb R.\] 
Moreover, 
\[e_t=\varepsilon_t^T\Sigma^{-1}\varepsilon_t, 
\quad \Delta_t=\hat{\varepsilon}_t-\varepsilon_t.\] Here $\varepsilon_t\in\R^{p}$ is the true noise in model \eqref{eq:dgp} and $\Sigma\in\R^{p\times p}$ is the true covariance matrix of $\varepsilon$. Besides, we define the empirical CDF
\begin{equation}
\begin{aligned}
\widehat{F}_{T+1}(x)=\frac{1}{T}\sum_{t=1}^{T}1\{\hat{e}_t\le x\}, \quad 
\widetilde{F}_{T+1}(x)=\frac{1}{T}\sum_{t=1}^{T}1\{e_t\le x\}.
\end{aligned}
\end{equation}
We also use 
\[F_e(x)=\p\{e\le x\},\] to represent the CDF of the nonconformity score. Since we consider the case where the marginal distributions of $e_t$ are identical, their CDF is the same, and we can define it as $F_e(x)$. In our method, we use the empirical distribution of non-conformity score $\hat{e}$ to approximate the distribution of $e$. A new observation $Y_{T+1}$ being covered by the conformal interval with given coverage is equivalent to $\hat{e}_{T+1}$ falling in a given quantile in empirical distribution $\widehat{F}_{T+1}$.

From the property of CDF, we know that $F_e(e_{T+1})\sim\mathrm{Unif}[0,1]$. If we can show that $\widehat{F}_{T+1}$ approximates $F_e$ well, then it will follow that $\widehat{F}_{T+1}$ approximately covers a region of $1-\alpha$ probability. Comparing $\hat{e}$ and $e$, we see that
\begin{equation}
\hat{e}_t-e_t=\hat{\varepsilon}_t^T\widehat \Sigma^{-1}\hat{\varepsilon}_t-\varepsilon_t^T\Sigma^{-1}\varepsilon_t,
\end{equation}
where $\widehat \Sigma$ is estimated from $\{\hat{\varepsilon}_t\}_{t=1}^{T}$. We would need $\hat{\varepsilon}_t$ to be close to $\varepsilon_t$. Otherwise, this approximation would not hold. 
\begin{assumption}[i.i.d. and Lipschitz]
\label{a1}
Assume $\{\varepsilon_t\}_{t=1}^{T+1}$ are independent and identically distributed (i.i.d.). Meanwhile, $F_e(x)$ (the CDF of the true non-conformity score) is assumed to be Lipschitz continuous
with constant $L_{T+1}>0$. 
\end{assumption}
\begin{remark}
We first assume that the error process $\{\varepsilon_t\}_{t=1}^{T+1}$ is i.i.d. In fact, this assumption is not necessary, and we will extend this assumption to cases beyond exchangeability. The result for stationary and strong mixing sequences will be presented in Corollary \ref{c1}. 
\end{remark}

\begin{assumption}[Estimation quality]
\label{a2}
There exists a sequence $\{\delta_T\}_{T\ge 1}$ such that
\begin{equation}
\begin{aligned}
\frac{1}{T}\sum_{t=1}^{T}\|\Delta_t\|^2 \le \delta_{T}^{2}, ~~\|\Delta_{T+1}\| \le \delta_T.
\end{aligned}
\end{equation}
\end{assumption}
\begin{remark}
The assumption requires that the square sum of the prediction error be bounded by $\delta_T^2$. For many estimators, there exists a sequence $\{\delta_T\}_{T\ge 1}$ that goes to zero. For example, $\delta_T=o_p(T^{-1/4})$ for general neural networks sieve estimators when $f$ is sufficiently smooth \citep{chen1999improved}. When $f$ is a sparse high-dimensional linear model, $\delta_T=o_p(T^{-1/2})$ for the
Lasso estimator and Dantzig selector \citep{bickel2009simultaneous}. 
\end{remark}

\begin{assumption}[Covariance eigenvalue]
\label{a3}
There exists a $\lambda>0$ s.t. $\|\Sigma\|\ge \lambda$ and $\|\widehat \Sigma\|\ge \lambda$.
\end{assumption}
\begin{remark}
It is a common assumption to require both the covariance matrix and the estimated covariance matrix to be strictly positive definite. The condition holds for the covariance matrix as long as there is no linear dependency between variables, which is true for the errors $\varepsilon_t$. Besides, the assumption is also satisfied by the sample covariance matrix because our algorithm uses the pseudo-inverse, which ensures positive eigenvalues.
\end{remark}

\begin{assumption}[Tail behavior]
\label{a4}
There exist some constants $q>4$, $K_1, K_2>0$ and $L\ge 1$, such that $\max_{t\le T} \|\varepsilon_t\|\le \sqrt{K_1p}$ almost surely and $\mathbb{E}|\langle\varepsilon,x\rangle|^q\le L^q$ for $x\in S^{p-1}$. Besides, there exists a constant $K_2$ such that $\mathrm{Var}[\|\varepsilon\|^2]\le K_2p$. 
\end{assumption}
\begin{remark}
The assumption is required in Theorem 1.2 \citep{vershynin2012close} so that the sample covariance matrix converges to the true covariance matrix in the operator norm. Other assumptions in literature ensure a $O(T^{-{1/2}})$ convergence. For example, \citep{koltchinskii2017concentration} requires random variables $\varepsilon$ to be weakly square-integrable, sub-Gaussian, and pre-Gaussian. Our method can use covariance estimators other than the classic sample covariance matrix. The sample covariance matrix is a natural choice, but it is a poor estimator when the dimension is very high unless there are some nice tail behaviors \citep{lugosi2019near}. There are a lot of results in the literature focusing on covariance estimation under different conditions. The Assumption \ref{a4} can be easily switched to other requirements if we change the estimator. \citep{cai2016estimating} offers an overview of covariance estimators with their optimal rates. The choice and analysis of the covariance matrix is not the main focus of our paper, so we use the sample covariance matrix here for simplicity.
\end{remark}

With the i.i.d. assumption, we can show that the empirical distribution of $e_t$ approximates the true CDF well in the following sense.
\begin{lemma}[Convergence of empirical CDF of $\{\varepsilon_t\}_{t=1}^{T}$ under i.i.d.]
\label{l1}
Under Assumption \ref{a1}, for any training size T, there is an event $A_T$ which occurs
with probability at least $1-\sqrt{\frac{\log(16T)}{T}}$, such that conditioning on $A_T$,
\begin{equation}
\sup_x |\widetilde{F}_{T+1}(x)-F_e(x)|\le \sqrt{\frac{\log(16T)}{T}}.
\end{equation}
\end{lemma}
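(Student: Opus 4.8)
The plan is to observe that this statement is essentially the Dvoretzky--Kiefer--Wolfowitz (DKW) inequality applied to the sequence of scalar non-conformity scores. By Assumption \ref{a1}, the noises $\{\varepsilon_t\}_{t=1}^{T}$ are i.i.d., and since $e_t=\varepsilon_t^T\Sigma^{-1}\varepsilon_t$ is a fixed measurable function of $\varepsilon_t$ alone (the true covariance $\Sigma$ being deterministic), the scores $\{e_t\}_{t=1}^{T}$ are themselves i.i.d.\ with common CDF $F_e$. Consequently $\widetilde{F}_{T+1}$ is exactly the empirical CDF of an i.i.d.\ sample of size $T$ drawn from $F_e$.

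First I would invoke the DKW--Massart inequality: for any $\varepsilon>0$,
\[
\mathbb{P}\Big(\sup_x |\widetilde{F}_{T+1}(x)-F_e(x)|>\varepsilon\Big)\le 2e^{-2T\varepsilon^2}.
\]
Then I would set $\varepsilon=\sqrt{\log(16T)/T}$, which makes the right-hand side equal to $2e^{-2\log(16T)}=2(16T)^{-2}=\tfrac{1}{128T^2}$, and define $A_T$ to be the complementary event $\{\sup_x |\widetilde{F}_{T+1}(x)-F_e(x)|\le \sqrt{\log(16T)/T}\}$, so that conditioning on $A_T$ the claimed bound holds by construction.

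It remains to check that $\mathbb{P}(A_T)\ge 1-\sqrt{\log(16T)/T}$, i.e.\ that $\tfrac{1}{128T^2}\le \sqrt{\log(16T)/T}$ for all $T\ge 1$. This is elementary: since $\log(16T)\ge \log 16>1$, the right-hand side is at least $1/\sqrt{T}$, and $\tfrac{1}{128T^2}\le \tfrac{1}{\sqrt{T}}$ is equivalent to $128T^{3/2}\ge 1$, which holds. This completes the argument.

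I do not expect any real obstacle here; the only mild subtlety is that one must either cite the sharp DKW constant (Massart) or, to keep things self-contained, replace the exponential tail by a union bound over a fine grid of thresholds combined with Hoeffding's inequality, at the cost of a slightly larger constant in the exponent — since the target rate $\sqrt{\log(16T)/T}$ has ample slack (the failure probability decays like $T^{-2}$, far below the stated level), either route works. Note also that the Lipschitz part of Assumption \ref{a1} is not used in this lemma; it will enter later when the uniform CDF bound is converted into a coverage/quantile statement.
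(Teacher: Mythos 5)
Your proposal is correct and takes essentially the same approach as the paper: both proofs invoke the Dvoretzky--Kiefer--Wolfowitz inequality for the i.i.d.\ scores $e_t$ and then choose a threshold to define $A_T$. The only (immaterial) difference is that the paper picks $s_T=\sqrt{W(16T)}/(2\sqrt{T})$ via the Lambert $W$ function so that the DKW failure probability equals the threshold itself, while you plug in $\sqrt{\log(16T)/T}$ directly and verify that the resulting failure probability $1/(128T^2)$ is below the stated level --- a slightly simpler route that in fact yields a marginally stronger probability bound.
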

\begin{remark}   
 The i.i.d. assumption is not a must, and we can easily extend it to the case where $\{e_t\}_{t=1}^{T}$ is stationary and strong mixing. We will show a similar result in Corollary \ref{sta_approx}.
\end{remark}

With the assumptions, we can also show that the empirical distribution of $\hat{e}$ approximates the empirical distribution of $e$ well in the following sense:
\begin{lemma}[Distance between the empirical CDF of $\{\varepsilon_t\}_{t=1}^{T}$ and $\{\hat{\varepsilon}_t\}_{t=1}^{T}$ under i.i.d.]
\label{l2}
Under Assumption \ref{a1}, \ref{a2}, \ref{a3} and \ref{a4}, with a high probability $1-\delta$,
\begin{equation}
\sup_x |\widehat{F}_{T+1}(x)-\widetilde{F}_{T+1}(x)| \le(L_{T+1}+1) C_S+ 2 \sup _x|\widetilde{F}_{T+1}(x)-F_e(x)|,  
\end{equation}
where 
\begin{equation}
\begin{aligned}
C_S &= \left(\frac{\delta_T^2}{\lambda}+ \frac{K_3}{\lambda^2}\left[C\left(\frac{1}{\delta}\right)^{20/9q}\log\left(\frac{1}{\delta}\right)(\log\log p)^2 \frac{p^{3/2-2/q}}{T^{1/2-2/q}} + 22K_3 \max\left\{\frac{p^3}{T\delta},p^{3/2}\delta_T\right\}\right]\right)^{1/2}\\
&=\tilde{O}\left(\max\left\{\frac{p^{3/4-1/q}}{T^{1/4-1/q}},p^{3/4}\delta_T^{1/2}\right\}\right),\label{cs}
\end{aligned}
\end{equation}
and $C$ is a constant that depends only on $K_1, q, L$ and $K_3=K_1+\sqrt{3K_2}$ is a constant.
\end{lemma}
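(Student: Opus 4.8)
The plan is to decouple the problem into (i) a deterministic comparison of the two empirical CDFs that loses only a Lipschitz term plus a Markov term, and (ii) a probabilistic bound showing $\frac{1}{T}\sum_{t=1}^{T}|\hat e_t-e_t|\le C_S^2$ with probability $1-\delta$; the two are then glued together by taking the ``shift'' parameter in (i) to be $C_S$.

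For step (i), set $\eta_t=|\hat e_t-e_t|$ and let $\eta^\star>0$ be free. In the difference $\widehat F_{T+1}(x)-\widetilde F_{T+1}(x)=\frac{1}{T}\sum_t\bigl(1\{\hat e_t\le x\}-1\{e_t\le x\}\bigr)$, a summand equals $+1$ only when $\hat e_t\le x<e_t$, which forces $x<e_t\le x+\eta_t$; splitting the indices according to whether $\eta_t\le\eta^\star$ yields
\[
\widehat F_{T+1}(x)-\widetilde F_{T+1}(x)\le\bigl(\widetilde F_{T+1}(x+\eta^\star)-\widetilde F_{T+1}(x)\bigr)+\frac{1}{T}\#\{t:\eta_t>\eta^\star\}.
\]
By the triangle inequality and the $L_{T+1}$-Lipschitz continuity of $F_e$ (Assumption \ref{a1}), the first term is at most $2\sup_y|\widetilde F_{T+1}(y)-F_e(y)|+L_{T+1}\eta^\star$; by Markov's inequality the second is at most $(\eta^\star)^{-1}\frac{1}{T}\sum_t\eta_t$; and the reverse direction is symmetric (the $-1$ summands force $x-\eta_t\le e_t\le x$). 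Hence
\[
\sup_x|\widehat F_{T+1}(x)-\widetilde F_{T+1}(x)|\le 2\sup_x|\widetilde F_{T+1}(x)-F_e(x)|+L_{T+1}\eta^\star+\frac{1}{\eta^\star}\cdot\frac{1}{T}\sum_{t=1}^{T}\eta_t.
\]
Choosing $\eta^\star=C_S$ with $\frac{1}{T}\sum_t\eta_t\le C_S^2$ makes the last two terms sum to at most $(L_{T+1}+1)C_S$, which is the claim.

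For step (ii), write $\hat\varepsilon_t=\varepsilon_t+\Delta_t$ and, using that $\widehat\Sigma^{-1}$ is symmetric,
\[
\hat e_t-e_t=\varepsilon_t^{T}(\widehat\Sigma^{-1}-\Sigma^{-1})\varepsilon_t+2\varepsilon_t^{T}\widehat\Sigma^{-1}\Delta_t+\Delta_t^{T}\widehat\Sigma^{-1}\Delta_t.
\]
The last term is $\le\lambda^{-1}\|\Delta_t\|^2$, so by Assumptions \ref{a2}--\ref{a3} its average over $t$ is $\le\delta_T^2/\lambda$, which is the first summand of $C_S^2$. The cross term averages to $O(\lambda^{-1}\sqrt p\,\delta_T)$ by Cauchy--Schwarz over $t$ with $\max_t\|\varepsilon_t\|\le\sqrt{K_1 p}$ and $\frac{1}{T}\sum_t\|\Delta_t\|^2\le\delta_T^2$ (Assumptions \ref{a4},\ref{a2}), which is dominated by the $p^{3/2}\delta_T$ term in $C_S^2$. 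For the first term I would use $\widehat\Sigma^{-1}-\Sigma^{-1}=\widehat\Sigma^{-1}(\Sigma-\widehat\Sigma)\Sigma^{-1}$, so its spectral norm is $\le\lambda^{-2}\|\widehat\Sigma-\Sigma\|$ and its average is $\le\lambda^{-2}\|\widehat\Sigma-\Sigma\|\cdot\frac{1}{T}\sum_t\|\varepsilon_t\|^2$; here $\frac{1}{T}\sum_t\|\varepsilon_t\|^2\le K_3 p$ up to lower order by Chebyshev using $\mathrm{Var}[\|\varepsilon\|^2]\le K_2 p$, while $\|\widehat\Sigma-\Sigma\|$ splits as $\bigl\|\widehat\Sigma-\frac{1}{T}\sum_t\varepsilon_t\varepsilon_t^{T}\bigr\|+\bigl\|\frac{1}{T}\sum_t\varepsilon_t\varepsilon_t^{T}-\Sigma\bigr\|$: the first piece is $\le 2\sqrt{K_1 p}\,\delta_T+\delta_T^2$ by Cauchy--Schwarz and, multiplied by $K_3 p$, produces the $p^{3/2}\delta_T$ contribution, and the second piece is controlled by Theorem 1.2 of \citep{vershynin2012close} under Assumption \ref{a4}, giving the $(1/\delta)^{20/9q}\log(1/\delta)(\log\log p)^2 p^{1/2-2/q}/T^{1/2-2/q}$ factor. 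A union bound over the constantly many concentration events then gives $\frac{1}{T}\sum_t|\hat e_t-e_t|\le C_S^2$ with probability $1-\delta$, and retaining the dominant terms produces the displayed $\tilde O$ rate.

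The hard part is the operator-norm control of the sample covariance matrix $\bigl\|\frac{1}{T}\sum_t\varepsilon_t\varepsilon_t^{T}-\Sigma\bigr\|$ in the regime where $p$ grows with $T$ --- this is precisely why Assumption \ref{a4} is tailored to the hypotheses of \citep{vershynin2012close} --- along with the bookkeeping required to push the residual-estimation error $\Delta_t$ through $\widehat\Sigma^{-1}-\Sigma^{-1}$ and to keep track of every polynomial-in-$p$ and polylogarithmic factor so that the accumulated constant is exactly $C_S$.
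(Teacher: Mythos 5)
Your proposal is correct and takes essentially the same route as the paper: your threshold-$\eta^\star$/Markov step is exactly the paper's argument with the exceptional set $S=\{t:|\hat e_t-e_t|\ge C_S\}$ combined with Lipschitz continuity of $F_e$ and the sup-deviation of $\widetilde F_{T+1}$, and your step (ii) mirrors the paper's Lemma on $\sum_t|\hat e_t-e_t|$ via the resolvent identity $\widehat\Sigma^{-1}-\Sigma^{-1}=\widehat\Sigma^{-1}(\Sigma-\widehat\Sigma)\Sigma^{-1}$, Vershynin's Theorem 1.2 for $\|\frac1T\sum_t\varepsilon_t\varepsilon_t^T-\Sigma\|$, and Chebyshev for $\frac1T\sum_t\|\varepsilon_t\|^2$. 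Two minor bookkeeping remarks: you correctly retain the cross term $2\varepsilon_t^T\widehat\Sigma^{-1}\Delta_t$ (which the paper's displayed bound $|\hat\varepsilon^T\widehat\Sigma^{-1}\hat\varepsilon-\varepsilon^T\widehat\Sigma^{-1}\varepsilon|\le\|\widehat\Sigma^{-1}\|\|\Delta\|^2$ silently drops), whereas your bound $\|\widehat\Sigma-\frac1T\sum_t\varepsilon_t\varepsilon_t^T\|\le 2\sqrt{K_1p}\,\delta_T+\delta_T^2$ omits the mean-centering contribution of order $\|\bar{\varepsilon}_*\|^2$, which is handled by the same Chebyshev argument you already invoke and is precisely what produces the $p^3/(T\delta)$ piece of $C_S$.
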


 Our main theorem is the following Theorem \ref{t1}, which establishes the asymptotic conditional coverage as a result of Lemma \ref{l1} and \ref{l2}.
\begin{theorem}[Conditional guarantee under i.i.d. assumption]
\label{t1}
Under Assumption \ref{a1}, \ref{a2}, \ref{a3} and \ref{a4}, with a high probability $1-\delta$, for any training size $T$ and $\alpha\in(0,1)$, we have
\begin{equation}
\begin{aligned}
&|\p(Y_{T+1} \in \widehat{C}_{T+1}^\alpha \mid X_{T+1}=x_{T+1})-(1-\alpha)|\\
&\qquad \leq 12 \sqrt{\frac{\log(16T)}{T}}+4(L_{T+1}+1)(C_S+\delta_T),
\end{aligned}
\end{equation}
where $C_S$ is defined in \eqref{cs}.
\end{theorem}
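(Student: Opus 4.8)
The plan is to condition on the training data (hence on $\hat f$, $\widehat\Sigma$, the fitted quantile map $\widehat{Q}_{T+1}$ and its minimizer $\betahat$) and on $X_{T+1}=x_{T+1}$, leaving only the fresh noise $\varepsilon_{T+1}$ random. Writing $\tau_1=\betahat\in[0,\alpha]$ and $\tau_2=1-\alpha+\betahat\in[1-\alpha,1]$, the event $\{Y_{T+1}\in\widehat{C}_{T+1}^\alpha\}$ equals $\{\widehat{Q}_{T+1}(\tau_1)\le\hat e_{T+1}\le\widehat{Q}_{T+1}(\tau_2)\}$ by \eqref{eq:UQ_set}; since $\hat f$ is fit on data independent of $\{Z_t\}_{t\le T+1}$ and $\{\varepsilon_t\}$ is i.i.d., conditionally $\hat\varepsilon_{T+1}=\Delta_{T+1}+\varepsilon_{T+1}$ with $\Delta_{T+1}$ deterministic, so $\hat e_{T+1}$ is a deterministic function of $\varepsilon_{T+1}$, and $F_e$ being continuous (Assumption~\ref{a1}) gives $F_e(e_{T+1})\sim\mathrm{Unif}[0,1]$; the identity $\tau_2-\tau_1=1-\alpha$ is what ultimately produces the target probability. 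As a first step I would combine Lemmas~\ref{l1} and~\ref{l2} by the triangle inequality to obtain, on a high-probability event, $\sup_x|\widehat{F}_{T+1}(x)-F_e(x)|\le\sup_x|\widehat{F}_{T+1}(x)-\widetilde{F}_{T+1}(x)|+\sup_x|\widetilde{F}_{T+1}(x)-F_e(x)|\le(L_{T+1}+1)C_S+3\sqrt{\log(16T)/T}=:\eta_1$.

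Next I would control the test-point perturbation $|\hat e_{T+1}-e_{T+1}|$. Expanding $\hat e_{T+1}-e_{T+1}=\varepsilon_{T+1}^\top(\widehat\Sigma^{-1}-\Sigma^{-1})\varepsilon_{T+1}+2\Delta_{T+1}^\top\widehat\Sigma^{-1}\varepsilon_{T+1}+\Delta_{T+1}^\top\widehat\Sigma^{-1}\Delta_{T+1}$ and using $\|\widehat\Sigma^{-1}\|,\|\Sigma^{-1}\|\le1/\lambda$ (Assumption~\ref{a3}), $\|\Delta_{T+1}\|\le\delta_T$ (Assumption~\ref{a2}), $\|\varepsilon_{T+1}\|\le\sqrt{K_1p}$ almost surely (Assumption~\ref{a4}), and $\|\widehat\Sigma^{-1}-\Sigma^{-1}\|\le\|\widehat\Sigma-\Sigma\|/\lambda^2$, one gets $|\hat e_{T+1}-e_{T+1}|\le K_1p\,\|\widehat\Sigma-\Sigma\|/\lambda^2+2\sqrt{K_1p}\,\delta_T/\lambda+\delta_T^2/\lambda=:\eta_2$ uniformly over realizations of $\varepsilon_{T+1}$. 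The operator-norm deviation $\|\widehat\Sigma-\Sigma\|$ is controlled on the same event by the heavy-tailed concentration bound already invoked for Lemma~\ref{l2} (Assumption~\ref{a4}, \citep{vershynin2012close}), and a short calculation, paralleling the estimates behind Lemma~\ref{l2} but now with the per-point bound $\|\Delta_{T+1}\|\le\delta_T$, shows $\eta_2=\tilde{O}(C_S^2+\delta_T)$, whence $L_{T+1}\eta_2\lesssim(L_{T+1}+1)(C_S+\delta_T)$ once $C_S\lesssim1$.

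The core step is the sandwich. Conditionally, $\{\widehat{Q}_{T+1}(\tau_1)\le\hat e_{T+1}\le\widehat{Q}_{T+1}(\tau_2)\}$ is contained in $\{\widehat{Q}_{T+1}(\tau_1)-\eta_2\le e_{T+1}\le\widehat{Q}_{T+1}(\tau_2)+\eta_2\}$ and contains $\{\widehat{Q}_{T+1}(\tau_1)+\eta_2\le e_{T+1}\le\widehat{Q}_{T+1}(\tau_2)-\eta_2\}$, so taking probabilities and using Lipschitz continuity of $F_e$ (Assumption~\ref{a1}),
\[
\Bigl|\p\bigl(Y_{T+1}\in\widehat{C}_{T+1}^\alpha\mid X_{T+1}=x_{T+1}\bigr)-\bigl(F_e(\widehat{Q}_{T+1}(\tau_2))-F_e(\widehat{Q}_{T+1}(\tau_1))\bigr)\Bigr|\le2L_{T+1}\eta_2.
\]
For $\tau\in\{\tau_1,\tau_2\}$ the empirical-quantile property gives $\widehat{F}_{T+1}(\widehat{Q}_{T+1}(\tau))\in[\tau,\tau+1/T]$ (the $\hat e_t$ are almost surely distinct), hence $|F_e(\widehat{Q}_{T+1}(\tau))-\tau|\le\eta_1+1/T$ by the first step; since $\tau_2-\tau_1=1-\alpha$ this gives $|F_e(\widehat{Q}_{T+1}(\tau_2))-F_e(\widehat{Q}_{T+1}(\tau_1))-(1-\alpha)|\le2\eta_1+1/T$, and therefore $|\p(Y_{T+1}\in\widehat{C}_{T+1}^\alpha\mid X_{T+1}=x_{T+1})-(1-\alpha)|\le1/T+2\eta_1+2L_{T+1}\eta_2$. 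Substituting the value of $\eta_1$, the bound $L_{T+1}\eta_2\lesssim(L_{T+1}+1)(C_S+\delta_T)$, and $1/T\le\sqrt{\log(16T)/T}$ yields the stated $12\sqrt{\log(16T)/T}+4(L_{T+1}+1)(C_S+\delta_T)$ (the numerical constants are not optimized), and a union bound over the events of Lemmas~\ref{l1}--\ref{l2} and the covariance-concentration event keeps the overall failure probability at $\delta$.

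I expect the main obstacle to be the control of the test-point perturbation $\eta_2$: turning the pointwise residual bound $\|\Delta_{T+1}\|\le\delta_T$ and the operator-norm covariance deviation into a scalar bound on $|\hat e_{T+1}-e_{T+1}|$ and verifying it is dominated by $C_S+\delta_T$, since this is where the dimension $p$, the eigenvalue floor $\lambda$, and the heavy-tail behavior of the sample covariance matrix all enter. The sandwich itself is routine once the uniform CDF control and the conditioning structure are in place, but one must be careful that $\betahat$, $\widehat\Sigma$ and $\Delta_{T+1}$ are frozen before integrating over $\varepsilon_{T+1}$, and that Lemmas~\ref{l1}--\ref{l2} are invoked in their uniform-in-$x$ form so that the data-dependent quantile locations cause no trouble.
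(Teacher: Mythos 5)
Your proposal is essentially correct and rests on the same three pillars as the paper's proof: (i) the uniform Glivenko--Cantelli-type control of $\sup_x|\widehat F_{T+1}(x)-F_e(x)|$ obtained by combining Lemmas~\ref{l1} and~\ref{l2}; (ii) the Lipschitz property of $F_e$ to absorb the test-point perturbation $|\hat e_{T+1}-e_{T+1}|$; and (iii) the fact that $F_e(e_{T+1})\sim\mathrm{Unif}[0,1]$. Where you differ is in the middle mechanics: the paper expands the coverage gap through a chain of indicator inequalities, reducing everything to $\mathbb P(|F_e(e_{T+1})-\gamma|\le\cdot)$ for $\gamma\in\{\betahat,1-\alpha+\betahat\}$, whereas you instead exploit the characterization $\widehat F_{T+1}(\widehat Q_{T+1}(\tau))\in[\tau,\tau+1/T]$ and a sandwich around the data-dependent quantile endpoints. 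Both are valid; the paper's route avoids having to reason about the quantile map directly, which is slightly cleaner given that $\widehat Q_{T+1}$ is only assumed to be the empirical quantile for this theorem.

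Two points deserve attention. First, your treatment of $\eta_2=|\hat e_{T+1}-e_{T+1}|$ is actually more careful than the paper's: the paper simply asserts $L_{T+1}|\hat e_{T+1}-e_{T+1}|\le L_{T+1}\delta_T$ in the chain of inequalities, but from the decomposition $|\hat e_{T+1}-e_{T+1}|\le\|\Delta_{T+1}\|^2/\lambda+\|\varepsilon_{T+1}\|^2\|\widehat\Sigma-\Sigma\|/\lambda^2$ (as in Lemma~\ref{bound2}) the honest bound is of order $C_S^2$, not $\delta_T$, which is what your $\eta_2$ calculation captures. The cost is that your reduction $L_{T+1}\eta_2\lesssim(L_{T+1}+1)(C_S+\delta_T)$ requires $C_S\lesssim1$ and implicit constants, so you do not recover the displayed constants $12$ and $4$ exactly. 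Second, in the final union bound you fold the failure probability of the event $A_T$ from Lemma~\ref{l1} into the overall $\delta$; the paper instead adds $\mathbb P(A_T^C)\le\sqrt{\log(16T)/T}$ directly into the coverage-gap bound, keeping $\delta$ reserved for the covariance-concentration event of Lemma~\ref{l2}. This accounts for part of the discrepancy in your constant tracking, so if you want to land on the stated $12\sqrt{\log(16T)/T}$ you should follow the paper's bookkeeping and not treat $A_T$ as an additional source of failure probability.
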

\begin{remark}
The bound is controlled by the sample size $T$ and the coefficient $\delta_T$. This vanishes when $T\rightarrow\infty$ and $\delta_T\rightarrow 0$, which means that when the sample size is large enough, and the estimator $\hat{f}$ is accurate enough, the conditional coverage will converge to $1-\alpha$.
\end{remark}

\begin{corollary}[Guarantee with true covariance matrix, and under i.i.d.]
If the true covariance matrix $\Sigma$ is known, we can use $\widehat \Sigma = \Sigma$.  Under Assumption \ref{a1}, \ref{a2}, \ref{a3} and \ref{a4}, for any training size $T$ and $\alpha\in(0,1)$, we have
\begin{equation}
\begin{aligned}
&|\p(Y_{T+1} \in \widehat{C}_{T+1}^\alpha \mid X_{T+1}=x_{T+1})-(1-\alpha)| \\
&\qquad \leq 12 \sqrt{\frac{\log(16T)}{T}}+4(L_{T+1}+1)\left(\frac{\delta_T}{\sqrt{\lambda}}+\delta_T\right).
\end{aligned}
\end{equation}    
\end{corollary}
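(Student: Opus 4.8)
The plan is to obtain this corollary as a direct specialization of Theorem \ref{t1} to the case $\widehat\Sigma=\Sigma$, keeping careful track of which terms of the bound survive when the covariance matrix is known exactly rather than estimated. First I would check that the hypotheses of Theorem \ref{t1} (and of Lemmas \ref{l1}--\ref{l2}) remain in force: Assumptions \ref{a1}, \ref{a2} and \ref{a4} concern only the noise $\varepsilon_t$ and the prediction error $\Delta_t=\hat\varepsilon_t-\varepsilon_t$, so they are unaffected; and $\widehat\Sigma=\Sigma$ forces $\|\widehat\Sigma\|=\|\Sigma\|$, so Assumption \ref{a3} holds and $\widehat\Sigma^{-1}=\Sigma^{-1}$ is controlled exactly as $\Sigma^{-1}$ already is in the proofs. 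Thus the entire argument behind Theorem \ref{t1} applies, and it remains only to re-evaluate the quantity $C_S$ of \eqref{cs}.

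The key observation is that $C_S$ in \eqref{cs} is a sum of two pieces: the summand $\delta_T^2/\lambda$ reflects the gap between $\hat\varepsilon_t$ and $\varepsilon_t$ (controlled using $\frac{1}{T}\sum_t\|\Delta_t\|^2\le\delta_T^2$ and the bound on $\|\widehat\Sigma^{-1}\|$), whereas the bracketed term $\frac{K_3}{\lambda^2}[\cdots]$ is precisely where the proof of Lemma \ref{l2} invokes the operator-norm concentration of the sample covariance (Theorem 1.2 of \citep{vershynin2012close}) to control $\|\widehat\Sigma-\Sigma\|$. Concretely, in that proof one writes
\[
\hat e_t - e_t = \bigl(\hat\varepsilon_t^T\widehat\Sigma^{-1}\hat\varepsilon_t - \varepsilon_t^T\widehat\Sigma^{-1}\varepsilon_t\bigr) + \varepsilon_t^T\bigl(\widehat\Sigma^{-1}-\Sigma^{-1}\bigr)\varepsilon_t,
\]
and only the first bracket (the $\Delta_t$-dependent part) persists when $\widehat\Sigma=\Sigma$, the second vanishing identically. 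Hence Lemma \ref{l2} specializes to $C_S=(\delta_T^2/\lambda)^{1/2}=\delta_T/\sqrt\lambda$, and since no covariance concentration is now invoked, the only randomness consumed is the event $A_T$ of Lemma \ref{l1}; so the corollary is to be read as holding conditionally on $A_T$, i.e.\ with probability at least $1-\sqrt{\log(16T)/T}$.

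Finally I would substitute $C_S=\delta_T/\sqrt\lambda$ into the conclusion of Theorem \ref{t1}, whose right-hand side is $12\sqrt{\log(16T)/T}+4(L_{T+1}+1)(C_S+\delta_T)$. The residual additive $\delta_T$ there originates from the single test residual $\Delta_{T+1}$, handled through $\|\Delta_{T+1}\|\le\delta_T$ and the Lipschitz constant $L_{T+1}$ of $F_e$, and is unchanged by knowledge of $\Sigma$; this gives exactly the claimed bound $12\sqrt{\log(16T)/T}+4(L_{T+1}+1)(\delta_T/\sqrt\lambda+\delta_T)$. Since this is essentially a bookkeeping corollary I do not expect a genuine obstacle; the only point deserving attention is to confirm, by inspecting the proof of Lemma \ref{l2}, that the summand $\delta_T^2/\lambda$ of $C_S$ indeed originates entirely from the $\Delta_t$-dependent part of $\hat e_t-e_t$ and survives intact when $\widehat\Sigma=\Sigma$, while every other term of $C_S$ traces back to the covariance-estimation step and therefore drops out.
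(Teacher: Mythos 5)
Your proposal is correct and follows essentially the same route as the paper: there, too, the key step is that with $\widehat\Sigma=\Sigma$ the term $\varepsilon_t^T(\widehat\Sigma^{-1}-\Sigma^{-1})\varepsilon_t$ in the decomposition of $\hat e_t-e_t$ vanishes, so $\sum_{t=1}^T|\hat e_t-e_t|\le T\delta_T^2/\lambda$, $C_S$ collapses to $\delta_T/\sqrt{\lambda}$, and this is substituted into the bound of Theorem \ref{t1}. One minor bookkeeping correction: no conditioning on $A_T$ is needed in the final statement, because in the theorem's proof the failure probability $\mathbb{P}(A_T^c)\le\sqrt{\log(16T)/T}$ is already absorbed additively into the $12\sqrt{\log(16T)/T}$ term, so once $\Sigma$ is known the stated bound carries no residual high-probability qualifier.
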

\begin{remark}
When the true covariance matrix is known, we have a tighter and simpler bound than Theorem \ref{t1}. Although the true covariance is usually unknown in reality, the same bound can be reached if the sample covariance matrix is estimated from another independent training set.
\end{remark}

Then, we would present a corollary that extends our guarantee to the case where $\{\varepsilon_t\}_{t=1}^{T}$ is a stationary and strong mixing sequence. 
\begin{definition}
A sequence of random variables $\{X_n\}$ is said to be {\it strictly stationary} if for every $k \geq 1$, any integers $n_1, \ldots, n_k$, and any integer $h$, the joint distribution of the random variables $(X_{n_1}, \ldots, X_{n_k})$ is the same as the joint distribution of $(X_{n_1+h}, \ldots, X_{n_k+h})$.    
\end{definition}

\begin{definition}
A sequence of random variables $\{X_n\}$ is said to be {\it strongly mixing} (or $\alpha$-mixing) if the mixing coefficients $\alpha_k$ defined by
\[
\alpha_k = \sup_{n \in \mathbb{N}} \sup_{A \in \mathcal{F}_1^n, B \in \mathcal{F}_{n+k}^\infty} |P(A \cap B) - P(A)P(B)|
\]
tend to zero as $k \to \infty$, where $\mathcal{F}_a^b$ denotes the $\sigma$-algebra generated by $\{X_a, \ldots, X_b\}$.
    
\end{definition}

Using a similar technique, we can prove the following result for the case where $\{\varepsilon_t\}_{t=1}^{T}$ is a stationary and strong mixing sequence. Here, we assume that the true covariance matrix $\Sigma$ is known for simplicity, but we will discuss in Remark \ref{r1} how to extend the result to the case where true covariance is unknown.

\begin{corollary}[Guarantee with true covariance matrix, under stationarity and strong mixing]
\label{c1}
Assume $\{\varepsilon_t\}_{t=1}^{T}$ is a stationary and strong mixing sequence with mixing coefficient $0<\sum_{k>0}\alpha_k <M$. Under Assumption \ref{a2}, \ref{a3} and \ref{a4}, for any training size $T$ and $\alpha\in(0,1)$, we have
\begin{equation}
\begin{aligned}
&|\p(Y_{T+1} \in \widehat{C}_{T+1}^\alpha \mid X_{T+1}=x_{T+1})-(1-\alpha)| \\
&\qquad \leq 12 \frac{(\frac{M}{2})^{1/3}(\log T)^{2/3}}{T^{1/3}}+4(L_{T+1}+1)\left(\frac{\delta_T}{\sqrt{\lambda}}+\delta_T\right).
\end{aligned}
\end{equation}
\end{corollary}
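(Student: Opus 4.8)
The plan is to re-run the proof of Theorem~\ref{t1} almost verbatim, the only substantive change being that the i.i.d.\ concentration of the empirical CDF (Lemma~\ref{l1}) is replaced by its stationary strong-mixing analogue. The key observation is that taking $\widehat\Sigma=\Sigma$ removes the sample-covariance estimation error entirely, so the quantity $C_S$ of Lemma~\ref{l2} collapses to its leading term $(\delta_T^2/\lambda)^{1/2}=\delta_T/\sqrt\lambda$, exactly as in the known-covariance i.i.d.\ corollary. With that specialization, the telescoping estimate produced inside the proof of Theorem~\ref{t1} already reads
\[
\big|\p(Y_{T+1}\in\widehat C_{T+1}^\alpha\mid X_{T+1}=x_{T+1})-(1-\alpha)\big|\ \le\ 12\sup_x|\widetilde F_{T+1}(x)-F_e(x)|\ +\ 4(L_{T+1}+1)\Big(\tfrac{\delta_T}{\sqrt\lambda}+\delta_T\Big),
\]
so the entire task reduces to controlling $\sup_x|\widetilde F_{T+1}(x)-F_e(x)|$ when $\{\varepsilon_t\}$ is stationary and strongly mixing rather than i.i.d.; I would keep the Lipschitz-continuity part of Assumption~\ref{a1} (it underlies the $L_{T+1}$ in the bound), dropping only the i.i.d.\ part.

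The crux is therefore the stationary strong-mixing counterpart of Lemma~\ref{l1} (``Corollary~\ref{sta_approx}'' in the text). First note that $e_t=\varepsilon_t^T\Sigma^{-1}\varepsilon_t=g(\varepsilon_t)$ for a fixed measurable $g$, so $\{e_t\}$ inherits strict stationarity and strong mixing from $\{\varepsilon_t\}$ with $\sum_{k>0}\alpha_k(\{e_t\})\le\sum_{k>0}\alpha_k<M$; in particular every $e_t$ has the common CDF $F_e$. To bound $\sup_x|\widetilde F_{T+1}(x)-F_e(x)|$, fix an integer $N$ and grid points $q_1<\dots<q_N$ with $F_e(q_i)=i/N$, so that replacing the supremum by $\max_{i\le N}|\widetilde F_{T+1}(q_i)-F_e(q_i)|$ costs at most $1/N$ in discretization error (independently of $L_{T+1}$). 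For each $i$, stationarity and the elementary covariance bound for indicators of events in the relevant $\sigma$-algebras, $|\mathrm{Cov}(\mathbf 1\{e_s\le q_i\},\mathbf 1\{e_t\le q_i\})|\le\alpha_{|t-s|}$, give
\[
\mathrm{Var}\big(\widetilde F_{T+1}(q_i)\big)\ \le\ \frac{1}{T^2}\Big(\tfrac14 T + 2\!\!\sum_{1\le s<t\le T}\!\!\alpha_{t-s}\Big)\ \le\ \frac{C(1+M)}{T}.
\]
Chebyshev's inequality plus a union bound over the $N$ points then control $\max_i|\widetilde F_{T+1}(q_i)-F_e(q_i)|$, and balancing the resulting fluctuation ($\asymp\sqrt{N/T}$) against the discretization term ($1/N$) forces $N\asymp T^{1/3}$, producing the $T^{-1/3}$ rate and the constant $(M/2)^{1/3}$ after sharpening the variance bound; the $(\log T)^{2/3}$ factor is the price of converting the second-moment estimate into a high-probability statement (equivalently, choosing $N\asymp(T/(\log T)^2)^{1/3}$ and a failure probability that is a modest vanishing power of $\log T$). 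This yields the claimed $\sup_x|\widetilde F_{T+1}(x)-F_e(x)|\le (M/2)^{1/3}(\log T)^{2/3}/T^{1/3}$.

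Everything downstream of that bound is the proof of Theorem~\ref{t1} unchanged: decompose the conditional coverage as $\p(\hat e_{T+1}\le\widehat Q_{T+1}(1-\alpha+\betahat)\mid X_{T+1})-\p(\hat e_{T+1}\le\widehat Q_{T+1}(\betahat)\mid X_{T+1})$ and telescope through $\widehat F_{T+1}\to\widetilde F_{T+1}\to F_e$, passing from $\hat e_{T+1}$ to $e_{T+1}$ via $|\sqrt{\hat e_{T+1}}-\sqrt{e_{T+1}}|\le\|\Sigma^{-1/2}\Delta_{T+1}\|\le\delta_T/\sqrt\lambda$ and Lipschitzness of $F_e$, passing from the quantiles of $\widehat F_{T+1}$ to those of $\widetilde F_{T+1}$ via the $\widehat\Sigma=\Sigma$ specialization of Lemma~\ref{l2}, passing from $\widetilde F_{T+1}$ to $F_e$ via the display above, and finally invoking $F_e(e_{T+1})\sim\mathrm{Unif}[0,1]$. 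Collecting multiplicities gives the $12$ in front of the mixing-CDF term and the $4(L_{T+1}+1)$ in front of $\delta_T/\sqrt\lambda+\delta_T$, with the failure probability of the mixing lemma absorbed into the constant.

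The main obstacle is the mixing empirical-CDF bound of the second paragraph: getting the variance constant to collapse to the advertised $(M/2)^{1/3}$ and tuning the grid size (hence also the failure level, which is where $(\log T)^{2/3}$ comes from) so as to land exactly on the $T^{-1/3}(\log T)^{2/3}$ rate, while using only summability $\sum_k\alpha_k<M$ — not a decay rate — of the mixing coefficients of a merely stationary, non-exchangeable sequence. By contrast, the part of the argument after that bound is identical to the i.i.d.\ case with $\widehat\Sigma$ replaced by $\Sigma$, which is precisely why the corollary needs only Assumptions~\ref{a2}--\ref{a4} and the mixing hypothesis in place of the i.i.d.\ part of Assumption~\ref{a1}.
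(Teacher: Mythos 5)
Your overall skeleton matches the paper: keep Lipschitz continuity of $F_e$, note that $e_t=g(\varepsilon_t)$ inherits strict stationarity and strong mixing with $\sum_k\alpha_k(\{e_t\})\le\sum_k\alpha_k<M$, observe that $\widehat\Sigma=\Sigma$ makes the covariance-estimation error vanish so that the Lemma~\ref{l2}-type bound collapses to $(L_{T+1}+1)\delta_T/\sqrt{\lambda}$ (this is exactly Corollary~\ref{b5} in the appendix), and then rerun the telescoping argument of Theorem~\ref{t1} with the i.i.d.\ CDF lemma replaced by a mixing analogue. All of that is how the paper proceeds.

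The gap is in your substitute for the key lemma (the paper's Lemma~\ref{sta_approx}). With only summability $\sum_k\alpha_k<M$, your per-point variance bound $\mathrm{Var}(\widetilde F_{T+1}(q_i))\lesssim (1+M)/T$ plus Chebyshev and a union bound over $N$ grid points gives failure probability of order $N(1+M)/(Ts^2)$ for accuracy $s+1/N$. If you balance accuracy and failure probability symmetrically you land at the rate $T^{-1/4}$, not $T^{-1/3}(\log T)^{2/3}$; if instead, as you propose, you take $N\asymp (T/(\log T)^2)^{1/3}$ so the \emph{conditional} error is $\asymp (M/2)^{1/3}(\log T)^{2/3}T^{-1/3}$, the failure probability is only $\asymp (\log T)^{-2}$. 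That is fatal for the corollary as stated, because in the telescoping step the bad-event probability $\p(A_T^C)$ enters the coverage gap \emph{additively} (it is folded into the factor $12$ in front of the first term), so it must itself be of order $T^{-1/3}(\log T)^{2/3}$; a $(\log T)^{-2}$ term would dominate the final bound. The paper avoids this by invoking Rio's Proposition 7.1, a genuine empirical-process result for strongly mixing sequences, which controls the second moment of the \emph{supremum}, $\mathbb{E}\bigl[\sup_x|\sqrt{T}(\widetilde F_{T+1}(x)-F_e(x))|^2\bigr]\le (1+4\sum_k\alpha_k)\bigl(3+\tfrac{\log T}{2\log 2}\bigr)^2$, so a single Markov application yields error and failure probability simultaneously at the rate $\bigl(M(\log T)^2/(2T)\bigr)^{1/3}$. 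Pointwise Chebyshev plus a union bound does not recover this (the grid cardinality multiplies the failure probability, which Rio's sup-moment bound avoids), so your elementary route cannot reach the stated constant and rate; you would need Rio's inequality or an equivalent maximal/moment inequality for mixing empirical processes at this step.
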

\begin{remark}
\label{r1}
The first term in the convergence rate is of order $\tilde O(T^{-1/3})$, which is slighter bigger than the order $\tilde O(T^{-1/2})$ in Theorem \ref{t1} under i.i.d. case. The second term is the same. The essence of generalizing the outcome to scenarios where the true covariance matrix $\Sigma$ remains unknown lies in the convergence properties of the sample covariance matrix. There are works directed towards these convergence properties within the context of stationary time series. For instance, \cite{chen2013covariance} presented an asymptotic convergence result for a threshold sample covariance estimator. Utilizing methodologies akin to those employed in Theorem \ref{t1}, a similar result can be substantiated, and there is also the possibility to apply a variety of estimators. However, the focus of this work is not on the convergence of the covariance matrix estimator; hence, further exploration in this direction is omitted. Moreover, as previously indicated, independent training data can be leveraged to estimate the covariance matrix and achieve the same bound in Corollary \ref{c1}.
\end{remark}

\begin{table*}[!t]
    \centering
    \caption{Simulation results by both methods. Target coverage is 90\%. Standard deviation is computed over ten independent trials in which training and test data are regenerated.}
    \label{tab:simulation}
    \begin{minipage}{\textwidth}
        \subcaption{Independent ${\rm AR}(w)$}
        \resizebox{\textwidth}{!}{
        \begin{tabular}{P{1.45cm}|P{1.7cm}P{1.7cm}|P{1.7cm}P{1.7cm}|P{1.7cm}P{1.7cm}|P{1.7cm}P{1.7cm}|P{1.7cm}P{1.7cm}|P{1.7cm}P{1.7cm}}
        \toprule
         $p$ & \multicolumn{2}{c}{2} & \multicolumn{2}{c}{4} & \multicolumn{2}{c}{8} & \multicolumn{2}{c}{10} & \multicolumn{2}{c}{16} & \multicolumn{2}{c}{20} \\
        Method & \texttt{MultiDim} \texttt{SPCI} & \SPCI{} & \texttt{MultiDim} \texttt{SPCI} & \SPCI{} & \texttt{MultiDim} \texttt{SPCI} & \SPCI{} & \texttt{MultiDim} \texttt{SPCI} & \SPCI{} & \texttt{MultiDim} \texttt{SPCI} & \SPCI{} & \texttt{MultiDim} \texttt{SPCI} & \SPCI{} \\
        \midrule
        Coverage & 90.0\% (0.26) & 90.0\% (0.29) & 90.0\% (0.25) & 89.9\% (0.14) & 90.0\% (0.31) & 89.9\% (0.30) & 89.8\% (0.25) & 89.8\% (0.27) & 89.9\% (0.24) & 89.9\% (0.23) & 90.0\% (0.26) & 89.8\% (0.30) \\
        Size & 1.45e+1 (9.34e-2) & 1.52e+1 (8.73e-2) & 3.00e+2 (2.62e+0) & 3.94e+2 (3.38e+0) & 1.30e+5 (1.43e+3) & 3.68e+5 (6.44e+3) & 2.65e+6 (4.79e+4) & 1.22e+7 (1.61e+5) & 2.23e+10 (5.61e+8) & 5.84e+11 (1.39e+10) & 9.15e+12 (2.97e+11) & 8.67e+14 (2.90e+13) \\
        \bottomrule
        \end{tabular}}
    \end{minipage}
    
    \vspace{0.05in}
    \begin{minipage}{\textwidth}
        \subcaption{${\rm VAR}(w)$}
        \resizebox{\textwidth}{!}{
        \begin{tabular}{P{1.45cm}|P{1.7cm}P{1.7cm}|P{1.7cm}P{1.7cm}|P{1.7cm}P{1.7cm}|P{1.7cm}P{1.7cm}|P{1.7cm}P{1.7cm}|P{1.7cm}P{1.7cm}}
        \toprule
         $p$ & \multicolumn{2}{c}{2} & \multicolumn{2}{c}{4} & \multicolumn{2}{c}{8} & \multicolumn{2}{c}{10} & \multicolumn{2}{c}{16} & \multicolumn{2}{c}{20} \\
        Method & \texttt{MultiDim} \texttt{SPCI} & \SPCI{} & \texttt{MultiDim} \texttt{SPCI} & \SPCI{} & \texttt{MultiDim} \texttt{SPCI} & \SPCI{} & \texttt{MultiDim} \texttt{SPCI} & \SPCI{} & \texttt{MultiDim} \texttt{SPCI} & \SPCI{} & \texttt{MultiDim} \texttt{SPCI} & \SPCI{} \\
        \midrule
        Coverage & 90.0\% (0.26) & 92.7\% (0.25) & 90.2\% (0.21) & 91.5\% (0.22) & 90.0\% (0.23) & 91.6\% (0.18) & 89.9\% (0.23) & 90.7\% (0.31) & 89.9\% (0.20) & 91.0\% (0.19) & 90.0\% (0.25) & 90.9\% (0.19) \\
        Size & 2.73e+0 (1.36e-2) & 6.46e+0 (5.84e-2) & 3.89e+1 (2.25e-1) & 4.94e+2 (7.49e+0) & 7.16e+4 (7.25e+2) & 9.27e+6 (1.46e+5) & 3.63e+7 (4.79e+5) & 3.24e+9 (6.09e+7) & 8.55e+12 (1.45e+11) & 1.91e+17 (5.38e+15) & 1.14e+16 (2.11e+14) & 7.41e+22 (1.68e+21) \\
        \bottomrule
        \end{tabular}}
        \end{minipage}
\end{table*}

\section{Experiments}

We test Algorithm \ref{algo_SPCI_multi_dim} on both simulated and real-time series to show that \MultiSPCI{} reaches valid coverage with smaller prediction regions. In all our experiments, the value of $\rho$ used in \eqref{eq:cov_est_rank_k} is set to be 0.001. For simplicity, we only consider the global covariance matrix in \eqref{eq:global_cov} rather than its local variant \eqref{eq:local_cov}, which would bring further improvements. Code is available at \url{https://github.com/hamrel-cxu/MultiDimSPCI}.

\begin{figure*}[!t]
\vspace{0.05in}
\begin{minipage}{\textwidth}
    \includegraphics[width=\linewidth]{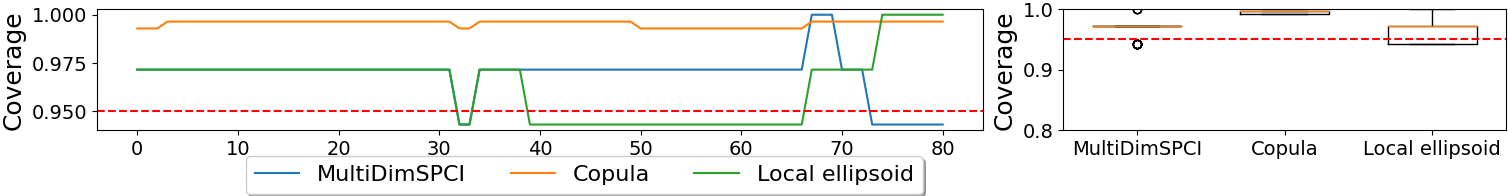}
    \includegraphics[width=\linewidth]{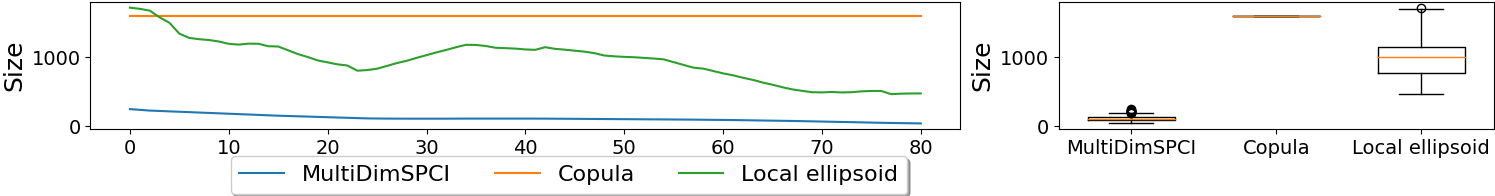}
    \subcaption{Wind data}
\end{minipage}
\begin{minipage}{\textwidth}
    \includegraphics[width=\linewidth]{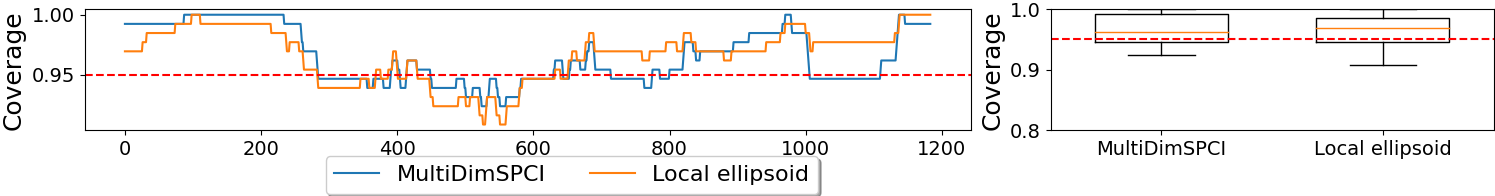}
    \includegraphics[width=\linewidth]{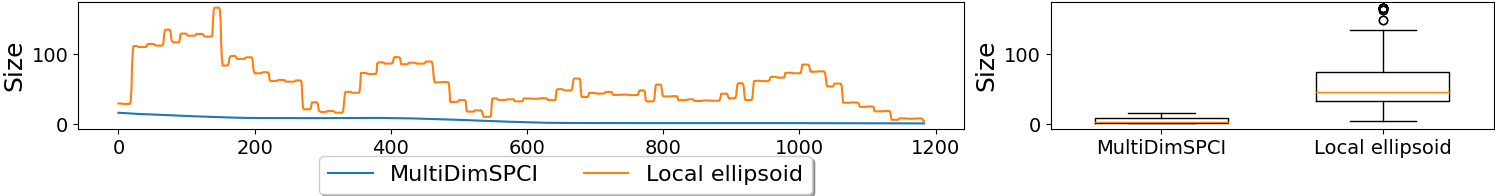}
    \subcaption{Solar data}
\end{minipage}
\begin{minipage}{\textwidth}
    \includegraphics[width=\linewidth]{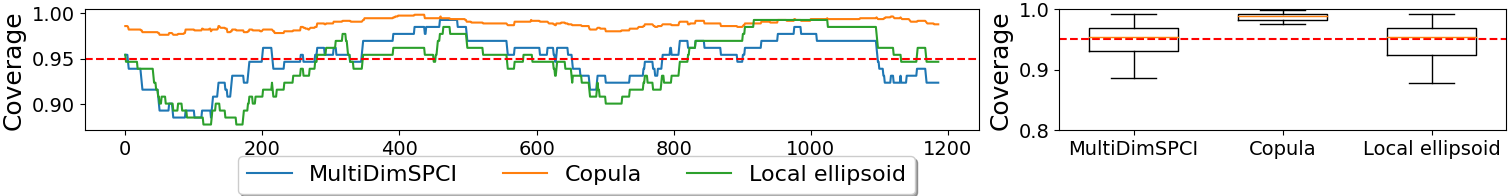}
    \includegraphics[width=\linewidth]{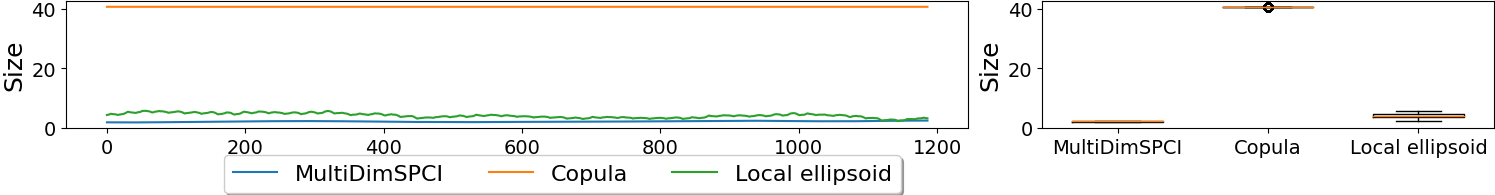}
    \subcaption{Traffic data}
\end{minipage}
\vspace{-0.075in}
\caption{Real-data comparison of rolling coverage (target coverage is 95\%) and size of prediction sets at $p=8$. In each subplot of (a)-(c), the top row plots rolling coverage over prediction time indices (red dashed line is the target coverage) and as boxplots, and the bottom row shows results for rolling sizes. We only visualize the comparison of \MultiSPCI{} with selected CP baselines, which have comparable average size of prediction regions in Table \ref{tab:real_data}.}
\label{fig:real_data_rolling}
\vspace{-0.05in}
\end{figure*}

\begin{table*}[!t]
\caption{
Real-data comparison of test coverage and average prediction set size by different methods. The target coverage is 0.95, and at each $p$, the smallest size of prediction sets is in \textbf{bold}. Our \MultiSPCI{} yields the narrowest confidence sets without sacrificing coverage for two reasons. First, it explicitly captures dependency among coordinates of $Y_t$ by forming ellipsoidal prediction sets. Second, it captures temporal dependency among non-conformity scores upon adaptive re-estimation of score quantiles.}\label{tab:real_data}
\begin{minipage}{\textwidth}
\subcaption{Wind data}
\centering
\resizebox{\textwidth}{!}{\begin{tabular}{c|c|c|c|c|c|c}
\hline
Method & $p=2$ coverage & $p=2$ size & $p=4$ coverage & $p=4$ size & $p=8$ coverage & $p=8$ size \\
\hline
{\MultiSPCI{}} & 0.97 & \textbf{1.60} & 0.96 & \textbf{7.02}& 0.96 & \textbf{72.10} \\
CopulaCPTS \citep{sun2024copula} & 0.98 &	2.55 &	0.97&	10.23&	0.97	&252.67\\
Local ellipsoid \citep{messoudi2022ellipsoidal} & 0.96 & 3.51 & 0.97 & 13.07 & 0.98 & 1.09e+3 \\
Copula \citep{messoudi2021copula} & 0.98 & 2.81 & 0.98 & 10.32 & 0.97 & 1.60e+3 \\
TFT \citep{lim2021temporal} & 0.94 & 10.61 & 0.75 & 159.39 & 0.94 & 2.91e+4 \\
DeepAR \citep{salinas2020deepar} & 0.96 & 7.07 & 0.76 & 67.97 & 0.96 & 1.79e+5 \\

\hline
\end{tabular}}
\end{minipage}
\begin{minipage}{\textwidth}
\subcaption{Solar data}
\centering
\resizebox{\textwidth}{!}{\begin{tabular}{c|c|c|c|c|c|c}
\hline
Method & $p=2$ coverage & $p=2$ size & $p=4$ coverage & $p=4$ size & $p=8$ coverage & $p=8$ size \\
\hline
{\MultiSPCI{}} & 0.96 & 1.68 & 0.96 & \textbf{2.89} & 0.97 & \textbf{4.97} \\
CopulaCPTS \citep{sun2024copula} & 0.99	&4.36	&0.99	&37.56	&0.99	&3.28e+3\\
Local ellipsoid \citep{messoudi2022ellipsoidal}& 0.97 & \textbf{1.32} & 0.97 & 3.20 & 0.97 & 43.07 \\
Copula \citep{messoudi2021copula}& 0.99 & 4.11 & 0.99 & 27.73 & 0.99 & 1.42e+3 \\
TFT \citep{lim2021temporal}& 0.99 & 13.68 & 0.99 & 71.72 & 0.93 & 1.19e+3 \\
DeepAR \citep{salinas2020deepar}& 0.97 & 10.76 & 0.98 & 157.09 & 0.74 & 31.82 \\

\hline
\end{tabular}}
\end{minipage}
\begin{minipage}{\textwidth}
\subcaption{Traffic data}
\centering
\resizebox{\textwidth}{!}{\begin{tabular}{c|c|c|c|c|c|c}
\hline
Method & $p=2$ coverage & $p=2$ size & $p=4$ coverage & $p=4$ size & $p=8$ coverage & $p=8$ size \\
\hline
{\MultiSPCI{}} & 0.96 & \textbf{1.31} & 0.96 & \textbf{1.93} & 0.96 & \textbf{2.98} \\
CopulaCPTS \citep{sun2024copula} & 0.95&	1.70	&0.94	&3.15&	0.95	&14.10\\
Local ellipsoid \citep{messoudi2022ellipsoidal}& 0.95 & 1.36 & 0.94 & 2.08 & 0.95 & 4.13 \\
Copula \citep{messoudi2021copula}& 0.95 & 1.44 & 0.95 & 3.90 & 0.94 & 40.60 \\
TFT \citep{lim2021temporal}& 0.89 & 9.07 & 0.93 & 87.92 & 0.88 & 9.69e+2 \\
DeepAR \citep{salinas2020deepar}& 0.87 & 13.53 & 0.88 & 57.20 & 0.82 & 9.89e+3 \\

\hline
\end{tabular}}
\end{minipage}
\vspace{-0.1in}
\end{table*}

\subsection{Simulation}

We simulate two types of stationary time series. The first case considers independent ${\rm AR}(w)$ sequences, and the second case considers ${\rm VAR}(w)$ sequences. We want to show that compared to \SPCI{} applied independently to each dimension (equivalent to using independent copula \citep[See Sec. 3.3.1]{messoudi2021copula}), \MultiSPCI{} yields significantly smaller prediction regions without sacrificing coverage.

\vspace{0.05in}
\noindent \textit{Data generation.} Denote $Y_t=[Y_{i1},\ldots,Y_{ip}]^T\in \R^p$ for $p \geq 2$. We generate $Y_t$  as
\begin{equation}\label{eq:simulation_dgp}
    Y_t= \sum_{l=1}^w \boldsymbol{\alpha}_l Y_{i-l} + \boldsymbol{\varepsilon}_t, ~~\boldsymbol{\varepsilon}_t \sim N(0,\Sigma).
\end{equation}
In \eqref{eq:simulation_dgp}, $\boldsymbol{\alpha}_l \in \R^{p\times p}$ contains the set of coefficients, where we further construct them so that the sequences $\{Y_t\}$ are stationary.
In the first case of independent ${\rm AR}(w)$ sequences, we have $\Sigma=I_p$. In the second case of ${\rm VAR}(w)$ sequences, we design $\Sigma=BB^T$ to be a positive definite covariance matrix, where $B_{ij} \overset{i.i.d.}{\sim} \mathrm{Unif}[-1,1]$.

\vspace{0.05in}
\noindent \textit{Setup.} In both cases of AR and VAR time series following \eqref{eq:simulation_dgp}, we let $w=5$ and vary $p\in\{2,4,8,10,16,20\}.$ The initial 80K samples $\{Y_t\}$ are training data; the remaining 20K samples are test data. Because \SPCI{} assumes independence across different univariate sequence, we let $\tilde{\alpha}=1-(1-\alpha)^{1/p}$ and apply \SPCI{} on individual sequences with the corrected $\tilde{\alpha}$. The multivariate linear regression method is used as the point predictor.

\vspace{0.05in}
\noindent \textit{Results.} Table \ref{tab:simulation} examines the empirical coverage and average size of prediction regions in $\R^p$ by both methods on the two cases of data generation. Both methods can maintain valid coverage around the target 90\% in two cases. Nevertheless, it is clear that as dimension $p$ increases, the average size of prediction regions by the proposed \MultiSPCI{} is significantly smaller (for several magnitudes) than that by \SPCI{} applied to individual sequences.
In Figure \ref{fig:non_critical_region_simul}, we further visualize the non-critical regions in both cases to demonstrate why \MultiSPCI{} provides smaller prediction regions.

\subsection{Real-data}\label{sec:real_data}

We now compare \MultiSPCI{} with existing methods designed for multivariate uncertainty quantification. The three CP baselines are CopulaCPTS \citep{sun2024copula}, Local ellipsoid \citep{messoudi2022ellipsoidal}, and Copula \citep{messoudi2021copula}. The two probabilistic forecasting baselines are temporal fusion transformers (TFT) \citep{lim2021temporal} and DeepAR \citep{salinas2020deepar}.

We consider three real multivariate time-series datasets. The first \textit{wind} dataset considers wind speed in meters per second at different wind farms \citep{zhu2021multi}, with 764 observations in total. The second \textit{solar} dataset considers solar radiation in Diffused Horizontal Irradiance (DHI) units at different solar sensors \citep{zhang2021solar}, with 8755 observations in total. The third \textit{traffic} dataset considers traffic flow collected at different traffic sensors \citep{xu2021ECAD}, with 8778 observations in total. On each dataset, we randomly select $p\in \{2,4,8\}$ locations (same for all methods) and examine the test coverage and average size on the $p$-dimensional time series. The first 85\% data are used for training, and the remaining 15\% are used for testing.

\begin{table}[!t]
    \centering
    \caption{Comparison on wind data when dimension $d=25$. The setup is identical to that in Table \ref{tab:real_data}.}
    \label{tab:high_d_tab}
    \resizebox{0.7\textwidth}{!}{
    \begin{tabular}{c|c|c|c|c}
    \hline
    & \MultiSPCI{} & CopulaCPTS & Local ellipsoid & Copula \\ 
    \hline
    Coverage & 0.95 & 0.98 & 0.98 & 0.94\\
    Size & 3.55e+7&	1.13e+14	&4.93e+16&	1.20e+13\\
    \hline
    \end{tabular}}
\end{table}
\begin{figure}[!b]
    \centering
    \begin{minipage}{0.32\textwidth}
        \includegraphics[width=\linewidth]{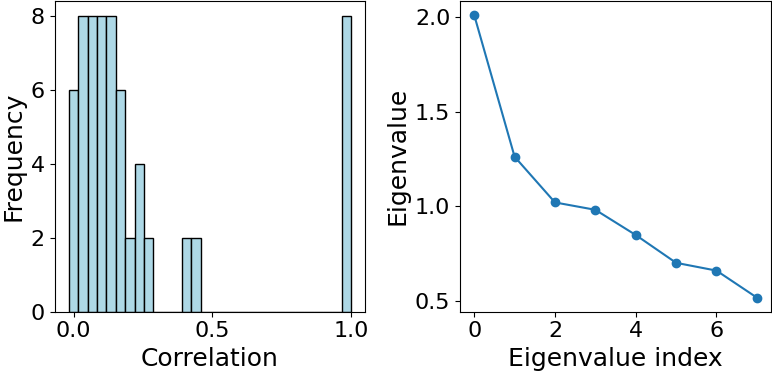}
        \subcaption{Wind data}
    \end{minipage}
    \begin{minipage}{0.32\textwidth}
        \includegraphics[width=\linewidth]{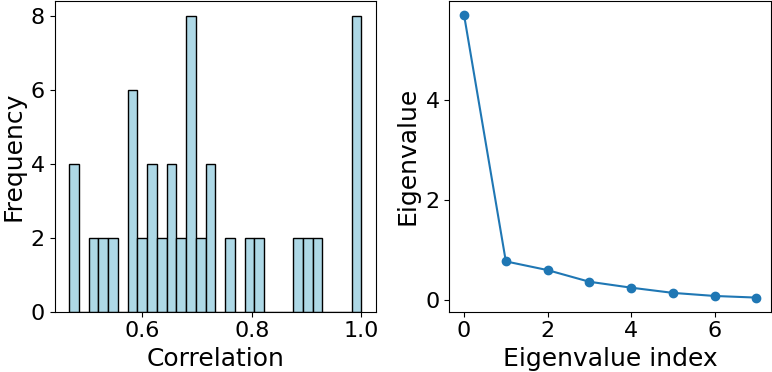}
        \subcaption{Solar data}
    \end{minipage}
    \begin{minipage}{0.32\textwidth}
        \includegraphics[width=\linewidth]{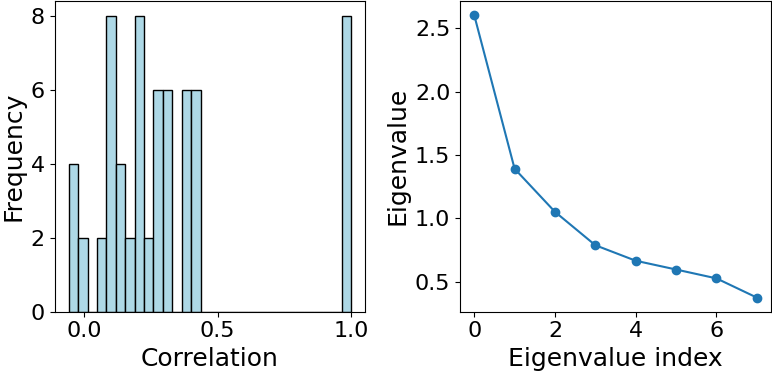}
        \subcaption{Traffic data}
    \end{minipage}
    \caption{Distribution of estimated correlation and the eigenvalues of corresponding correlation matrices on real-time series. We visualize the results using $p$-dimensional prediction residuals with $p=8$.}
    \label{fig:cov_residuals}
\end{figure}

Table \ref{tab:real_data} shows that the test coverage of \MultiSPCI{} and two CP methods is always valid by yielding coverage greater than or equal to the target 95\%. In contrast, the two probabilistic forecasting baselines may incur severe under-coverage, where TFT coverage is generally better than DeepAR's. Regarding the average size of prediction regions, we also note that the average size by \MultiSPCI{} is consistently smaller than those by baselines (except against Local ellipsoid on solar data when $p=2$), demonstrating that our proposed method quantifies prediction uncertainty more precisely. We believe these benefits come from using ellipsoidal rather than hyper-rectangular prediction sets and the adaptive re-estimation of quantiles of non-conformity scores.
Additionally, Table \ref{tab:high_d_tab} shows comparisons on higher-dimensional wind data, on which the benefits of \MultiSPCI{} persist.
Figure \ref{fig:real_data_rolling} further analyzes the rolling performance of different methods. We see that the rolling coverage of \MultiSPCI{} and the CP baselines all center around the target 95\% coverage value with reasonably small variations. Meanwhile, \MultiSPCI{} has a smaller rolling width than the CP baselines, indicating that our proposed method almost always yields smaller prediction regions.
Lastly, as seen in Figure \ref{fig:cov_residuals}, the estimated correlation between residuals from two different locations can be as high as 0.92 (see solar data). Thus, it is indeed necessary to consider such correlation when constructing prediction regions to quantify prediction uncertainty effectively.

\section{Discussions}

In this work, we proposed \MultiSPCI{}, a general sequential conformal prediction method for multivariate time series. Specifically, \MultiSPCI{} extends sequential univariate CP method to construct ellipsoids during test time. Extensions using local ellipsoids that are adaptive in shape are also discussed. Theoretically, we bound the coverage gap in finite samples without assuming data exchangeability. Empirically, on both simulation and real time series, we show \MultiSPCI{} yields significantly smaller prediction sets than baselines and maintains coverage. 

In the future, we will explore constructing prediction regions beyond ellipsoids. One such possibility is using convex hulls, which are irregular in shape but could lead to the tightest fit as prediction sets. We discuss such possibility and preliminary results in Appendix \ref{AppE}). We will also further study the theoretical properties of CP in high dimensions, leveraging existing results on multivariate quantile estimation.

\section*{Acknowledgement}

The authors would like to thank Jonghyeok Lee, and Bo Dai for their helpful discussions and comments. This work is partially supported by an NSF CAREER CCF-1650913, NSF DMS-2134037, CMMI-2015787, CMMI-2112533, DMS-1938106, DMS-1830210, and the Coca-Cola Foundation.

\bibliography{references}

\begin{thebibliography}{48}
\providecommand{\natexlab}[1]{#1}
\providecommand{\url}[1]{\texttt{#1}}
\expandafter\ifx\csname urlstyle\endcsname\relax
  \providecommand{\doi}[1]{doi: #1}\else
  \providecommand{\doi}{doi: \begingroup \urlstyle{rm}\Url}\fi

\bibitem[Angelopoulos et~al.(2024)Angelopoulos, Candes, and Tibshirani]{angelopoulos2024conformal}
Angelopoulos, A., Candes, E., and Tibshirani, R.~J.
\newblock Conformal pid control for time series prediction.
\newblock \emph{Advances in Neural Information Processing Systems}, 36, 2024.

\bibitem[Angelopoulos \& Bates(2023)Angelopoulos and Bates]{angelopoulos2021gentle}
Angelopoulos, A.~N. and Bates, S.
\newblock Conformal prediction: A gentle introduction.
\newblock \emph{Foundations and Trends® in Machine Learning}, 16\penalty0 (4):\penalty0 494--591, 2023.
\newblock ISSN 1935-8237.
\newblock \doi{10.1561/2200000101}.
\newblock URL \url{http://dx.doi.org/10.1561/2200000101}.

\bibitem[Angelopoulos et~al.(2021)Angelopoulos, Bates, Jordan, and Malik]{MJ_classification}
Angelopoulos, A.~N., Bates, S., Jordan, M., and Malik, J.
\newblock Uncertainty sets for image classifiers using conformal prediction.
\newblock In \emph{International Conference on Learning Representations}, 2021.
\newblock URL \url{https://openreview.net/forum?id=eNdiU_DbM9}.

\bibitem[Barber et~al.(2021)Barber, Cand{\`e}s, Ramdas, and Tibshirani]{jackknife+}
Barber, R.~F., Cand{\`e}s, E.~J., Ramdas, A., and Tibshirani, R.~J.
\newblock {Predictive inference with the jackknife+}.
\newblock \emph{The Annals of Statistics}, 49\penalty0 (1):\penalty0 486 -- 507, 2021.
\newblock \doi{10.1214/20-AOS1965}.
\newblock URL \url{https://doi.org/10.1214/20-AOS1965}.

\bibitem[Barber et~al.(2023)Barber, Candes, Ramdas, and Tibshirani]{barber2023conformal}
Barber, R.~F., Candes, E.~J., Ramdas, A., and Tibshirani, R.~J.
\newblock Conformal prediction beyond exchangeability.
\newblock \emph{The Annals of Statistics}, 51\penalty0 (2):\penalty0 816--845, 2023.

\bibitem[Bickel et~al.(2009)Bickel, Ritov, and Tsybakov]{bickel2009simultaneous}
Bickel, P.~J., Ritov, Y., and Tsybakov, A.~B.
\newblock Simultaneous analysis of lasso and dantzig selector.
\newblock \emph{The Annals of Statistics}, 37\penalty0 (4):\penalty0 1705--1732, 2009.

\bibitem[Cai et~al.(2016)Cai, Ren, and Zhou]{cai2016estimating}
Cai, T.~T., Ren, Z., and Zhou, H.~H.
\newblock Estimating structured high-dimensional covariance and precision matrices: Optimal rates and adaptive estimation.
\newblock \emph{Electronic Journal of Statistics}, 10:\penalty0 1--59, 2016.

\bibitem[Chen \& White(1999)Chen and White]{chen1999improved}
Chen, X. and White, H.
\newblock Improved rates and asymptotic normality for nonparametric neural network estimators.
\newblock \emph{IEEE Transactions on Information Theory}, 45\penalty0 (2):\penalty0 682--691, 1999.

\bibitem[Chen et~al.(2013)Chen, Xu, and Wu]{chen2013covariance}
Chen, X., Xu, M., and Wu, W.~B.
\newblock Covariance and precision matrix estimation for high-dimensional time series.
\newblock \emph{The Annals of Statistics}, 41\penalty0 (6):\penalty0 2994--3021, 2013.

\bibitem[Diquigiovanni et~al.(2021)Diquigiovanni, Fontana, and Vantini]{diquigiovanni2021distribution}
Diquigiovanni, J., Fontana, M., and Vantini, S.
\newblock Distribution-free prediction bands for multivariate functional time series: an application to the italian gas market.
\newblock \emph{arXiv preprint arXiv:2107.00527}, 2021.

\bibitem[Diquigiovanni et~al.(2022)Diquigiovanni, Fontana, and Vantini]{diquigiovanni2022conformal}
Diquigiovanni, J., Fontana, M., and Vantini, S.
\newblock Conformal prediction bands for multivariate functional data.
\newblock \emph{Journal of Multivariate Analysis}, 189:\penalty0 104879, 2022.

\bibitem[Dobriban \& Lin(2023)Dobriban and Lin]{dobriban2023joint}
Dobriban, E. and Lin, Z.
\newblock Joint coverage regions: Simultaneous confidence and prediction sets.
\newblock \emph{arXiv preprint arXiv:2303.00203}, 2023.

\bibitem[Elidan(2013)]{elidan2013copulas}
Elidan, G.
\newblock Copulas in machine learning.
\newblock In \emph{Copulae in Mathematical and Quantitative Finance: Proceedings of the Workshop Held in Cracow, 10-11 July 2012}, pp.\  39--60. Springer, 2013.

\bibitem[Feldman et~al.(2023)Feldman, Bates, and Romano]{feldman2023calibrated}
Feldman, S., Bates, S., and Romano, Y.
\newblock Calibrated multiple-output quantile regression with representation learning.
\newblock \emph{Journal of Machine Learning Research}, 24\penalty0 (24):\penalty0 1--48, 2023.

\bibitem[Fontana et~al.(2023)Fontana, Zeni, and Vantini]{conformalreview}
Fontana, M., Zeni, G., and Vantini, S.
\newblock Conformal prediction: a unified review of theory and new challenges.
\newblock \emph{Bernoulli}, 29\penalty0 (1):\penalty0 1--23, 2023.

\bibitem[Gibbs \& Candes(2021)Gibbs and Candes]{Gibbs2021AdaptiveCI}
Gibbs, I. and Candes, E.
\newblock Adaptive conformal inference under distribution shift.
\newblock \emph{Advances in Neural Information Processing Systems}, 34:\penalty0 1660--1672, 2021.

\bibitem[Gui et~al.(2023)Gui, Barber, and Ma]{gui2023conformalized}
Gui, Y., Barber, R., and Ma, C.
\newblock Conformalized matrix completion.
\newblock In \emph{Thirty-seventh Conference on Neural Information Processing Systems}, 2023.
\newblock URL \url{https://openreview.net/forum?id=6f320HfMeS}.

\bibitem[Huang et~al.(2023)Huang, Jin, Candes, and Leskovec]{huang2023conformalized_gnn}
Huang, K., Jin, Y., Candes, E., and Leskovec, J.
\newblock Uncertainty quantification over graph with conformalized graph neural networks.
\newblock \emph{NeurIPS}, 2023.

\bibitem[Johnstone \& Ndiaye(2022)Johnstone and Ndiaye]{johnstone2022exact}
Johnstone, C. and Ndiaye, E.
\newblock Exact and approximate conformal inference in multiple dimensions.
\newblock \emph{arXiv preprint arXiv:2210.17405}, 2022.

\bibitem[Kim et~al.(2020)Kim, Xu, and Barber]{j+ab}
Kim, B., Xu, C., and Barber, R.
\newblock Predictive inference is free with the jackknife+-after-bootstrap.
\newblock \emph{Advances in Neural Information Processing Systems}, 33:\penalty0 4138--4149, 2020.

\bibitem[Koltchinskii \& Lounici(2017)Koltchinskii and Lounici]{koltchinskii2017concentration}
Koltchinskii, V. and Lounici, K.
\newblock Concentration inequalities and moment bounds for sample covariance operators.
\newblock \emph{Bernoulli}, pp.\  110--133, 2017.

\bibitem[Kosorok(2008)]{kosorok2008introduction}
Kosorok, M.~R.
\newblock \emph{Introduction to empirical processes and semiparametric inference}, volume~61.
\newblock Springer, 2008.

\bibitem[Lim et~al.(2021)Lim, Ar{\i}k, Loeff, and Pfister]{lim2021temporal}
Lim, B., Ar{\i}k, S.~{\"O}., Loeff, N., and Pfister, T.
\newblock Temporal fusion transformers for interpretable multi-horizon time series forecasting.
\newblock \emph{International Journal of Forecasting}, 37\penalty0 (4):\penalty0 1748--1764, 2021.

\bibitem[Lin et~al.(2022)Lin, Trivedi, and Sun]{lin2022conformal_1}
Lin, Z., Trivedi, S., and Sun, J.
\newblock Conformal prediction with temporal quantile adjustments.
\newblock In Oh, A.~H., Agarwal, A., Belgrave, D., and Cho, K. (eds.), \emph{Advances in Neural Information Processing Systems}, 2022.
\newblock URL \url{https://openreview.net/forum?id=PM5gVmG2Jj}.

\bibitem[Lugosi \& Mendelson(2019)Lugosi and Mendelson]{lugosi2019near}
Lugosi, G. and Mendelson, S.
\newblock Near-optimal mean estimators with respect to general norms.
\newblock \emph{Probability theory and related fields}, 175\penalty0 (3-4):\penalty0 957--973, 2019.

\bibitem[Messoudi et~al.(2021)Messoudi, Destercke, and Rousseau]{messoudi2021copula}
Messoudi, S., Destercke, S., and Rousseau, S.
\newblock Copula-based conformal prediction for multi-target regression.
\newblock \emph{Pattern Recognition}, 120:\penalty0 108101, 2021.

\bibitem[Messoudi et~al.(2022)Messoudi, Destercke, and Rousseau]{messoudi2022ellipsoidal}
Messoudi, S., Destercke, S., and Rousseau, S.
\newblock Ellipsoidal conformal inference for multi-target regression.
\newblock In \emph{Conformal and Probabilistic Prediction with Applications}, pp.\  294--306. PMLR, 2022.

\bibitem[Nair \& Janson(2023)Nair and Janson]{nair2023randomization}
Nair, Y. and Janson, L.
\newblock Randomization tests for adaptively collected data.
\newblock \emph{arXiv preprint arXiv:2301.05365}, 2023.

\bibitem[Rio et~al.(2017)]{rio2017asymptotic}
Rio, E. et~al.
\newblock \emph{Asymptotic theory of weakly dependent random processes}, volume~80.
\newblock Springer, 2017.

\bibitem[Romano et~al.(2020)Romano, Sesia, and Candes]{Candes_classification}
Romano, Y., Sesia, M., and Candes, E.
\newblock Classification with valid and adaptive coverage.
\newblock \emph{Advances in Neural Information Processing Systems}, 33:\penalty0 3581--3591, 2020.

\bibitem[Salinas et~al.(2020)Salinas, Flunkert, Gasthaus, and Januschowski]{salinas2020deepar}
Salinas, D., Flunkert, V., Gasthaus, J., and Januschowski, T.
\newblock Deepar: Probabilistic forecasting with autoregressive recurrent networks.
\newblock \emph{International Journal of Forecasting}, 36\penalty0 (3):\penalty0 1181--1191, 2020.

\bibitem[Sklar(1959)]{sklar1959fonctions}
Sklar, M.
\newblock Fonctions de r{\'e}partition {\`a} n dimensions et leurs marges.
\newblock In \emph{Annales de l'ISUP}, volume~8, pp.\  229--231, 1959.

\bibitem[Stankevi{\v{c}}i{\={u}}t{\.{e}} et~al.(2021)Stankevi{\v{c}}i{\={u}}t{\.{e}}, Alaa, and van~der Schaar]{stan021conformal}
Stankevi{\v{c}}i{\={u}}t{\.{e}}, K., Alaa, A., and van~der Schaar, M.
\newblock Conformal time-series forecasting.
\newblock In Beygelzimer, A., Dauphin, Y., Liang, P., and Vaughan, J.~W. (eds.), \emph{Advances in Neural Information Processing Systems}, 2021.
\newblock URL \url{https://openreview.net/forum?id=Rx9dBZaV_IP}.

\bibitem[Sun \& Yu(2024)Sun and Yu]{sun2024copula}
Sun, S.~H. and Yu, R.
\newblock Copula conformal prediction for multi-step time series prediction.
\newblock In \emph{The Twelfth International Conference on Learning Representations}, 2024.
\newblock URL \url{https://openreview.net/forum?id=ojIJZDNIBj}.

\bibitem[Tibshirani et~al.(2019)Tibshirani, Barber, Candes, and Ramdas]{CPcovshift}
Tibshirani, R.~J., Barber, R.~F., Candes, E., and Ramdas, A.
\newblock Conformal prediction under covariate shift.
\newblock In \emph{Advances in Neural Information Processing Systems}, pp.\  2530--2540, 2019.

\bibitem[Vershynin(2012)]{vershynin2012close}
Vershynin, R.
\newblock How close is the sample covariance matrix to the actual covariance matrix?
\newblock \emph{Journal of Theoretical Probability}, 25\penalty0 (3):\penalty0 655--686, 2012.

\bibitem[Volkhonskiy et~al.(2017)Volkhonskiy, Burnaev, Nouretdinov, Gammerman, and Vovk]{vovkinductive}
Volkhonskiy, D., Burnaev, E., Nouretdinov, I., Gammerman, A., and Vovk, V.
\newblock Inductive conformal martingales for change-point detection.
\newblock In \emph{Conformal and Probabilistic Prediction and Applications}, pp.\  132--153. PMLR, 2017.

\bibitem[Vovk et~al.(2005)Vovk, Gammerman, and Shafer]{vovk2005algorithmic}
Vovk, V., Gammerman, A., and Shafer, G.
\newblock \emph{Algorithmic learning in a random world}, volume~29.
\newblock Springer, 2005.

\bibitem[Wisniewski et~al.(2020)Wisniewski, Lindsay, and Lindsay]{wisniewski20a}
Wisniewski, W., Lindsay, D., and Lindsay, S.
\newblock Application of conformal prediction interval estimations to market makers’ net positions.
\newblock In Gammerman, A., Vovk, V., Luo, Z., Smirnov, E., and Cherubin, G. (eds.), \emph{Proceedings of the Ninth Symposium on Conformal and Probabilistic Prediction and Applications}, volume 128 of \emph{Proceedings of Machine Learning Research}, pp.\  285--301. PMLR, 09--11 Sep 2020.

\bibitem[Xu \& Xie(2021{\natexlab{a}})Xu and Xie]{xu2021ECAD}
Xu, C. and Xie, Y.
\newblock Conformal anomaly detection on spatio-temporal observations with missing data.
\newblock \emph{arXiv preprint arXiv:2105.11886}, 2021{\natexlab{a}}.
\newblock ICML 2021 Workshop on Distribution-Free Uncertainty Quantification.

\bibitem[Xu \& Xie(2021{\natexlab{b}})Xu and Xie]{xu2021conformal}
Xu, C. and Xie, Y.
\newblock Conformal prediction interval for dynamic time-series.
\newblock In \emph{International Conference on Machine Learning}, pp.\  11559--11569. PMLR, 2021{\natexlab{b}}.

\bibitem[Xu \& Xie(2022)Xu and Xie]{xuERAPS2022}
Xu, C. and Xie, Y.
\newblock Conformal prediction set for time-series.
\newblock \emph{arXiv preprint arXiv:2206.07851}, 2022.
\newblock ICML 2022 Workshop on Distribution-Free Uncertainty Quantification.

\bibitem[Xu \& Xie(2023{\natexlab{a}})Xu and Xie]{xu2023conformal}
Xu, C. and Xie, Y.
\newblock Conformal prediction for time series.
\newblock \emph{IEEE Transactions on Pattern Analysis and Machine Intelligence}, 2023{\natexlab{a}}.

\bibitem[Xu \& Xie(2023{\natexlab{b}})Xu and Xie]{xu2023sequential}
Xu, C. and Xie, Y.
\newblock Sequential predictive conformal inference for time series.
\newblock In Krause, A., Brunskill, E., Cho, K., Engelhardt, B., Sabato, S., and Scarlett, J. (eds.), \emph{Proceedings of the 40th International Conference on Machine Learning}, volume 202 of \emph{Proceedings of Machine Learning Research}, pp.\  38707--38727. PMLR, 23--29 Jul 2023{\natexlab{b}}.

\bibitem[Xu et~al.(2023)Xu, Xie, Vazquez, Yao, and Qiu]{xu2023spatio}
Xu, C., Xie, Y., Vazquez, D. A.~Z., Yao, R., and Qiu, F.
\newblock Spatio-temporal wildfire prediction using multi-modal data.
\newblock \emph{IEEE Journal on Selected Areas in Information Theory}, 4:\penalty0 302--313, 2023.
\newblock \doi{10.1109/JSAIT.2023.3276054}.

\bibitem[Zaffran et~al.(2022)Zaffran, Dieuleveut, F'eron, Goude, and Josse]{Zaffran2022AdaptiveCP}
Zaffran, M., Dieuleveut, A., F'eron, O., Goude, Y., and Josse, J.
\newblock Adaptive conformal predictions for time series.
\newblock In \emph{ICML}, 2022.

\bibitem[Zhang et~al.(2021)Zhang, Xu, Sun, Qiu, and Xie]{zhang2021solar}
Zhang, M., Xu, C., Sun, A., Qiu, F., and Xie, Y.
\newblock Solar radiation ramping events modeling using spatio-temporal point processes.
\newblock \emph{arXiv preprint arXiv:2101.11179}, 2021.

\bibitem[Zhu et~al.(2021)Zhu, Zhang, Xie, and Van~Hentenryck]{zhu2021multi}
Zhu, S., Zhang, H., Xie, Y., and Van~Hentenryck, P.
\newblock Multi-resolution spatio-temporal prediction with application to wind power generation.
\newblock \emph{arXiv preprint arXiv:2108.13285}, 2021.

\end{thebibliography}
\bibliographystyle{icml2024}

\appendix
\setcounter{table}{0}
\setcounter{figure}{0}
\setcounter{equation}{0}

\renewcommand{\thetable}{A.\arabic{table}}
\renewcommand{\thefigure}{A.\arabic{figure}}
\renewcommand{\theequation}{A.\arabic{equation}}
\renewcommand{\thealgorithm}{A.\arabic{algorithm}}

\section{Additional experiments}
\label{AppE}
\begin{figure}[!t]
    \centering
    \includegraphics[width=0.35\linewidth]{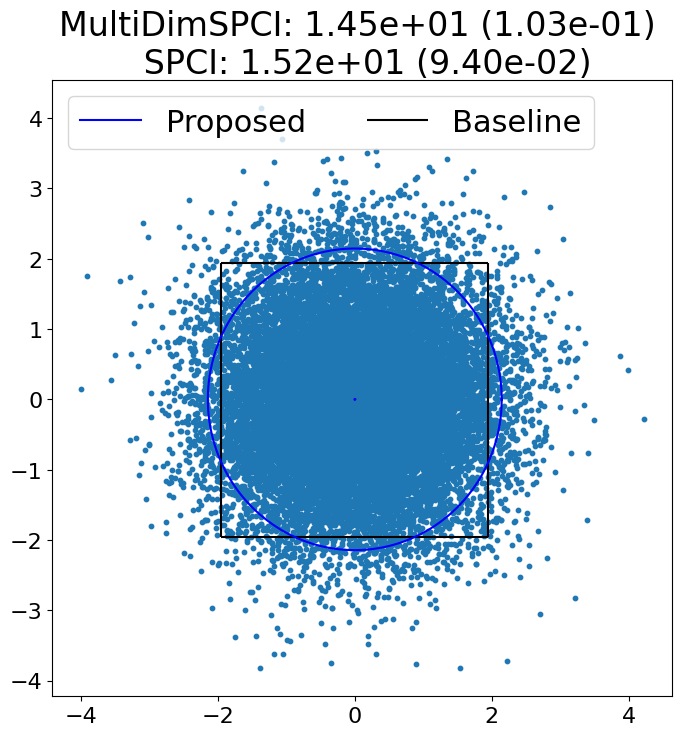}
    \includegraphics[width=0.35\linewidth]{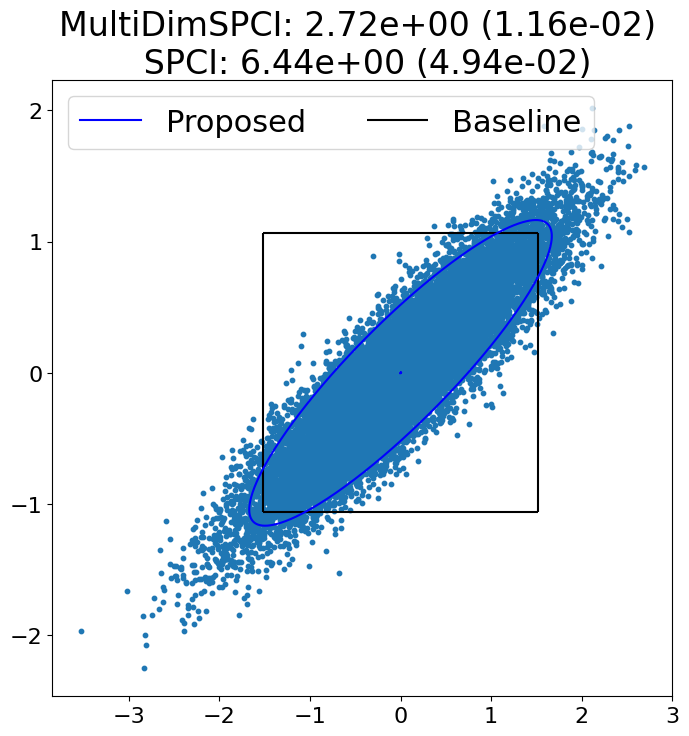}
    \caption{Non-critical regions on independent ${\rm AR}(w)$ (left) and ${\rm VAR}(w)$ (right) in $\R^2$. The average size of prediction regions is shown in captions. The size of the non-critical region by the proposed method is smaller, especially on ${\rm VAR}(w)$. As a result, the prediction regions by \MultiSPCI{} are smaller than those by \SPCI{}.}
    \label{fig:non_critical_region_simul}
\end{figure}
\paragraph{Comparison of non-critical regions.} From Figure \ref{fig:non_critical_region_simul}, we see that \MultiSPCI{} almost always yields smaller prediction sets than the coordinate-wise use of \SPCI{}, whose prediction regions are squares that tend to over-cover the test samples. In contrast, \MultiSPCI{} can well capture the dependency within $Y$ to enable accurate uncertainty quantification.

\begin{table}[!b]
    \centering
    \caption{Accuracy and size of the prediction sets on two independent $AR(w)$ sequences.}
    \label{tab:square}
    \resizebox{0.6\textwidth}{!}{
    \begin{tabular}{c|c|c|c|c}
    \hline
         & MultiDimSPCI &  Copula  & Convex hull & Baseline \\
    \hline
    Coverage & 90.2 & 90.6 & 90.0& 89.8 \\
    Size & 3.92 & 3.59 &3.64 & 3.58 \\
    \hline
    \end{tabular}}
\end{table}
\paragraph{Results using convex hulls.} 
Currently, the ellipsoid shape is utilized for the prediction region. This method is robust and guarantees coverage accuracy, as demonstrated by experiments. However, considering the distribution shape of the residuals may further enhance its performance. The distribution might not conform to an ellipsoidal shape in high-dimensional cases, potentially being irregular. As a result, the ellipsoidal shape may not be the most optimal or tight fit. What if we could allow our prediction set to adapt to any shape? In doing so, the new region would likely be much tighter in scenarios where the true residuals do not follow an ellipsoidal distribution.

In almost all instances, a convex hull can cover a set of points more compactly than an ellipsoid. We could achieve a significantly tighter fit by adopting a convex hull for the prediction region. 
The primary distinction between the two methods lies in the control parameters: the ellipsoid requires only the radius adjustment, whereas the convex hull necessitates control over all vertices. Ideally, we would select a set of data points that optimally balances coverage and minimizes the region size. However, this becomes computationally infeasible as the dataset size increases. Rather than optimizing the convex hull, we require it to cover exactly all the training data encompassed by the ellipsoid method.

The convex hull method selects the points in the training set that are covered by the ellipsoid method and uses the convex hull of these points as the prediction regions. It has a smaller region than the ellipsoid method because of how it is constructed.
As shown in Table \ref{tab:square}, the convex hull method on time series in $\mathbb{R}^2$ reaches valid coverage with a smaller prediction set. However, as shown in Table \ref{tab:square2}, getting a region with valid coverage in higher dimensions requires much more training data. The computational cost in higher dimensions becomes unaffordable if we want to reach a reasonable coverage. This aspect will be explored in future research.

\begin{table}[!t]
    \centering
    \caption{Accuracy and size of the prediction set for error uniformly spread in $[-1,1]^4$.}
    \label{tab:square2}
    \resizebox{0.6\textwidth}{!}{
    \begin{tabular}{c|c|c|c|c}
    \hline
         & MultiDimSPCI &  Copula  & Convex hull & Baseline \\
    \hline
    \makecell[c]{Coverage\\ ($80000$ training samples)} & 89.6 & 90.5 & 85.9& 89.9 \\
    \makecell[c]{Size\\ ($80000$ training samples)}& 22.2 & 14.5 &14.2 & 14.4 \\
    \makecell[c]{Coverage\\ ($800000$ training samples)} & 90.2 & 90.5 & 88.6& 89.9 \\
    \makecell[c]{Size\\ ($800000$ training samples)}& 22.2 & 14.5 &14.3 & 14.4\\
    \hline
    \end{tabular}}
\end{table}

\section{Proof}\label{AppP}

\begin{lemma}\label{lu}
$F_e(e_{T+1})\sim \mathrm{Unif}[0,1]$. 
\end{lemma} 

\begin{proof}
 This holds for random variable $e$ as long as the CDF $F_e$ is continuous and strictly increasing.    
\end{proof}

\begin{lemma}
Under Assumption \ref{a1}, for any training size T, there is an event $A_T$ which occurs
with probability at least $1-\sqrt{\log(16T)/T}$, such that conditioning on $A_T$,
\begin{equation}
\sup_x |\widetilde{F}_{T+1}(x)-F_e(x)|\le \sqrt{\frac{\log(16T)}{T}}.
\end{equation}
\end{lemma}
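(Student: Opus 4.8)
The statement to be proved is a finite-sample version of the Dvoretzky--Kiefer--Wolfowitz (DKW) inequality applied to the i.i.d.\ scalars $\{e_t\}_{t=1}^{T}$, where $e_t = \varepsilon_t^T \Sigma^{-1} \varepsilon_t$. By Assumption \ref{a1}, the $\varepsilon_t$ are i.i.d., so the $e_t$ are i.i.d.\ as well, each with common CDF $F_e$. The plan is to invoke the DKW inequality directly: for i.i.d.\ samples with empirical CDF $\widetilde{F}_{T+1}$,
\begin{equation*}
\p\!\left(\sup_x |\widetilde{F}_{T+1}(x) - F_e(x)| > s\right) \le 2 e^{-2 T s^2}
\end{equation*}
for every $s > 0$. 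The goal is then to choose $s$ so that both the deviation bound and the failure probability equal $\sqrt{\log(16T)/T}$, and to define $A_T$ as the complement of the bad event.

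Concretely, I would set $s = \sqrt{\log(16T)/T}$. Then the DKW bound gives failure probability at most $2 e^{-2 T s^2} = 2 e^{-2\log(16T)} = 2/(16T)^2 = 1/(128 T^2)$. It remains to check that $1/(128 T^2) \le \sqrt{\log(16T)/T}$ for all $T \ge 1$, i.e.\ that $128 T^2 \sqrt{\log(16T)/T} \ge 1$, which is clear since $\log(16T) \ge \log 16 > 1$ and $T^{3/2} \ge 1$. Hence the event $A_T := \{\sup_x |\widetilde{F}_{T+1}(x) - F_e(x)| \le \sqrt{\log(16T)/T}\}$ has probability at least $1 - 1/(128T^2) \ge 1 - \sqrt{\log(16T)/T}$, and on $A_T$ the claimed bound holds by definition. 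The Lipschitz-continuity half of Assumption \ref{a1} is not needed for this particular lemma (it will be used later in Lemma \ref{l2}), although continuity of $F_e$ is implicitly what makes the classical DKW constant of $2$ (Massart's sharp form) applicable without ties.

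The only real subtlety — and the main thing worth being careful about — is the passage from the exponential tail to the stated polynomial-in-$T$ bound: one must verify that the chosen $s$ simultaneously serves as both the deviation level and an upper bound on the failure probability, which is exactly the self-consistent choice $s = \sqrt{\log(16T)/T}$ that makes $2e^{-2Ts^2}$ a lower-order term than $s$ itself. An alternative route, if one wishes to avoid citing Massart's constant, is to use a union bound over a fine grid of $x$-values combined with Hoeffding's inequality for each fixed $x$ (each $1\{e_t \le x\}$ being a bounded i.i.d.\ Bernoulli), at the cost of a slightly worse constant; but since the target rate has room to spare, the clean DKW citation is preferable. I expect no genuine obstacle here — this lemma is essentially a bookkeeping step isolating the empirical-process error so that Lemma \ref{l2} can handle the separate (and harder) covariance-estimation error.
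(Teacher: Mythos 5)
Your proof is correct and takes essentially the same route as the paper: both apply the Dvoretzky--Kiefer--Wolfowitz inequality to the i.i.d.\ scores $\{e_t\}_{t=1}^T$ and then choose a threshold so that the deviation level and the failure probability are both at most $\sqrt{\log(16T)/T}$. The only difference is that the paper balances the two quantities exactly via the Lambert $W$ function (taking $s_T=\sqrt{W(16T)}/(2\sqrt{T})$, so that $2e^{-2Ts_T^2}=s_T\le\sqrt{\log(16T)/T}$), whereas you plug in $s=\sqrt{\log(16T)/T}$ directly and obtain the even smaller failure probability $1/(128T^2)$, which still suffices for the stated $1-\sqrt{\log(16T)/T}$ bound.
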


\begin{proof}
The proof follows the proof of Lemma $1$ in \citep{xu2023conformal}. When the error process is i.i.d., the famous Dvoretzky-Kiefer-Wolfowitz inequality \citep{kosorok2008introduction} implies that
\begin{equation}
\mathbb{P}\left(\sup _x|\widetilde{F}_{T+1}(x)-F_e(x)|>s_T\right) \leq 2 e^{-2 T s_T^2} .    
\end{equation}

Pick $s_T=\sqrt{W(16 T)}/(2 \sqrt{T})$, where $W(T)$ is the Lambert $W$ function that satisfies $W(T) e^{W(T)}=T$. We see that $s_T \leq \sqrt{\log (16 T) / T}$. Thus, define the event $A_T$ on which $\sup _x|\widetilde{F}_{T+1}(x)-F_e(x)| \leq$ $\sqrt{\log (16 T) / T}$, whereby we have
\begin{equation}
\begin{aligned}
\sup _x|\widetilde{F}_{T+1}(x)-F_e(x)|  \big\vert A_T &\leq \sqrt{\frac{\log(16T)}{T}},\\
P\left(A_T\right)&>1-\sqrt{\frac{\log(16T)}{T}}.
\end{aligned}
\end{equation}
\end{proof} 

\begin{lemma}[Theorem 1.2, \cite{vershynin2012close}]
\label{vershynin}
Consider a random vector $\varepsilon\in\mathbb{R}^p (p \geq 4)$ which has zero mean and satisfies moment assumption \ref{a4} for some $q>4$ and some $K_1, L$. Let $\delta>0$. Then, with probability at least $1-\delta$, the covariance matrix $\Sigma$ of $\varepsilon$ can be approximated by the sample covariance matrix $\frac{1}{T}\sum_{t=1}^{T}\varepsilon_t\otimes \varepsilon_t$ as
\begin{equation}
    \left\|\Sigma-\frac{1}{T}\sum_{t=1}^{T}\varepsilon_t\otimes \varepsilon_t\right\| \le C\left(\frac{1}{\delta}\right)^{20/9q}\log\left(\frac{1}{\delta}\right)(\log \log p)^2\left(\frac{p}{T}\right)^{1/2-2/q} ,
\end{equation}
where $C$ is a constant that depends only on parameters $q,K_1,L$.
\end{lemma}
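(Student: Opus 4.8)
The statement is a verbatim restatement of Theorem~1.2 in \cite{vershynin2012close}, so in the paper the ``proof'' is simply the citation, and the result is invoked downstream only as a black box: it is the single place where Assumption~\ref{a4} enters, and it feeds into Lemma~\ref{l2} and Theorem~\ref{t1}. Still, here is the route I would take to reconstruct it.

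First I would reduce to the isotropic case. Since $\Sigma$ is positive definite (Assumption~\ref{a3} and the accompanying remark), I would write $\varepsilon_t=\Sigma^{1/2}g_t$ with $g_t$ isotropic and use
\[
\Big\|\Sigma-\tfrac1T\textstyle\sum_t\varepsilon_t\otimes\varepsilon_t\Big\|\le\|\Sigma\|\,\Big\|I_p-\tfrac1T\textstyle\sum_t g_t\otimes g_t\Big\|,
\]
where $\|\Sigma\|$ is itself controlled in terms of $L$ by Jensen applied to Assumption~\ref{a4}. The moment hypothesis $\mathbb E|\langle\varepsilon,x\rangle|^q\le L^q$ passes (up to a spectral constant) to the marginals of $g_t$, so it suffices to bound $\|I_p-\tfrac1T\sum_t g_t\otimes g_t\|=\sup_{x\in S^{p-1}}\big|\tfrac1T\sum_t\langle g_t,x\rangle^2-1\big|$.

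Next I would discretize the sphere with a $\tfrac14$-net $\mathcal N$, $|\mathcal N|\le 9^p$, reducing the supremum to a maximum over $\mathcal N$ at the cost of a constant factor and a union-bound penalty $e^{Cp}$. For a fixed $x$ the i.i.d. variables $\langle g_t,x\rangle^2$ have mean $1$ but only $q/2>2$ finite moments, so a direct Bernstein estimate is unavailable; the crucial device is truncation. I would split $\langle g_t,x\rangle^2=\langle g_t,x\rangle^2\mathbf 1\{|\langle g_t,x\rangle|\le R\}+\langle g_t,x\rangle^2\mathbf 1\{|\langle g_t,x\rangle|>R\}$ for a threshold $R$ taken as a suitable power of $T/p$, apply Bernstein plus the union bound on the bounded part, and use $\mathbb E|\langle g_t,x\rangle|^q\le(CL)^q$ with Markov and a union bound on the rare large terms; balancing $R$, the sample size, and the target probability $\delta$ yields a bound of order $(p/T)^{1/2-2/q}$ up to polylogarithmic overhead.

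The hard part — and the reason the displayed bound carries the peculiar $(1/\delta)^{20/9q}$ and $(\log\log p)^2$ factors in place of a clean $\sqrt{p/T}$ — is that this one-shot truncation loses polynomial powers of $p$. Recovering near-optimality requires a bootstrap: the current crude operator-norm bound is fed back into the truncation/deviation step to sharpen the exponent, and iterating this $\sim\log\log p$ times (together with a decoupling step and a Rudelson-type bound for the symmetrized bulk) yields the stated rate, the $(\log\log p)^2$ being the footprint of that iteration. Reproducing the iteration carefully is the delicate step, and since the present paper needs only the stated rate, invoking \cite{vershynin2012close} directly is the appropriate move.
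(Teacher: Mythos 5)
You are right that the paper offers no argument for this lemma at all: it is imported verbatim as Theorem~1.2 of \cite{vershynin2012close}, so treating the citation as the proof is exactly what the paper does, and your identification of where the result enters (it is the only place Assumption~\ref{a4} is used, feeding Lemma~\ref{l2} and Theorem~\ref{t1}) is accurate. Your reconstruction sketch of Vershynin's argument (reduction to the isotropic case, net discretization, truncation with Bernstein plus Markov, then decoupling and the iterative bootstrap that produces the $(\log\log p)^2$ and $(1/\delta)^{20/9q}$ factors) is a faithful outline of the cited proof, but it is not part of, nor needed for, the present paper.
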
 

\begin{lemma}
\label{bound2}
Under Assumption \ref{a1}, \ref{a2}, \ref{a3} and \ref{a4}, with high probability $1-\delta$,
\begin{equation}
\sum_{t=1}^{T}|\hat{e}_t-e_t|\le \frac{T}{\lambda}\delta_T^2+ \frac{K_3 T}{\lambda^2}\left[C\left(\frac{1}{\delta}\right)^{20/9q}\log\left(\frac{1}{\delta}\right)(\log\log p)^2 \frac{p^{3/2-2/q}}{T^{1/2-2/q}} + 22K_3 \max\left\{\frac{p^3}{T\delta},p^{3/2}\delta_T\right\}\right],
\end{equation}
where $C$ is a constant that depends only on parameters $q,K_1,L$ and $K_3 = K_1 + \sqrt{3K_2}$.
\end{lemma}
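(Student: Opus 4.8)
The plan is to decompose $\hat e_t-e_t$ into a term governed by the prediction error $\Delta_t=\hat\varepsilon_t-\varepsilon_t$ and a term governed by the covariance estimation error $\widehat\Sigma^{-1}-\Sigma^{-1}$, bound the two resulting sums separately, and add them by the triangle inequality. Writing $\hat\varepsilon_t=\varepsilon_t+\Delta_t$ and inserting $\varepsilon_t^{T}\widehat\Sigma^{-1}\varepsilon_t$,
\[
\hat e_t-e_t=\bigl(2\Delta_t^{T}\widehat\Sigma^{-1}\varepsilon_t+\Delta_t^{T}\widehat\Sigma^{-1}\Delta_t\bigr)+\varepsilon_t^{T}\bigl(\widehat\Sigma^{-1}-\Sigma^{-1}\bigr)\varepsilon_t ,
\]
so that $\sum_{t=1}^{T}|\hat e_t-e_t|$ splits into a ``residual'' sum and a ``covariance'' sum.

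For the residual sum, Assumption \ref{a3} (read as a lower bound on the smallest eigenvalue, which is what makes the pseudo-inverse well behaved) gives $\|\widehat\Sigma^{-1}\|\le 1/\lambda$, hence $\sum_t|\Delta_t^{T}\widehat\Sigma^{-1}\Delta_t|\le\frac1\lambda\sum_t\|\Delta_t\|^2\le\frac{T}{\lambda}\delta_T^2$ by Assumption \ref{a2} --- this is the leading term of the claimed bound. The cross term I would handle by Cauchy--Schwarz, $\sum_t\|\Delta_t\|\,\|\varepsilon_t\|\le(\sum_t\|\Delta_t\|^2)^{1/2}(\sum_t\|\varepsilon_t\|^2)^{1/2}$; it is essential that only this aggregate appears, since for $t\le T$ Assumption \ref{a2} controls $\frac1T\sum_t\|\Delta_t\|^2$ but not the individual $\|\Delta_t\|$. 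For $\sum_t\|\varepsilon_t\|^2$ I would use Assumption \ref{a4}: the almost-sure bound gives $\sum_t\|\varepsilon_t\|^2\le K_1Tp$ deterministically, and the variance bound $\mathrm{Var}[\|\varepsilon\|^2]\le K_2p$ refines this, via Chebyshev, to $\sum_t\|\varepsilon_t\|^2\le K_3Tp$ on an event of probability $1-\Theta(\delta)$; so the cross term is of order $\sqrt{p}\,\delta_T\,T/\lambda$, which will be dominated by the covariance-sum bound.

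For the covariance sum, $\sum_t|\varepsilon_t^{T}(\widehat\Sigma^{-1}-\Sigma^{-1})\varepsilon_t|\le\|\widehat\Sigma^{-1}-\Sigma^{-1}\|\sum_t\|\varepsilon_t\|^2\le\|\widehat\Sigma^{-1}-\Sigma^{-1}\|\,K_3Tp$. For the operator norm I would use the resolvent identity $\widehat\Sigma^{-1}-\Sigma^{-1}=\widehat\Sigma^{-1}(\Sigma-\widehat\Sigma)\Sigma^{-1}$ with Assumption \ref{a3} to get $\|\widehat\Sigma^{-1}-\Sigma^{-1}\|\le\lambda^{-2}\|\widehat\Sigma-\Sigma\|$, and then split $\|\widehat\Sigma-\Sigma\|\le\|\widehat\Sigma-\widehat{\Sigma}_\varepsilon\|+\|\widehat{\Sigma}_\varepsilon-\Sigma\|$, where $\widehat{\Sigma}_\varepsilon=\frac1T\sum_t\varepsilon_t\otimes\varepsilon_t$ is the sample covariance of the true noise (recall $\bar\varepsilon=0$ is a standing assumption, so $\widehat\Sigma$ is, up to the $1/(T-1)$ normalization, $\frac1T\sum_t\hat\varepsilon_t\otimes\hat\varepsilon_t$). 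The term $\|\widehat{\Sigma}_\varepsilon-\Sigma\|$ is exactly what Lemma \ref{vershynin} controls --- Assumption \ref{a4} is precisely what verifies its hypotheses --- contributing the factor $C(1/\delta)^{20/9q}\log(1/\delta)(\log\log p)^2(p/T)^{1/2-2/q}$. The term $\|\widehat\Sigma-\widehat{\Sigma}_\varepsilon\|$ I would treat by expanding $\hat\varepsilon_t\otimes\hat\varepsilon_t-\varepsilon_t\otimes\varepsilon_t=\varepsilon_t\otimes\Delta_t+\Delta_t\otimes\varepsilon_t+\Delta_t\otimes\Delta_t$ and applying the operator Cauchy--Schwarz inequality $\|\sum_tu_t\otimes v_t\|\le\|\sum_tu_t\otimes u_t\|^{1/2}\|\sum_tv_t\otimes v_t\|^{1/2}$, together with Assumption \ref{a2} and the bound on $\|\widehat{\Sigma}_\varepsilon\|$ just obtained, giving $\|\widehat\Sigma-\widehat{\Sigma}_\varepsilon\|\lesssim\sqrt{p}\,\delta_T$ up to a Chebyshev-type fluctuation of order $(T\delta)^{-1/2}$ in the relevant quadratic forms. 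Multiplying $\lambda^{-2}\|\widehat\Sigma-\Sigma\|$ by $K_3Tp$ then yields the $\frac{K_3T}{\lambda^2}$ prefactor and the bracketed expression, with $\max\{p^3/(T\delta),\,p^{3/2}\delta_T\}$ recording the two competing regimes --- the variance/tail fluctuation of the empirical second moments versus the prediction-error contribution. Finally I would union-bound over the $O(1)$ events (the Vershynin event and the Chebyshev events), each of probability $1-\Theta(\delta)$, and absorb constants into $C$ and into the numerical factor $22$.

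The hard part is the covariance sum: estimating $\Sigma$ in operator norm in a growing dimension forces the heavy-tailed concentration bound of Lemma \ref{vershynin}, whose rate in $p,T,\delta$ propagates essentially verbatim into the final statement; and because $\Delta_t$ is controlled only in an averaged $\ell^2$ sense, every estimate must be arranged as a sum and pushed through (operator) Cauchy--Schwarz, which is exactly what couples the $p^{3/2}\delta_T$ term to the factor $K_3Tp$. One also has to check that $\|\widehat\Sigma-\Sigma\|$ is small enough that the first-order resolvent bound $\lambda^{-2}\|\widehat\Sigma-\Sigma\|$ is legitimate (no higher-order Neumann correction), which implicitly needs $T$ large relative to $p$.
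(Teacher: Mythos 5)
Your proposal follows essentially the same route as the paper's proof: the same pointwise split of $\hat e_t-e_t$ into a prediction-error part and a covariance-estimation part, the same identity $\widehat\Sigma^{-1}-\Sigma^{-1}=\widehat\Sigma^{-1}(\Sigma-\widehat\Sigma)\Sigma^{-1}$ with Assumption~\ref{a3}, Lemma~\ref{vershynin} for the true-noise sample covariance, and Chebyshev plus Cauchy--Schwarz to control $\sum_t\|\varepsilon_t\|^2$ and the $\varepsilon$--$\Delta$ cross terms, assembled by a union bound over $O(1)$ events. If anything you are more careful than the paper, which silently drops the cross term $2\Delta_t^T\widehat\Sigma^{-1}\varepsilon_t$ via the step $|\hat\varepsilon_t^T\widehat\Sigma^{-1}\hat\varepsilon_t-\varepsilon_t^T\widehat\Sigma^{-1}\varepsilon_t|\le\|\widehat\Sigma^{-1}\|\|\Delta_t\|^2$; your Cauchy--Schwarz bound for it, of order $T\sqrt{p}\,\delta_T/\lambda$, is indeed dominated by the $p^{3/2}\delta_T$ term in the bracket (since $\lambda\le K_1\le K_3$), so it folds into the stated bound up to the constant.
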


\begin{proof}
For any test conformity score $\hat{e}$ and the corresponding $e$, we drop subscript $t$ here for notation simplicity.
\begin{equation}
\label{b41}
\begin{aligned}
|\hat{e}-e|&=|\hat{\varepsilon}^T\widehat \Sigma^{-1}\hat{\varepsilon}-\varepsilon^T\Sigma^{-1}\varepsilon|\\
&\le|\hat{\varepsilon}^T\widehat \Sigma^{-1}\hat{\varepsilon}-\varepsilon^T\widehat \Sigma^{-1}\varepsilon|+|\varepsilon^T\widehat \Sigma^{-1}\varepsilon-\varepsilon^T\Sigma^{-1}\varepsilon|\\
&\le \|\widehat \Sigma^{-1}\|\|\Delta\|^2+|\varepsilon^T(\widehat \Sigma^{-1}-\Sigma^{-1})\varepsilon|\\
&\le \frac{1}{\lambda}\|\Delta\|^2 + \|\varepsilon\|^2\|\widehat \Sigma^{-1}-\Sigma^{-1}\|\\
&=\frac{1}{\lambda}\|\Delta\|^2 + \|\varepsilon\|^2\|\widehat \Sigma^{-1}(\Sigma-\widehat \Sigma)\Sigma^{-1}\|\\
&\le\frac{1}{\lambda}\|\Delta\|^2 +\|\varepsilon\|^2\|\widehat \Sigma^{-1}\|\|\Sigma^{-1}\|\|\Sigma-\widehat \Sigma\|\\
&\le \frac{1}{\lambda}\|\Delta\|^2+\frac{1}{\lambda^2}\|\varepsilon\|^2\|\Sigma-\widehat \Sigma\|.
\end{aligned}
\end{equation}
Then the problem becomes bounding the spectral norm $\|\Sigma-\widehat \Sigma\|$. Recall that $\hat{\varepsilon}_t = \varepsilon_t + \Delta_t$, and we define $\bar{\varepsilon}_*=(\sum_{t=1}^{T}\hat{\varepsilon_t})/T$, $\bar{\Delta}=(\sum_{t=1}^{T}\Delta_t)/T$. The sample covariance matrix can be represented as
\begin{equation}
\begin{aligned}
\widehat \Sigma &= \frac{1}{T-1}\sum_{t=1}^T (\hat{\varepsilon}_t-\bar{\varepsilon})(\hat{\varepsilon}_t-\bar{\varepsilon})^T\\
&=\frac{1}{T-1}\sum_{t=1}^T [(\varepsilon_t+\Delta_t)-(\bar{\varepsilon}_{*}+\bar{\Delta})][(\varepsilon_t+\Delta_t)-(\bar{\varepsilon}_{*}+\bar{\Delta})]^T\\
&=\frac{1}{T-1}\sum_{t=1}^T [(\varepsilon_t-\bar{\varepsilon}_{*})+(\Delta_t-\bar{\Delta})][(\varepsilon_t-\bar{\varepsilon}_{*})+(\Delta_t-\bar{\Delta})]^T\\
&=\underbrace{\frac{1}{T-1}\sum_{t=1}^T(\varepsilon_t-\bar{\varepsilon}_{*})(\varepsilon_t-\bar{\varepsilon}_{*})^T}_{\circled{1}} + \underbrace{\frac{1}{T-1}\sum_{t=1}^T(\Delta_t-\bar{\Delta})(\Delta_t-\bar{\Delta})^T}_{\circled{2}} + \\
&\quad \underbrace{\frac{1}{T-1}\sum_{t=1}^T\left[(\varepsilon_t-\bar{\varepsilon}_{*})(\Delta_t-\bar{\Delta})^T+(\Delta_t-\bar{\Delta})(\varepsilon_t-\bar{\varepsilon}_{*})^T\right]}_{\circled{3}}.
\end{aligned}
\end{equation}
The first term is the sample covariance matrix of $\varepsilon_t$, which typically converges to the true covariance matrix when the dimension $p$ is negligible compared to $T$. From the assumption \ref{a2}, the magnitude of the second term and the third term should be bounded by $\delta_T$, which is the accuracy of prediction. This means that
\begin{equation}
\begin{aligned}
\|\widehat \Sigma-\Sigma\|&\le \|\circled{1}-\Sigma\| + \|\circled{2}\|+\|\circled{3}\|.
\end{aligned}
\end{equation}
We can bound each spectral norm respectively.
\begin{align*}
\|\circled{1} - \Sigma\| &=  \left\|\frac{1}{T-1}\left(\sum_{t=1}^T \varepsilon_t\varepsilon_{t}^{T}-T\bar{\varepsilon}_{*}\bar{\varepsilon}_{*}^T\right)-\Sigma\right\|\\
& = \left\|\left(\frac{1}{T}\sum_{t=1}^T \varepsilon_t\varepsilon_{t}^{T}-\Sigma\right) +\frac{1}{T(T-1)}\left(\sum_{t=1}^T  \varepsilon_t\varepsilon_{t}^{T} - T^2\bar{\varepsilon}_{*}\bar{\varepsilon}_{*}^T\right)\right\|\\
&\le \left\|\frac{1}{T}\sum_{t=1}^T \varepsilon_t\varepsilon_{t}^{T}-\Sigma\right\| + \frac{1}{T(T-1)}\left(\left\|\sum_{t=1}^T  \varepsilon_t\varepsilon_{t}^{T}\right\|+\left\|T^2\bar{\varepsilon}_{*}\bar{\varepsilon}_{*}^T\right\|\right)\\
& \stackrel{(i)}{\leq} C \left(\frac{3}{\delta}\right)^{\frac{20}{9q}}\log \left(\frac{3}{\delta}\right)(\log\log p)^2 \left(\frac{p}{T}\right)^{\frac{1}{2}-\frac{2}{q}} + \frac{1}{T(T-1)}\left(\sum_{t=1}^T  \|\varepsilon_t\varepsilon_{t}^{T}\|+T^2\|\bar{\varepsilon}_{*}\bar{\varepsilon}_{*}^T\|\right)\\
&\stackrel{(ii)}{\leq} 4C \left(\frac{1}{\delta}\right)^{\frac{20}{9q}}\log \left(\frac{1}{\delta}\right)(\log\log p)^2 \left(\frac{p}{T}\right)^{\frac{1}{2}-\frac{2}{q}} + \frac{1}{T(T-1)}\sum_{t=1}^T  \|\varepsilon_t\|^2 + 2\|\bar{\varepsilon}_*\|^2,
\end{align*}
where $(i)$ holds with high probability $1-\frac{\delta}{3}$ under Assumption \ref{a4} according to the Lemma \ref{vershynin} from Theorem $1.2$ in \cite{vershynin2012close} and $(ii)$ holds because $3^{20/9q}\le 2$, $\log(3/\delta)\le 2\log(1/\delta)$ when $\delta\le 1/3$ and $T/(T-1)\le 2$. We can put the constant $4$ into the constant $C$ which simplifies the notation. From now on, we use $C$ to represent the whole constant $4C$ in the expression. For the other term $\|\bar{\varepsilon}_*\|^2=\|(\sum_{t=1}^{T}\varepsilon_t)/T\|^2$, we can bound it with Chebshev's inequality. Since $\varepsilon_t \in\R^p$, we use $\varepsilon_{ti} (1\le i\le p)$ to denote the $i$th entry of random vector $\varepsilon_t$.
\begin{equation}
\begin{aligned}
\|\bar{\varepsilon}_*\|^2 = \left\|\frac{\sum_{t=1}^{T}\varepsilon_t}{T}\right\|^2 = \sum_{i=1}^p \left|\frac{\sum_{t=1}^{T}\varepsilon_{ti}}{T}\right|^2.
\end{aligned}
\end{equation}

Using Chebshev inequality, we have that
\begin{equation}
\begin{aligned}
\p\left(\left|\frac{\sum_{t=1}^{T}\varepsilon_{ti}}{T}\right| \ge \sqrt{\frac{3p\mathrm{Var}(\varepsilon_{ti})}{T\delta}}\right)&\le \frac{\mathrm{Var}(\varepsilon_{ti})}{T(\frac{3p\mathrm{Var}(\varepsilon_{ti})}{T\delta})}=\frac{\delta}{3p}
\end{aligned}
\end{equation}

This means that
\begin{equation}
\begin{aligned}
\p\left(\|\bar{\varepsilon}_*\|^2\le \frac{3p\sum_{i=1}^{p}\mathrm{Var}(\varepsilon_{ti})}{T\delta}\right)&=\p\left(\sum_{i=1}^p \left|\frac{\sum_{t=1}^{T}\varepsilon_{ti}}{T}\right|^2\le \frac{3p\sum_{i=1}^{p}\mathrm{Var}(\varepsilon_{ti})}{T\delta}\right)\\
&\ge \p\left(\bigcap_{i=1}^{p}\Bigg\{\left|\frac{\sum_{t=1}^{T}\varepsilon_{ti}}{T}\right|^2\le \frac{3p\mathrm{Var}(\varepsilon_{ti})}{T\delta}\Bigg\}\right)\\
&\ge \left(1-\frac{\delta}{3p}\right)^p \ge 1-\frac{\delta}{3}.
\end{aligned}
\end{equation}

Considering that \[\sum_{i=1}^{p}\mathrm{Var}(\varepsilon_{ti}) = \mathbb{E}(\|\varepsilon_t\|^2) \le K_1 p. \]

We have that with probability higher than $1-\delta/3$,
\begin{equation}
\begin{aligned}
\|\circled{1} - \Sigma\| &\le C \left(\frac{1}{\delta}\right)^{\frac{20}{9q}}\log \left(\frac{1}{\delta}\right)(\log\log p)^2 \left(\frac{p}{T}\right)^{\frac{1}{2}-\frac{2}{q}} + \frac{1}{T(T-1)}\sum_{t=1}^T  \|\varepsilon_t\|^2 + 2\|\bar{\varepsilon}_*\|^2 \\
&\le C \left(\frac{1}{\delta}\right)^{\frac{20}{9q}}\log \left(\frac{1}{\delta}\right)(\log\log p)^2 \left(\frac{p}{T}\right)^{\frac{1}{2}-\frac{2}{q}} + \frac{1}{T(T-1)}\sum_{t=1}^T  \|\varepsilon_t\|^2 + \frac{6K_1 p^2}{T\delta}.
\end{aligned}
\end{equation}

For the second term, we have
\begin{equation}
\begin{aligned}
\|\circled{2}\| &= \left\|\frac{1}{T-1}\left(\sum_{t=1}^{T}\Delta_t\Delta_{t}^{T}-T\bar{\Delta}\bar{\Delta}^T\right)\right\|\\
&\le \frac{1}{T-1}\left(\left\|\sum_{t=1}^{T}\Delta_t\Delta_{t}^{T}\right\| +T\|\bar{\Delta}\bar{\Delta}^T\|\right)\\
&\le \frac{2}{T-1}\sum_{t=1}^{T}\|\Delta_t\|^2\\
&\le \frac{2T\delta_{T}^{2}}{T-1}.
\end{aligned}
\end{equation}
For the third term, we have
\begin{align*}
\|\circled{3}\| &= \frac{1}{T-1}\left\|\sum_{t=1}^{T}(\varepsilon_t\Delta_{t}^{T}+\Delta_{t}\varepsilon_{t}^T)-T(\bar{\varepsilon}_{*}\bar{\Delta}^T+\bar{\Delta}\bar{\varepsilon}_{*}^T)\right\|\\
&\le \frac{2}{T-1}\left(\left\|\sum_{t=1}^{T}\varepsilon_t\Delta_{t}^{T}\right\|+T\|\bar{\Delta}\|\|\bar{\varepsilon}_{*}\|\right)\\
&= \frac{2}{T-1}\left(\left\|\sum_{t=1}^{T}\varepsilon_t\Delta_{t}^{T}\right\|+\sqrt{\left(\frac{\|\sum_{t=1}^T\varepsilon_t\|^2}{T}\right)\left(\frac{\|\sum_{t=1}^T\Delta_t\|^2}{T}\right)}\right)\\
&\le \frac{2}{T-1}\left(\sum_{t=1}^{T}\|\varepsilon_t\|\|\Delta_{t}\|+\sqrt{\left(\sum_{t=1}^T\|\varepsilon_t\|^2\right)\left(\sum_{t=1}^T\|\Delta_t\|^2\right)}\right)\\
& \stackrel{(i)}{\le} \frac{4}{T-1}\sqrt{\left(\sum_{t=1}^T\|\varepsilon_t\|^2\right)\left(\sum_{t=1}^T\|\Delta_t\|^2\right)}\\
&\le \frac{4\delta_T}{T-1}\sqrt{T\sum_{t=1}^T\|\varepsilon_t\|^2}.
\end{align*}
The inequality $(i)$ holds because of Cauchy-Schwarz inequality $(\sum_{t=1}^T \|\varepsilon_t\|^2)(\sum_{t=1}^T \|\Delta_t\|^2)\ge (\sum_{t=1}^T \|\varepsilon_t\|\|\Delta_{t}\|)^2$. From Assumption \ref{a4}, we have
\begin{equation}
\begin{aligned}
\mathbb{E}\left[\sum_{t=1}^T\|\varepsilon_t\|^2\right] = T \mathbb{E}(\|\varepsilon_t\|^2)\le TK_1p.
\end{aligned}
\end{equation}
Using Chebshev's inequality we have
\begin{equation}
\begin{aligned}
\p\left\{\left|\frac{1}{T}\sum_{t=1}^{T}\|\varepsilon_t\|^2-\mathbb{E}[\|\varepsilon_t\|^2]\right|\ge \sqrt{\frac{3\mathrm{Var}[\|\varepsilon_t\|^2]}{T\delta}}\right\}\le \frac{\mathrm{Var}[\|\varepsilon_t\|^2]}{T(\frac{3\mathrm{Var}[\|\varepsilon_t\|^2])}{T\delta})}=\frac{\delta}{3},
\end{aligned}
\end{equation}
which means with probability higher than $1-\delta/3$,
\begin{equation}
\label{bound2m}
\begin{aligned}
\frac{1}{T}\sum_{t=1}^{T}\|\varepsilon_t\|^2&\le \mathbb{E}[\|\varepsilon_t\|^2] + \sqrt{\frac{3\mathrm{Var}[\|\varepsilon_t\|^2]}{T\delta}}\\
&\le K_1p+\sqrt{\frac{3K_2p}{T\delta}}\\
&\le (K_1+\sqrt{3K_2})p\coloneqq K_3p.
\end{aligned}
\end{equation}
The last inequality holds because of Assumption \ref{a4} and $pT\delta\ge 1$. Without loss of generality, we can assume $K_3\ge 1$. Define $S_T=\frac{1}{T}\sum_{t=1}^{T}\|\varepsilon_t\|^2$. Overall, with probability higher than $1-\delta$, we have inequality \ref{bound2m} and the following inequality holds when $T\ge p$
\begin{align*}
\|\widehat \Sigma-\Sigma\|\le C\left(\frac{1}{\delta}\right)^{\frac{20}{9q}}\log\left(\frac{1}{\delta}\right)(\log\log p)^2 (\frac{p}{T})^{\frac{1}{2}-\frac{2}{q}} &+ \frac{1}{T(T-1)}\sum_{t=1}^T  \|\varepsilon_t\|^2 \\
+\frac{6K_1 p^2}{T\delta}+ \frac{2T\delta_{T}^{2}}{T-1} &+ \frac{4\delta_T}{T-1}\sqrt{T\sum_{t=1}^T\|\varepsilon_t\|^2}\\
= 2C\left(\frac{1}{\delta}\right)^{\frac{20}{9q}}\log\left(\frac{3}{\delta}\right)(\log\log p)^2 (\frac{p}{T})^{\frac{1}{2}-\frac{2}{q}} &+ \frac{2}{T-1}S_T +\frac{6K_1 p^2}{T\delta}
+ \frac{2T\delta_{T}^{2}}{T-1} + \frac{4T\delta_T}{T-1}\sqrt{S_T
}\\
\stackrel{(i)}{\le} C\left(\frac{1}{\delta}\right)^{\frac{20}{9q}}\log\left(\frac{1}{\delta}\right)(\log\log p)^2 (\frac{p}{T})^{\frac{1}{2}-\frac{2}{q}}&+\frac{4}{T}S_T  +\frac{6K_1 p^2}{T\delta}
+ 4\delta_{T}^{2} + 8\delta_T\sqrt{S_T}\\
\le  C\left(\frac{1}{\delta}\right)^{\frac{20}{9q}}\log\left(\frac{1}{\delta}\right)(\log\log p)^2 (\frac{p}{T})^{\frac{1}{2}-\frac{2}{q}}&+16\max\left\{\frac{S_T}{T},\delta_T^2,\delta_T\sqrt{S_T}\right\} +\frac{6K_1 p^2}{T\delta}\\
\le  C\left(\frac{1}{\delta}\right)^{\frac{20}{9q}}\log\left(\frac{1}{\delta}\right)(\log\log p)^2 (\frac{p}{T})^{\frac{1}{2}-\frac{2}{q}}& + 16\max\left\{\frac{K_3p}{T},\delta_T^2,\sqrt{K_3p}\delta_T\right\} +\frac{6K_3 p^2}{T\delta}\\
\le  C\left(\frac{1}{\delta}\right)^{\frac{20}{9q}}\log\left(\frac{1}{\delta}\right)(\log\log p)^2 (\frac{p}{T})^{\frac{1}{2}-\frac{2}{q}}& + 22K_3\max\left\{\frac{p}{T},\delta_T^2,\sqrt{p}\delta_T,\frac{p^2}{T\delta}\right\}\\
\stackrel{(ii)}{\le}  C\left(\frac{1}{\delta}\right)^{\frac{20}{9q}}\log\left(\frac{1}{\delta}\right)(\log\log p)^2 (\frac{p}{T})^{\frac{1}{2}-\frac{2}{q}}& + 22K_3 \max\left\{\frac{p^2}{T\delta},\sqrt{p}\delta_T\right\}.
\end{align*}
where $(i)$ holds because $\frac{T}{T-1}\le 2$ and $(ii)$ holds because $p\ge \delta_{T}^2$. Then the following inequality holds with high probability $1-\delta$,
\begin{equation}
\label{b42}
\begin{aligned}
\sum_{t=1}^{T}|\hat{e}_t-e_t|&\le \frac{1}{\lambda}\sum_{t=1}^{T}\|\Delta_t\|^2+\frac{1}{\lambda^2}\sum_{t=1}^{T}\|\varepsilon_t\|^2\|\widehat \Sigma-\Sigma\|  \\
&\le\frac{T}{\lambda}\delta_T^2+ \frac{K_3 Tp}{\lambda^2}\left[C\left(\frac{1}{\delta}\right)^{\frac{20}{9q}}\log\left(\frac{1}{\delta}\right)(\log\log p)^2 (\frac{p}{T})^{\frac{1}{2}-\frac{2}{q}} + 22K_3 \max\left\{\frac{p^2}{T\delta},\sqrt{p}\delta_T\right\}\right]\\
& = \frac{T}{\lambda}\delta_T^2+ \frac{K_3 T}{\lambda^2}\left[C\left(\frac{1}{\delta}\right)^{\frac{20}{9q}}\log\left(\frac{1}{\delta}\right)(\log\log p)^2 \frac{p^{3/2-2/q}}{T^{1/2-2/q}} + 22K_3 \max\left\{\frac{p^3}{T\delta},p^{3/2}\delta_T\right\}\right].
\end{aligned}
\end{equation}
\end{proof}

\begin{corollary}
\label{b5}
If the true covariance matrix $\Sigma$ is known and we use $\widehat \Sigma=\Sigma$, then 
\begin{equation}
\sum_{t=1}^{T}|\hat{e}_t-e_t|\le \frac{T}{\lambda}\delta_T^2.
\end{equation}
\end{corollary}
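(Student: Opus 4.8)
The plan is to read off Corollary~\ref{b5} as the $\widehat\Sigma=\Sigma$ specialization of the per-sample estimate derived inside the proof of Lemma~\ref{bound2}. Recall that the chain of inequalities in \eqref{b41} shows, for each index $t$ (with the subscript suppressed there),
\[
|\hat e-e|\;\le\;\frac{1}{\lambda}\|\Delta\|^2\;+\;\frac{1}{\lambda^2}\|\varepsilon\|^2\,\|\Sigma-\widehat\Sigma\|,
\]
where the first term accounts for replacing $\varepsilon$ by $\hat\varepsilon=\varepsilon+\Delta$ inside the quadratic form (using $\|\widehat\Sigma^{-1}\|\le 1/\lambda$ from Assumption~\ref{a3}) and the second term accounts for replacing $\Sigma^{-1}$ by $\widehat\Sigma^{-1}$. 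The key point is that the entire second summand carries the multiplicative factor $\|\Sigma-\widehat\Sigma\|$, so it is identically zero as soon as we take $\widehat\Sigma=\Sigma$. In that case the non-conformity scores are $\hat e_t=\hat\varepsilon_t^T\Sigma^{-1}\hat\varepsilon_t$ and $e_t=\varepsilon_t^T\Sigma^{-1}\varepsilon_t$, and the first two lines of \eqref{b41} (applied verbatim with $\widehat\Sigma$ replaced by $\Sigma$) collapse to $|\hat e_t-e_t|\le \frac{1}{\lambda}\|\Delta_t\|^2$ for every $t$.

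It then remains only to sum over $t=1,\dots,T$ and invoke the estimation-quality bound from Assumption~\ref{a2}. Concretely,
\[
\sum_{t=1}^{T}|\hat e_t-e_t|\;\le\;\frac{1}{\lambda}\sum_{t=1}^{T}\|\Delta_t\|^2\;\le\;\frac{T}{\lambda}\,\delta_T^{2},
\]
where the last step uses $\frac{1}{T}\sum_{t=1}^{T}\|\Delta_t\|^2\le\delta_T^2$. This is exactly the claimed bound. A minor bonus worth stating explicitly is that, because the covariance-estimation term has been removed, the bound no longer relies on the high-probability event of Lemma~\ref{vershynin}; it holds surely, which is why the corollary carries no probability qualifier.

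There is essentially no obstacle here: Corollary~\ref{b5} is a strictly easier special case of Lemma~\ref{bound2}, obtained by deleting every contribution that originates from covariance estimation error. The only thing I would double-check is that the reduction is legitimate, i.e.\ that the decomposition in \eqref{b41} is valid for an arbitrary positive-definite matrix in place of $\widehat\Sigma$ and that nothing downstream silently re-uses $\widehat\Sigma\neq\Sigma$; inspection of \eqref{b41} confirms this, so substituting $\widehat\Sigma=\Sigma$ simply yields the cleanest instance of the argument.
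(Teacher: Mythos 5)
Your argument is correct and is exactly the paper's argument: specialize the per-sample decomposition in \eqref{b41} to $\widehat\Sigma=\Sigma$, observe that the $\|\Sigma-\widehat\Sigma\|$ term vanishes, and sum using Assumption~\ref{a2}. The only addition beyond the paper's one-line proof is your (valid) remark that the resulting bound holds deterministically rather than only on a high-probability event.
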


\begin{proof}
This is because when we use $\hat{\Sigma}=\Sigma$, the second term in Equation \ref{b42} is zero. Then the bound is simply the first term.
\end{proof}

\begin{lemma}
Under Assumption \ref{a1}, \ref{a2}, \ref{a3} and \ref{a4}, with a high probability $1-\delta$,
\begin{equation}
\sup_x |\widehat{F}_{T+1}(x)-\widetilde{F}_{T+1}(x)|\le(L_{T+1}+1) C_S+2 \sup _x|\widetilde{F}_{T+1}(x)-F_e(x)|,
\end{equation}
where \[C_S = \left(\frac{\delta_T^2}{\lambda}+ \frac{K_3}{\lambda^2}\left[C\left(\frac{1}{\delta}\right)^{\frac{20}{9q}}\log\left(\frac{1}{\delta}\right)(\log\log p)^2 \frac{p^{3/2-2/q}}{T^{1/2-2/q}} + 22K_3 \max\left\{\frac{p^3}{T\delta},p^{3/2}\delta_T\right\}\right]\right)^{1/2} \] and $K_3 = K_1+\sqrt{3K_2}$.
\end{lemma}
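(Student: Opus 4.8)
The plan is to turn the averaged control on the scalar-score errors from Lemma~\ref{bound2} into a uniform (Kolmogorov) bound between the two empirical CDFs, and then absorb the resulting shift in the argument using the Lipschitz continuity of $F_e$ (Assumption~\ref{a1}) together with the already-controlled quantity $\sup_x|\widetilde{F}_{T+1}(x)-F_e(x)|$. First, by Lemma~\ref{bound2}, on an event of probability at least $1-\delta$ we have $\frac{1}{T}\sum_{t=1}^{T}|\hat{e}_t-e_t|\le C_S^2$ with $C_S$ as in the statement. Fix $\eta>0$ and set $B_\eta=\{t\le T:|\hat{e}_t-e_t|>\eta\}$; by Markov's inequality $|B_\eta|\le TC_S^2/\eta$. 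For $t\notin B_\eta$ one has $\hat{e}_t\le x\Rightarrow e_t\le x+\eta$ and $e_t\le x-\eta\Rightarrow\hat{e}_t\le x$; splitting the counting sums that define $\widehat{F}_{T+1}$ and $\widetilde{F}_{T+1}$ according to whether $t\in B_\eta$, and bounding the $B_\eta$-part trivially by $|B_\eta|/T$, gives for every $x$
\[
\widetilde{F}_{T+1}(x-\eta)-\frac{C_S^2}{\eta}\ \le\ \widehat{F}_{T+1}(x)\ \le\ \widetilde{F}_{T+1}(x+\eta)+\frac{C_S^2}{\eta}.
\]

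Next I would remove the $\pm\eta$ shift in the argument of $\widetilde{F}_{T+1}$. Inserting $F_e$ via the triangle inequality and using that $F_e$ is $L_{T+1}$-Lipschitz,
\[
\bigl|\widetilde{F}_{T+1}(x\pm\eta)-\widetilde{F}_{T+1}(x)\bigr|\ \le\ L_{T+1}\eta+2\sup_z\bigl|\widetilde{F}_{T+1}(z)-F_e(z)\bigr|,
\]
valid for all $x$ (monotonicity of $\widetilde{F}_{T+1}$ keeps this valid near the ends of the support). Combining with the previous two-sided inequality yields, uniformly in $x$,
\[
\bigl|\widehat{F}_{T+1}(x)-\widetilde{F}_{T+1}(x)\bigr|\ \le\ L_{T+1}\eta+\frac{C_S^2}{\eta}+2\sup_z\bigl|\widetilde{F}_{T+1}(z)-F_e(z)\bigr|.
\]
Choosing $\eta=C_S$ makes $L_{T+1}\eta+C_S^2/\eta=(L_{T+1}+1)C_S$, and taking $\sup_x$ on the left gives exactly the claimed bound. (Optimizing over $\eta$ would give the marginally sharper $2\sqrt{L_{T+1}}\,C_S$, but $\eta=C_S$ is cleaner and suffices for the later results.)

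The main obstacle is the first step: we do \emph{not} control $\max_{t}|\hat{e}_t-e_t|$, only its average, so the two empirical CDFs cannot be compared by a single uniform shift. The good/bad split is what saves it — on the good indices each jump point of $\widehat{F}_{T+1}$ lies within $\eta$ of the corresponding jump point of $\widetilde{F}_{T+1}$, which produces the argument shift $x\mapsto x\pm\eta$, while the bad indices are few by Markov, producing the additive term $C_S^2/\eta$; the interplay of these two against the Lipschitz modulus $L_{T+1}$ is precisely what fixes the optimal scale $\eta\asymp C_S$. Everything else is the triangle inequality and Assumption~\ref{a1}, and the $1-\delta$ probability qualifier is inherited verbatim from the event of Lemma~\ref{bound2}.
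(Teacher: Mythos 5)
Your proof is correct and follows essentially the same route as the paper: the same good/bad index split driven by Lemma \ref{bound2} (your Markov threshold with $\eta=C_S$ reproduces the paper's set $S=\{t:|e_t-\hat{e}_t|\ge C_S\}$ with $|S|\le TC_S$), followed by the Lipschitz continuity of $F_e$ and the term $\sup_x|\widetilde{F}_{T+1}(x)-F_e(x)|$ to absorb the residual discrepancy. The only cosmetic difference is that you phrase the good-index comparison as the two-sided sandwich $\widetilde{F}_{T+1}(x-\eta)-C_S^2/\eta\le\widehat{F}_{T+1}(x)\le\widetilde{F}_{T+1}(x+\eta)+C_S^2/\eta$ rather than the paper's indicator-window bound $\mathbf{1}\{|e_t-x|\le C_S\}$, and this one-sided shift in fact delivers the stated constant $(L_{T+1}+1)C_S$ slightly more cleanly than the paper's symmetric window, whose Lipschitz piece $F_e(x+C_S)-F_e(x-C_S)$ nominally contributes $2L_{T+1}C_S$.
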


\begin{proof}
Using lemma \ref{bound2} and \ref{a2}, we have that with probability $1-\delta$
\begin{equation}
\begin{aligned}
\sum_{t=1}^{T}|e_t - \hat{e}_t|&\le \frac{T}{\lambda}\delta_T^2+ \frac{K_3 T}{\lambda^2}\left[C\left(\frac{1}{\delta}\right)^{20/9q}\log\left(\frac{1}{\delta}\right)(\log\log p)^2 \frac{p^{3/2-2/q}}{T^{1/2-2/q}} + 22K_3 \max\left\{\frac{p^3}{T\delta},p^{3/2}\delta_T\right\}\right] \\
&=TC_{S}^{2}.
\end{aligned}
\end{equation}

Let $S=\{t:|e_t-\hat{e}_t|\ge C_S\}$. Then
\begin{equation}
|S|C_S\le\sum_{t=1}^{T}|e_t - \hat{e}_t|\le TC_{S}^{2}.
\end{equation}
So $|S|\le  TC_S$. Then
\begin{align*}
 |\widehat{F}_{T+1}(x)-\widetilde{F}_{T+1}(x)| &\le \frac{1}{T} \sum_{t=1}^T|\mathbf{1}\{\hat{e}_t \leq x\}-\mathbf{1}\{e_t \leq x\}| \\
&\le \frac{1}{T}\left(|S|+\sum_{t \notin S}|\mathbf{1}\{\hat{e}_t \leq x\}-\mathbf{1}\{e_t \leq x\}|\right) \\
&\stackrel{(i)}{\leq} \frac{1}{T}\left(|S|+\sum_{t \notin S} \mathbf{1}\{|e_t-x| \leq C_S\}\right) \\
&\le \frac{1}{T}\left(|S|+\sum_{t=1}^T \mathbf{1}\{|e_t-x| \leq C_S\}\right) \\
&\le C_S+\mathbb{P}(|e_{T+1}-x| \leq C_S) + \\
& \quad \sup_x\left|\frac{1}{T} \sum_{t=1}^T \mathbf{1}\{|e_t-x| \leq C_S\}-\mathbb{P}(|e_{T+1}-x| \leq C_S)\right| \\
&= C_S+[F_e(x+C_S)-F_e(x-C_S)]+\sup _x \Big\vert [\widetilde{F}_{T+1}(x+C_S)-\widetilde{F}_{T+1}(x-C_S)]\\
& \quad - [F_e(x+C_S)-F_e(x-C_S)] \Big\vert \\
&\stackrel{(ii)}{\leq} (L_{T+1}+1) C_S+2 \sup _x|\widetilde{F}_{T+1}(x)-F_e(x)|,
\end{align*}
where $(i)$ is because $|\mathbf{1}\{a \leq x\}-\mathbf{1}\{b \leq x\}|\le \mathbf{1}\{|b-x|\le|a-b|\}$ for $a, b \in \R$ and $(ii)$ is because the Lipschitz continuity of $F_e(x)$.
\end{proof}

\begin{corollary}
If the true covariance matrix $\Sigma$ is known and we use $\widehat \Sigma=\Sigma$. Under Assumption \ref{a1}, \ref{a2}, \ref{a3} and \ref{a4}, with a high probability $1-\delta$,
\begin{equation}
\sup_x |\widehat{F}_{T+1}(x)-\widetilde{F}_{T+1}(x)|\le(L_{T+1}+1)\frac{\delta_T}{\sqrt{\lambda}} +2 \sup _x|\widetilde{F}_{T+1}(x)-F_e(x)|.
\end{equation}
\end{corollary}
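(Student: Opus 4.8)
The plan is to run the proof of the immediately preceding lemma essentially verbatim, exploiting the fact that the covariance‑estimation error vanishes when $\widehat\Sigma=\Sigma$. The key input is Corollary \ref{b5}: under $\widehat\Sigma=\Sigma$ one has $\sum_{t=1}^{T}|\hat e_t-e_t|\le \frac{T}{\lambda}\delta_T^2$, because the $\|\Sigma-\widehat\Sigma\|$ term in \eqref{b42} is identically zero and only the prediction‑error term $\frac{T}{\lambda}\delta_T^2$ survives. Writing $C_S:=\delta_T/\sqrt\lambda$, this bound is exactly $\sum_{t=1}^{T}|\hat e_t-e_t|\le T C_S^2$, which has precisely the form used in the general lemma; the rest of that argument then applies word for word with this simpler $C_S$ in place of the complicated expression of \eqref{cs}.

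Concretely, I would (i) set $S:=\{t:|e_t-\hat e_t|\ge C_S\}$ and note $|S|\,C_S\le\sum_{t=1}^{T}|e_t-\hat e_t|\le T C_S^2$, hence $|S|\le T C_S$; (ii) bound $|\widehat F_{T+1}(x)-\widetilde F_{T+1}(x)|\le\frac1T\sum_{t=1}^{T}|\mathbf 1\{\hat e_t\le x\}-\mathbf 1\{e_t\le x\}|$ and split the sum over $S$ and $S^{c}$, using $|\mathbf 1\{a\le x\}-\mathbf 1\{b\le x\}|\le\mathbf 1\{|b-x|\le|a-b|\}$ so that the $S^{c}$ contribution is at most $\sum_{t=1}^{T}\mathbf 1\{|e_t-x|\le C_S\}$; (iii) add and subtract $\mathbb P(|e_{T+1}-x|\le C_S)=F_e(x+C_S)-F_e(x-C_S)$, control this probability by the Lipschitz continuity of $F_e$ from Assumption \ref{a1}, and bound the empirical fluctuation $\sup_x|[\widetilde F_{T+1}(x+C_S)-\widetilde F_{T+1}(x-C_S)]-[F_e(x+C_S)-F_e(x-C_S)]|$ by $2\sup_x|\widetilde F_{T+1}(x)-F_e(x)|$. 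Collecting the terms exactly as in the preceding lemma gives $\sup_x|\widehat F_{T+1}(x)-\widetilde F_{T+1}(x)|\le(L_{T+1}+1)C_S+2\sup_x|\widetilde F_{T+1}(x)-F_e(x)|$, and substituting $C_S=\delta_T/\sqrt\lambda$ yields the claimed inequality.

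There is no real obstacle here — this is a corollary in the honest sense, obtained by feeding Corollary \ref{b5} into the machinery of the preceding lemma in place of Lemma \ref{bound2}. The only point worth a remark is the probabilistic bookkeeping: in the general lemma the $1-\delta$ event came from the covariance concentration (Lemma \ref{vershynin}) together with the two Chebyshev estimates, whereas with $\widehat\Sigma=\Sigma$ the bound of Corollary \ref{b5} is deterministic given Assumptions \ref{a1}--\ref{a4}; the statement is phrased ``with high probability $1-\delta$'' only to parallel the general result, and one could equally state it as holding surely on the event where Assumption \ref{a2} is in force. Everything else is a mechanical re‑run with the complicated $C_S$ of \eqref{cs} replaced by the elementary $\delta_T/\sqrt\lambda$.
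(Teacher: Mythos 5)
Your proposal is correct and matches the paper's intended route: the paper leaves this corollary unproved, and the argument is exactly the one you give, namely feeding Corollary \ref{b5} (which reduces the bound on $\sum_{t=1}^{T}|\hat e_t - e_t|$ to $T\delta_T^2/\lambda$ when $\widehat\Sigma=\Sigma$) into the machinery of Lemma \ref{l2} verbatim, with $C_S = \delta_T/\sqrt\lambda$ in place of the general expression of \eqref{cs}. Your remark that the bound is deterministic given Assumptions \ref{a1}--\ref{a4} (so the ``$1-\delta$'' phrasing is carried over from the general lemma only for uniformity) is an accurate and worthwhile observation.
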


\begin{theorem}
\label{b8}
Under Assumption \ref{a1}, \ref{a2}, \ref{a3} and \ref{a4}, with a high probability $1-\delta$, for any training size $T$ and $\alpha\in(0,1)$, we have
\begin{equation}
\begin{aligned}
&|\p(Y_{T+1} \in \widehat{C}_{T+1}^\alpha \mid X_{T+1}=x_{T+1})-(1-\alpha)| \\
&\leq 12 \sqrt{\log (16 T) / T}+4(L_{T+1}+1)(C_S+\delta_T).
\end{aligned}
\end{equation}
\end{theorem}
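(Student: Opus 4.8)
The plan is to treat this theorem as the synthesis step that glues Lemma~\ref{l1} and Lemma~\ref{l2} together by a probability-integral-transform argument, so that no new concentration bound is needed. Throughout, I would condition on the training data $\{Z_t\}_{t=1}^{T}$; this freezes $\widehat{C}_{T+1}^{\alpha}$, $\betahat$, the empirical CDF $\widehat{F}_{T+1}$ and $\widehat{\Sigma}$, while $\varepsilon_{T+1}$ (hence $e_{T+1}$ and $\hat{e}_{T+1}$) stays random and, by the i.i.d.\ Assumption~\ref{a1}, independent of the conditioning. Because the quantile-regression predictor is taken to be the empirical quantile of $\mathcal{E}_T=\{\hat{e}_t\}_{t=1}^{T}$, we have $\widehat{Q}_{T+1}=\widehat{F}_{T+1}^{-1}$, so by \eqref{eq:UQ_set} the coverage event $\{Y_{T+1}\in\widehat{C}_{T+1}^{\alpha}\}$ equals $\{\widehat{F}_{T+1}^{-1}(\betahat)\le\hat{e}_{T+1}\le\widehat{F}_{T+1}^{-1}(1-\alpha+\betahat)\}$, which by the standard duality between an empirical CDF and its generalized inverse (up to the $1/T$ size of the jumps of $\widehat{F}_{T+1}$, absorbed into the $\sqrt{\log(16T)/T}$ term) is the same as $\{\widehat{F}_{T+1}(\hat{e}_{T+1})\in[\betahat,\,1-\alpha+\betahat]\}$; this interval lies in $[0,1]$ and has length exactly $1-\alpha$ for every admissible $\betahat\in[0,\alpha]$, which is why the choice of $\betahat$ cannot affect coverage.

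Next I would show that $\widehat{F}_{T+1}(\hat{e}_{T+1})$ is essentially uniform. Setting $U:=F_e(e_{T+1})$, Lemma~\ref{lu} gives $U\sim\mathrm{Unif}[0,1]$ conditionally on the training data and on $X_{T+1}=x_{T+1}$. I would bound $|\widehat{F}_{T+1}(\hat{e}_{T+1})-U|$ by the triangle inequality through $\widetilde{F}_{T+1}(\hat{e}_{T+1})$ and $F_e(\hat{e}_{T+1})$: the $\widehat{F}_{T+1}\to\widetilde{F}_{T+1}$ gap is handled by Lemma~\ref{l2}, the $\widetilde{F}_{T+1}\to F_e$ gap by Lemma~\ref{l1}, and the remaining $|F_e(\hat{e}_{T+1})-F_e(e_{T+1})|$ by $L_{T+1}$-Lipschitzness of $F_e$ together with the $t=T+1$ instance of the chain \eqref{b41} --- using $\|\Delta_{T+1}\|\le\delta_T$ from Assumption~\ref{a2}, the almost-sure bound $\|\varepsilon_{T+1}\|\le\sqrt{K_1 p}$ from Assumption~\ref{a4}, and the operator-norm estimate on $\|\widehat{\Sigma}-\Sigma\|$ already proved inside Lemma~\ref{bound2}. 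Putting these together on a single high-probability event (the intersection of the events of Lemma~\ref{l1} and Lemma~\ref{l2}) gives $|\widehat{F}_{T+1}(\hat{e}_{T+1})-U|\le\gamma_T$ for a training-data-measurable $\gamma_T$ of order $(L_{T+1}+1)C_S+\sqrt{\log(16T)/T}$ (here one uses that we may assume $C_S\le 1$, since otherwise the claimed bound already exceeds $1$).

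Finally I would turn CDF-closeness into a coverage-gap bound. With $R:=\widehat{F}_{T+1}(\hat{e}_{T+1})-U$, $|R|\le\gamma_T$, and the frozen interval $[a,b]:=[\betahat,1-\alpha+\betahat]$, the elementary inclusions $\{U\in[a+\gamma_T,b-\gamma_T]\}\subseteq\{U+R\in[a,b]\}\subseteq\{U\in[a-\gamma_T,b+\gamma_T]\}$ and $U\sim\mathrm{Unif}[0,1]$ give $|\mathbb{P}(U+R\in[a,b]\mid X_{T+1}=x_{T+1})-(1-\alpha)|\le 2\gamma_T$, which combined with Step~1 is exactly the asserted bound after a generous enlargement of constants (the extra $+\delta_T$ and the factor $12$ are pure slack). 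The mathematical content of the whole argument sits in Lemma~\ref{l1} and Lemma~\ref{l2} --- the DKW bound and, more importantly, the Vershynin-type operator-norm concentration of $\widehat{\Sigma}$ (Lemma~\ref{vershynin}) propagated through the quadratic non-conformity score, which is what produces $C_S$; given those, the only genuinely new moves here are the empirical-CDF/quantile bookkeeping of Step~1 and the integral-transform inclusions of Step~3. I expect the one place where care is actually needed, and where a sloppy write-up could break, is the conditioning: one must make sure that after freezing the training data the transformed score $F_e(e_{T+1})$ is honestly $\mathrm{Unif}[0,1]$ while $\betahat$ and every approximation error are constants, and that all the separate high-probability events are combined into a single event on which the final bound holds.
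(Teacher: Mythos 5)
Your proposal is correct and mirrors the paper's own argument: both reduce coverage to the event $\widehat{F}_{T+1}(\hat e_{T+1})\in[\betahat,1-\alpha+\betahat]$, invoke Lemma \ref{lu} for $F_e(e_{T+1})\sim\mathrm{Unif}[0,1]$, chain $\widehat{F}_{T+1}\to\widetilde{F}_{T+1}\to F_e$ via Lemmas \ref{l1} and \ref{l2} together with Lipschitzness of $F_e$ and the bound on $|\hat e_{T+1}-e_{T+1}|$, and convert CDF-closeness into a coverage gap through a boundary-crossing argument for the uniform variable. Your interval-inclusion sandwich in Step 3 is only a cosmetic variant of the paper's indicator-function inequalities, so no substantive difference remains.
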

\begin{proof}
First, we recall the notation here. We define $\hat{\varepsilon}_t=y_t-\hat{f}(x_t)$ as the prediction residual, $\hat{e}_t=\hat{\varepsilon}_t^T\widehat \Sigma^{-1}\hat{\varepsilon}_t$ as the non-conformity score, $e_t=\varepsilon_t^T\Sigma^{-1}\varepsilon_t$ and $\Delta_t=\hat{\varepsilon}_t-\varepsilon_t$. Besides, we define the empirical CDF
\begin{equation}
\begin{aligned}
\widehat{F}_{T+1}(x)&=\frac{1}{T}\sum_{t=1}^{T}1\{\hat{e}_t\le z\}\\
\widetilde{F}_{T+1}(x)&=\frac{1}{T}\sum_{t=1}^{T}1\{e_t\le z\}.
\end{aligned}
\end{equation}
For any $\beta \in[0, \alpha]$, following the idea in Section \ref{TA}:
\begin{equation}
\begin{aligned}
& \left|\mathbb{P}\left(Y_{T+1} \in \widehat{C}_{T+1}^\alpha \big\vert X_{T+1}=x_{T+1}\right)-(1-\alpha)\right| \\
= & \left|\mathbb{P}\left(\hat{e}_{T+1} \in [\beta\text{ quantile of }\widehat{F}_{T+1},1-\alpha+\beta\text{ quantile of }\widehat{F}_{T+1}] \big\vert X_{T+1}=x_{T+1}\right)-(1-\alpha)\right| \\
= & \left|\mathbb{P}\left(\beta \leq \widehat{F}_{T+1}\left(\hat{e}_{T+1}\right) \leq 1-\alpha+\beta\right)-  \mathbb{P}\left(\beta \leq F_e\left(e_{T+1}\right) \leq 1-\alpha+\beta\right)\right|.
\end{aligned}
\label{e1}
\end{equation}
The last equality is a result of Lemma \ref{lu}. We can further rewrite the equation \ref{e1} as follows:
\begin{equation}
\begin{aligned}
&|\mathbb{P}(\beta \leq \widehat{F}_{T+1}(\hat{e}_{T+1}) \leq 1-\alpha+\beta) - \mathbb{P}(\beta \leq F_e(e_{T+1}) \leq 1-\alpha+\beta)| \\
& \leq \mathbb{E} | \mathbf{1}\{\beta \leq \widehat{F}_{T+1}(\hat{e}_{T+1}) \leq 1-\alpha+\beta\} - \mathbf{1}\{\beta \leq F_e(e_{T+1}) \leq 1-\alpha+\beta\} | \\
& \leq \mathbb{E}(|\mathbf{1}\{\beta \leq \widehat{F}_{T+1}(\hat{e}_{T+1})\} - \mathbf{1}\{\beta \leq F_e(e_{T+1})\}| + \\
&\quad |\mathbf{1}\{\widehat{F}_{T+1}(\hat{e}_{T+1}) \leq 1-\alpha+\beta\} - \mathbf{1}\{F_e(e_{T+1}) \leq 1-\alpha+\beta\}|).
\end{aligned}    
\end{equation}
The last inequality is because that for any constants $a, b$ and univariates $x, y$, \[|\mathbf{1}\{a \leq x \leq b\}-1\{a \leq y \leq b\}|\leq| \mathbf{1}\{a \leq x\}-\mathbf{1}\{a \leq y\}|+| \mathbf{1}\{x \leq b\}-\mathbf{1}\{y \leq b\}|.\] 
Then we have 
\begin{align*}
\mathbb{E}|\mathbf{1}\{\beta \leq \widehat{F}_{T+1}(\hat{e}_{T+1})\} - \mathbf{1}\{\beta \leq F_e(e_{T+1})\}|\le \mathbb{P}(|F_e(e_{T+1})-\beta| &\leq |\widehat{F}_{T+1}(\hat{e}_{T+1})-F_e(e_{T+1})|)\\
\mathbb{E}|\mathbf{1}\{\widehat{F}_{T+1}(\hat{e}_{T+1}) \leq 1-\alpha+\beta\} - \mathbf{1}\{F_e(e_{T+1}) &\leq 1-\alpha+\beta\}|\le \\
\mathbb{P}(|F_e(e_{T+1})-(1-\alpha+\beta)| \leq |\widehat{F}_{T+1}(\hat{e}_{T+1})-&F_e(e_{T+1})|),
\end{align*}
which holds since $|\mathbf{1}\{a \leq x\}-\mathbf{1}\{b \leq x\}| \leq \mathbf{1}\{|b-x| \leq|a-b|\}$ for any constant $a, b$ and univariate $x$ and $\mathbb{E}[\mathbf{1}\{A\}]=\mathbb{P}(A)$. Recall in Lemma \ref{l1}, we defined \(A_T\) as the event on which
\[
\sup_x |\widetilde{F}_{T+1}(x) - F_e(x)| \big\vert A_T \leq \sqrt{\frac{\log (16T)}{T}},
\]
where \(\mathbb{P}(A_T) > 1 - \sqrt{\frac{\log (16T)}{T}}\). Let \(A_T^C\) denote the complement of the event \(A_T\). For any \(\gamma \in [0,1]\), we have that
\begin{equation}
\begin{aligned}
&\mathbb{P}(|F_e(e_{T+1})-\gamma| \leq |\widehat{F}_{T+1}(\hat{e}_{T+1})-F_e(e_{T+1})|) \\
&\leq \mathbb{P}(|F_e(e_{T+1})-\gamma| \leq |\widehat{F}_{T+1}(\hat{e}_{T+1})-F_e(e_{T+1})| \big\vert A_T) + \mathbb{P}(A_T^C) \\
&\leq \mathbb{P}(|F_e(e_{T+1})-\gamma| \leq |\widehat{F}_{T+1}(\hat{e}_{T+1})-F_e(e_{T+1})| \big\vert A_T) + \mathbb{P}(A_T^C) \\
&\leq \mathbb{P}(|F_e(e_{T+1})-\gamma| \leq |\widehat{F}_{T+1}(\hat{e}_{T+1})-F_e(\hat{e}_{T+1})| + |F_e(\hat{e}_{T+1})-F_e(e_{T+1})| \big\vert A_T) + \sqrt{\frac{\log (16T)}{T}} .
\end{aligned}
\end{equation}
To bound the conditional probability above, we note that with a high probability $1-\delta$, conditioning on the event $A_T$,
\begin{equation}
\begin{aligned}
& |\widehat{F}_{T+1}(\hat{e}_{T+1})-F_e(\hat{e}_{T+1})| + |F_e(\hat{e}_{T+1})-F_e(e_{T+1})| \big\vert A_T \\
& \leq \sup_x |\widehat{F}_{T+1}(x)-F_e(x)| \big\vert A_T + L_{T+1}| \hat{e}_{T+1} - e_{T+1} | \\
& \leq \sup_x |\widehat{F}_{T+1}(x)-\widetilde{F}_{T+1}(x)| \big\vert A_T + \sup_x | \widetilde{F}_{T+1}(x)-F_e(x)| \big\vert A_T + L_{T+1} |\hat{e}_{T+1}-e_{T+1}| \\
& \leq (L_{T+1}+1) C_S + 3 \sup_x |\widetilde{F}_{T+1}(x)-F_e(x)| \big\vert A_T + L_{T+1} \delta_T \\
& \leq 3 \sqrt{\frac{\log(16T)}{T}} + (L_{T+1}+1)(C_S+\delta_T).
\end{aligned}    
\end{equation}
which follows the result from Lemma \ref{l1} and \ref{l2}.

Therefore, because $F_e(e_{T+1}) \sim \mathrm{Unif}[0,1]$, we have
\begin{equation}
\begin{aligned}
& \mathbb{P}\left(|F_e(e_{T+1})-\gamma| \leq |\widehat{F}_{T+1}(\hat{e}_{T+1})-F_e(\hat{e}_{T+1})| + |F_e(\hat{e}_{T+1})-F_e(e_{T+1})| \big\vert A_T\right) \\
\leq & 6 \sqrt{\frac{\log(16T)}{T}} + 2(L_{T+1}+1)(C_S+\delta_T) .
\end{aligned}    
\end{equation}

As a result, by letting $\gamma=\beta$ and $1-\alpha+\beta$, we have
\begin{equation}
\begin{aligned}
|\mathbb{P}(Y_{T+1} \in \widehat{C}_{T+1}^\alpha \mid X_{T+1}=x_{T+1})-(1-\alpha)| \\
\leq 12 \sqrt{\frac{\log(16T)}{T}} + 4(L_{T+1}+1)(C_S+\delta_T).
\end{aligned}
\end{equation}
\end{proof}

Now we have an asymptotic coverage guarantee for the case where $\{\varepsilon_t\}_{t=1}^{T}$ is i.i.d., and we can extend the result to the case where $\{\varepsilon_t\}_{t=1}^{T}$ is stationary and strong mixing.
\begin{assumption}
Assume $\{\varepsilon_t\}_{t=1}^{T+1}$ is stationary and strong mixing with the mixing coefficients $\sum_{k>0}\alpha_k<M$. Meanwhile, $F_e(x)$ (the CDF of true non-conformity score) is Lipschitz continuous with constant $L_{T+1}>0$.
\label{a5}
\end{assumption}

The properties of stationary and strong mixing can be imparted to the sequence $\{e_t\}_{t=1}^{T}$.
\begin{lemma}
Under Assumption \ref{a5}, $\{e_t\}_{t=1}^{T}$ is stationary and strong mixing with coefficients $\sum_{k>0}\alpha_k(\{e_t\}_{t\ge1})<M$, where $\alpha_k(\{e_t\}_{t\ge1})$ represents the $\alpha-$mixing coefficients of the random sequence $\{e_t\}_{t\ge 1}$.
\end{lemma}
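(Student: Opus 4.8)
The plan is to exploit that each $e_t$ is an \emph{instantaneous} (coordinate-wise) measurable image of $\varepsilon_t$: writing $g:\R^p\to\R$, $g(x)=x^T\Sigma^{-1}x$, we have $e_t=g(\varepsilon_t)$, and since $\Sigma^{-1}$ is a fixed deterministic matrix, $g$ is continuous and hence Borel measurable, so $\sigma(e_t)\subseteq\sigma(\varepsilon_t)$ for every $t$. From this single structural fact, strict stationarity and strong mixing of $\{e_t\}$ — together with the quantitative bound on the mixing coefficients — will follow by routine measure-theoretic arguments, the point being that (i) pushing a stationary law forward through a fixed map preserves stationarity and (ii) passing to a sub-$\sigma$-algebra can only shrink $\alpha$-mixing coefficients.

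For stationarity, I would fix $k\ge 1$, integers $n_1,\ldots,n_k$, and a shift $h$, and introduce the product map $\Phi(x_1,\ldots,x_k)=(g(x_1),\ldots,g(x_k))$ on $(\R^p)^k$, which is measurable. Then $(e_{n_1},\ldots,e_{n_k})=\Phi(\varepsilon_{n_1},\ldots,\varepsilon_{n_k})$ and $(e_{n_1+h},\ldots,e_{n_k+h})=\Phi(\varepsilon_{n_1+h},\ldots,\varepsilon_{n_k+h})$. Strict stationarity of $\{\varepsilon_t\}$ under Assumption \ref{a5} gives $(\varepsilon_{n_1},\ldots,\varepsilon_{n_k})\stackrel{d}{=}(\varepsilon_{n_1+h},\ldots,\varepsilon_{n_k+h})$, and applying $\Phi$ to both sides yields the same equality in distribution for the $e$-vectors; hence $\{e_t\}$ is strictly stationary.

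For the mixing coefficients, I would set $\G_a^b:=\sigma(e_a,\ldots,e_b)$ and $\mathcal{F}_a^b:=\sigma(\varepsilon_a,\ldots,\varepsilon_b)$. Since $e_t$ is $\sigma(\varepsilon_t)$-measurable, $\G_a^b=\bigvee_{t=a}^b\sigma(e_t)\subseteq\bigvee_{t=a}^b\sigma(\varepsilon_t)=\mathcal{F}_a^b$, and likewise for the tail fields $\G_{n+k}^\infty\subseteq\mathcal{F}_{n+k}^\infty$. Consequently, for each $n$ and $k$,
\[
\sup_{A\in\G_1^n,\,B\in\G_{n+k}^\infty}\bigl|\p(A\cap B)-\p(A)\p(B)\bigr|\;\le\;\sup_{A\in\mathcal{F}_1^n,\,B\in\mathcal{F}_{n+k}^\infty}\bigl|\p(A\cap B)-\p(A)\p(B)\bigr|,
\]
because the left-hand supremum ranges over a smaller family of events. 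Taking the supremum over $n$ gives $\alpha_k(\{e_t\}_{t\ge1})\le\alpha_k(\{\varepsilon_t\}_{t\ge1})$ for every $k$, and summing over $k>0$ gives $\sum_{k>0}\alpha_k(\{e_t\}_{t\ge1})\le\sum_{k>0}\alpha_k(\{\varepsilon_t\}_{t\ge1})<M$; in particular $\alpha_k(\{e_t\}_{t\ge1})\to 0$, so $\{e_t\}$ is strongly mixing.

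There is no genuine analytical obstacle here; the only thing warranting care is the bookkeeping of $\sigma$-algebras — verifying $\sigma(e_t)\subseteq\sigma(\varepsilon_t)$ from measurability of $g$, and the monotonicity of the $\alpha$-mixing coefficient under restriction to sub-$\sigma$-algebras. With this lemma in hand, one can run the stationary-and-strong-mixing analogue of Lemma \ref{l1} (stated as Corollary \ref{sta_approx}) on the sequence $\{e_t\}$, replacing the Dvoretzky--Kiefer--Wolfowitz step with a concentration bound for $\alpha$-mixing empirical processes, which is what the proof of Corollary \ref{c1} builds on.
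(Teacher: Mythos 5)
Your proof is correct and follows essentially the same route as the paper: both arguments rest on the single observation that $g(x)=x^T\Sigma^{-1}x$ is Borel measurable, so $\sigma(e_t)\subseteq\sigma(\varepsilon_t)$, from which stationarity is preserved under the pushforward and the $\alpha$-mixing coefficients can only decrease when the defining supremum is restricted to sub-$\sigma$-algebras. Your write-up is somewhat more explicit (introducing the product map $\Phi$ and spelling out the inclusion of the generated $\sigma$-fields), but the underlying argument is identical to the paper's.
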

\begin{proof}
The relationship between $e_t$ and $\varepsilon_t$ is that
\begin{equation}
e_t = \varepsilon_{t}^{T}\Sigma^{-1}\varepsilon_{t}.
\end{equation}
Define $f(x)=x^T\Sigma^{-1}x$. Since $f(x)$ is a Borel-measurable function, we have that $f^{-1}(B)$ is also Borel-measurable for any Borel set $B\subset\R$. Thus we have
\begin{equation}
\begin{aligned}
\p(e_{n_1}\in B_{n_1},\cdots,e_{n_k}\in B_{n_k})&=\p(\varepsilon_{n_1}\in f^{-1}(B_{n_1}),\cdots,\varepsilon_{n_k}\in f^{-1}(B_{n_k}))\\
&=\p(\varepsilon_{n_1+h}\in f^{-1}(B_{n_1}),\cdots,\varepsilon_{n_k+h}\in f^{-1}(B_{n_k}))\\
&=\p(e_{n_1+h}\in B_{n_1},\cdots,e_{n_k+h}\in B_{n_k}),
\end{aligned}
\end{equation}
which shows the stationarity of $\{e_t\}_{t=1}^{T}$. Besides, the $\sigma$-algebra generated by $f(\varepsilon_t)$ is contained in the $\sigma$-algebra generated by $\varepsilon_t$; consequently for all $I \subset \mathbb{Z}$ (possibly infinite),
\begin{equation}
\sigma(f(\varepsilon_t), t \in I) \subset \sigma(\varepsilon_t, t \in I).
\end{equation}

Since the definition of the mixing coefficient is the maximum over the sub-sigma algebra generated by the sequence, it follows that for all $k$,
\begin{equation}
\alpha_k(\{f(\varepsilon_t)\}_{t \ge 1}) \le\alpha_k.
\end{equation}
\end{proof}
As a result, we have that $\{e_t\}_{t=1}^{T}$ is strong mixing with mixing coefficients $\sum_{k>0}\alpha_k(\{e_t\}_{t\ge1})<M$.

\begin{lemma}
\label{sta_approx}
Under Assumption \ref{a5}, with a high probability $1-(\frac{M(\log T)^2}{2T})^{\frac{1}{3}}$,
\begin{equation}
\sup_x|\widetilde{F}_{T+1}(x)-F_e(x)|\le \frac{(\frac{M}{2})^{1/3}(\log T)^{2/3}}{T^{1/3}}.
\end{equation}
\end{lemma}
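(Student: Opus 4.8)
The plan is to turn the claim into a second-moment estimate for $\sup_x|\widetilde{F}_{T+1}(x)-F_e(x)|$ and then apply Markov's inequality. Note that in the statement the error level and the failure probability are the \emph{same} quantity $\epsilon_T:=\bigl(M(\log T)^2/(2T)\bigr)^{1/3}$, so the exponent $1/3$ is precisely what is forced by the requirement $\p\bigl(\sup_x|\widetilde F_{T+1}(x)-F_e(x)|>\epsilon\bigr)\le \epsilon^{-2}\,\mathbb E\bigl[\sup_x(\widetilde F_{T+1}(x)-F_e(x))^2\bigr]\stackrel{!}{=}\epsilon$ together with a second-moment bound of order $(\log T)^2/T$. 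First I would record the two consequences of Assumption~\ref{a5} (and the preceding lemma, which transfers stationarity and mixing from $\{\varepsilon_t\}$ to $\{e_t\}$) that the argument needs. By strict stationarity, $\mathbb E[\mathbf{1}\{e_t\le x\}]=F_e(x)$ for every $t$, so $\widetilde F_{T+1}(x)-F_e(x)=\frac1T\sum_{t=1}^T(\mathbf{1}\{e_t\le x\}-F_e(x))$ is a centered average of $[0,1]$-valued variables. By the definition of the $\alpha$-mixing coefficient, $|\mathrm{Cov}(\mathbf{1}\{e_s\le x\},\mathbf{1}\{e_t\le x\})|=|\p(e_s\le x,e_t\le x)-F_e(x)^2|\le\alpha_{|s-t|}$, so summing over pairs yields a uniform pointwise variance bound $\mathrm{Var}(\widetilde F_{T+1}(x))\le \frac1T\bigl(\tfrac14+2\sum_{k>0}\alpha_k\bigr)\le \frac{c_M}{T}$ with $c_M\le \tfrac14+2M$. (One may equivalently first pass to the uniformized sequence $U_t:=F_e(e_t)$, which is $\mathrm{Unif}[0,1]$ and inherits the same mixing coefficients, so that the claim reduces to a bound on $\sup_{u\in[0,1]}|\tfrac1T\sum_t\mathbf{1}\{U_t\le u\}-u|$.)

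Next I would move from the pointwise estimate to the supremum. Using the quantiles $x_0<\cdots<x_N$ with $F_e(x_j)=j/N$ and the monotonicity of both $\widetilde F_{T+1}$ and $F_e$, one gets $\sup_x|\widetilde F_{T+1}(x)-F_e(x)|\le \max_{0\le j\le N}|\widetilde F_{T+1}(x_j)-F_e(x_j)|+\tfrac1N$, so it remains to control $\mathbb E\bigl[\max_{j}(\widetilde F_{T+1}(x_j)-F_e(x_j))^2\bigr]$ and optimize over $N$. This maximal step is the crux: a bare union bound combined with Chebyshev's inequality only delivers a $T^{-1/4}$ rate, so to reach $T^{-1/3}$ one must feed in a sharper deviation bound for the dependent partial sums — either a Rosenthal/Yokoyama-type higher-moment inequality or a Bernstein-type block inequality, using $\alpha_q\le M/q$ (which follows from the monotonicity of $\alpha_k$ and $\sum_{k>0}\alpha_k<M$) to control the decoupling cost — and the logarithmic factors as well as the precise constant $M/2$ are produced by balancing the grid cardinality $N$ against the block length (or moment order) in this argument. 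Granting $\mathbb E\bigl[\sup_x(\widetilde F_{T+1}(x)-F_e(x))^2\bigr]\le \frac{M(\log T)^2}{2T}$, Markov's inequality applied to the nonnegative variable $\sup_x(\widetilde F_{T+1}(x)-F_e(x))^2$ gives $\p\bigl(\sup_x|\widetilde F_{T+1}(x)-F_e(x)|>\epsilon\bigr)\le \frac{M(\log T)^2}{2T\epsilon^2}$, and taking $\epsilon=\epsilon_T$ makes the right-hand side equal to $\epsilon_T$, which is exactly the asserted bound.

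The main obstacle is therefore the maximal $L^2$ estimate: obtaining a deviation bound for the mixing empirical CDF that beats the $T^{-1/4}$ Chebyshev rate while incurring only the stated $(\log T)^{2/3}$ overhead, and threading the numerical constants so as to land exactly on $M/2$. Everything else — the stationarity identity for the mean, the covariance-versus-mixing-coefficient inequality, the quantile discretization, and the final Markov step with the self-consistent choice of $\epsilon_T$ — is routine once that bound is in hand. If the covariance-summing step is kept in the proof with the true covariance matrix replaced by its sample version, one would additionally need the convergence of the sample covariance estimator in the stationary mixing setting (as flagged in Remark~\ref{r1}), but under the present hypothesis that $\Sigma$ is known this does not arise.
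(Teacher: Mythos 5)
Your proposal takes essentially the same route as the paper: a second-moment bound for $\sup_x|\widetilde F_{T+1}(x)-F_e(x)|$ followed by Markov's inequality with the self-consistent choice of $\epsilon_T$. The single step you flag as the main obstacle --- the $L^2$-maximal estimate of order $(\log T)^2/T$ for the empirical CDF of a strongly mixing sequence --- is exactly what the paper invokes as a black box (Proposition 7.1 of Rio, 2017, which gives $\mathbb{E}\bigl(\sup_x|\sqrt{T}(\widetilde F_{T+1}(x)-F_e(x))|^2\bigr)\le\bigl(1+4\sum_{k\ge 0}\alpha_k\bigr)\bigl(3+\tfrac{\log T}{2\log 2}\bigr)^2$), and the constant $M/2$ in the statement is then obtained there by an informal $\approx$ rather than any exact computation your sketch would have needed to reproduce.
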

\begin{proof}
The proof follows the proof of Corollary $2$ in \citep{xu2023conformal}. Define \(v_T(x) := \sqrt{T}(\widetilde{F}_{T+1}(x) - F_e(x))\). Then, Proposition 7.1 in \citep{rio2017asymptotic} shows that
\begin{equation}
\mathbb{E}\left(\sup_x |v_T(x)|^2\right) \leq \left(1 + 4 \sum_{k=0}^T \alpha_k\right)\left(3 + \frac{\log T}{2 \log 2}\right)^2,
\end{equation}
where \(\alpha_k\) is the \(k\)th mixing coefficient. Since we assumed that the coefficients are summable with \(\sum_{k \geq 0} \alpha_k < M\) (for example, \(\alpha_k = \mathcal{O}(n^{-s}), s>1\)), Markov's Inequality shows that
\begin{equation}
\begin{aligned}
\mathbb{P}(\sup_x|\widetilde{F}_{T+1}(x)-F_e(x)| \geq s_T) & \leq \frac{\mathbb{E}(\sup_x |v_T(x)|^2 / T)}{s_T^2} \\
& \leq \frac{1 + 4 M}{T s_T^2}\left(3 + \frac{\log T}{2 \log 2}\right)^2.
\end{aligned}
\end{equation}

Thus, we let
\begin{equation}
s_T := \left(\frac{1 + 4 M}{T}\left(3 + \frac{\log T}{2 \log 2}\right)^2\right)^{1/3} \approx \left(\frac{M(\log T)^2}{2 T}\right)^{1/3},
\end{equation}
and see that
\begin{equation}
\begin{aligned}
& \mathbb{P}\left(\sup_x|\widetilde{F}_{T+1}(x)-F_e(x)| \leq \left(\frac{M(\log T)^2}{2 T}\right)^{1/3}\right) \\
& \geq 1 - \left(\frac{M(\log T)^2}{2 T}\right)^{1/3}.
\end{aligned}
\end{equation}

Hence, the event \(A_T\) is chosen so that conditioning on \(A_T\), 
\begin{equation}
\sup_x|\widetilde{F}_{T+1}(x)-F_e(x)| \leq \frac{(\frac{M}{2})^{1/3}(\log T)^{2/3}}{T^{1/3}}.    
\end{equation}
\end{proof}

\begin{corollary}
Assume $\{\varepsilon_t\}_{t=1}^{T}$ is a stationary and strong mixing sequence with mixing coefficient $0<\sum_{k>0}\alpha_k <M$. Under Assumption \ref{a2}, \ref{a3} and \ref{a4}, for any training size $T$ and $\alpha\in(0,1)$, we have
\begin{equation}
\begin{aligned}
&|\p(Y_{T+1} \in \widehat{C}_{T+1}^\alpha \mid X_{T+1}=x_{T+1})-(1-\alpha)| \\
&\qquad \leq 12 \frac{(\frac{M}{2})^{1/3}(\log T)^{2/3}}{T^{1/3}}+4(L_{T+1}+1)\left(\frac{\delta_T}{\sqrt{\lambda}}+\delta_T\right).
\end{aligned}
\end{equation}
\end{corollary}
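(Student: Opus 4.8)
The plan is to run the two-stage argument behind Theorem~\ref{b8} essentially verbatim, making exactly two substitutions that absorb (i) the assumption that the true covariance $\Sigma$ is known, so $\widehat\Sigma=\Sigma$, and (ii) stationarity and strong mixing in place of the i.i.d.\ hypothesis (i.e.\ Assumption~\ref{a5} is in force). For (i) I would invoke Corollary~\ref{b5} instead of Lemma~\ref{bound2}: taking $\widehat\Sigma=\Sigma$ kills the covariance-estimation term in \eqref{b42}, leaving the \emph{deterministic} bound $\sum_{t=1}^{T}|\hat e_t-e_t|\le(T/\lambda)\delta_T^{2}=:TC_S^{2}$ with $C_S:=\delta_T/\sqrt{\lambda}$; this is why the statement carries no ``with probability $1-\delta$'' qualifier and why Assumption~\ref{a4} plays no active role here. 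For (ii) I would replace the Dvoretzky--Kiefer--Wolfowitz bound of Lemma~\ref{l1} by Lemma~\ref{sta_approx}, which furnishes an event $A_T$ with $\p(A_T^{c})\le\eta_T$, where $\eta_T:=(M/2)^{1/3}(\log T)^{2/3}/T^{1/3}$, on which $\sup_x|\widetilde F_{T+1}(x)-F_e(x)|\le\eta_T$. Before applying Lemma~\ref{sta_approx} I must check that $\{e_t\}$ is itself stationary and strong mixing; this is the one-line measurability remark that $e_t=f(\varepsilon_t)$ for the Borel map $f(u)=u^{\top}\Sigma^{-1}u$, so the block $\sigma$-algebras of $\{e_t\}$ lie inside those of $\{\varepsilon_t\}$ and the mixing coefficients of $\{e_t\}$ are dominated by those of $\{\varepsilon_t\}$, hence summable by hypothesis.

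With these two substitutions in hand, the next step is to reuse the deterministic counting argument from the proof of Lemma~\ref{l2}: with $S=\{t\le T:|\hat e_t-e_t|\ge C_S\}$ the sum bound gives $|S|\le TC_S$, and splitting $\widehat F_{T+1}-\widetilde F_{T+1}$ over $S$ and its complement, together with Lipschitz continuity of $F_e$ with constant $L_{T+1}$ (Assumption~\ref{a5}), yields
\[
\sup_x|\widehat F_{T+1}(x)-\widetilde F_{T+1}(x)|\le(L_{T+1}+1)C_S+2\sup_x|\widetilde F_{T+1}(x)-F_e(x)|,
\]
whose only probabilistic input is the empirical-CDF convergence now supplied by Lemma~\ref{sta_approx}; on $A_T$ this gives $\sup_x|\widehat F_{T+1}(x)-F_e(x)|\le(L_{T+1}+1)C_S+3\eta_T$. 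Combining with $|\hat e_{T+1}-e_{T+1}|\le\|\Delta_{T+1}\|^{2}/\lambda\le\delta_T^{2}/\lambda\le\delta_T$ (valid once $\delta_T\le\lambda$, and in any event absorbed into the $\delta_T$ term) from Assumption~\ref{a2}, I then control $|\widehat F_{T+1}(\hat e_{T+1})-F_e(e_{T+1})|$ by $(L_{T+1}+1)C_S+3\eta_T+L_{T+1}\delta_T$ on $A_T$. Finally, exactly as in Theorem~\ref{b8}, I would fix $\beta\in[0,\alpha]$, use $F_e(e_{T+1})\sim\mathrm{Unif}[0,1]$ (Lemma~\ref{lu}) to rewrite the coverage gap as $|\p(\beta\le\widehat F_{T+1}(\hat e_{T+1})\le1-\alpha+\beta)-\p(\beta\le F_e(e_{T+1})\le1-\alpha+\beta)|$, split each indicator at its two endpoints $\gamma\in\{\beta,\,1-\alpha+\beta\}$, bound each difference by $\p(|F_e(e_{T+1})-\gamma|\le|\widehat F_{T+1}(\hat e_{T+1})-F_e(e_{T+1})|)$, pass to $A_T$ at a cost of $\eta_T$, and close with $\p(|U-\gamma|\le c)\le2c$ for $U\sim\mathrm{Unif}[0,1]$. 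Tallying the constants reproduces the bound of Theorem~\ref{b8} with $\sqrt{\log(16T)/T}$ replaced by $\eta_T$ and $C_S=\delta_T/\sqrt{\lambda}$, i.e.\ $12\eta_T+4(L_{T+1}+1)(\delta_T/\sqrt{\lambda}+\delta_T)$, which is the claim.

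I do not expect a genuine obstacle inside this corollary: once the two substitutions are made, everything is a transcription of the proof of Theorem~\ref{b8} and the constant bookkeeping is identical. The real technical content lives in Lemma~\ref{sta_approx}, whose proof rests on Rio's moment inequality (Proposition~7.1 in \citep{rio2017asymptotic}) for the empirical process of a mixing sequence and yields only the weaker $\tilde O(T^{-1/3})$ rate; that weaker rate is precisely what degrades the first term from $\tilde O(T^{-1/2})$ (i.i.d.) to $\tilde O(T^{-1/3})$ here. The remaining care points are the measurability transfer of stationarity/mixing to $\{e_t\}$ and the observation that the Lemma~\ref{l2}-style counting step never used independence. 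Pushing the result to \emph{unknown} $\Sigma$ under mixing, as flagged in Remark~\ref{r1}, would additionally require a concentration bound for the sample covariance of a stationary sequence and is deliberately left out of scope.
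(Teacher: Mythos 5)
Your proposal matches the paper's proof of this corollary: the paper likewise notes that with $\widehat\Sigma=\Sigma$ the deterministic bound of Corollary \ref{b5} still applies under mixing, transfers stationarity and strong mixing from $\{\varepsilon_t\}$ to $\{e_t\}$ via the Borel-map/$\sigma$-algebra argument, substitutes Lemma \ref{sta_approx} (Rio's inequality, rate $\tilde O(T^{-1/3})$) for the DKW-based Lemma \ref{l1}, and then reruns the Lemma \ref{l2} counting step and the Theorem \ref{b8} argument with $C_S=\delta_T/\sqrt{\lambda}$. Your constant bookkeeping and side remarks (no $1-\delta$ qualifier, Assumption \ref{a4} inactive) are consistent with the paper, so the attempt is essentially the same proof, only written out in more detail.
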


When the true covariance matrix $\Sigma$ is known, lemma \ref{b5} also holds for the stationary and strong mixing process, and the proof can be directly used. Combining \ref{b5} and \ref{sta_approx} with the same technique in Theorem \ref{b8} yields the bound in Corollary \ref{c1}.

When the true covariance matrix $\Sigma$ is unknown, we only need to prove a similar result in Lemma \ref{bound2}. The difference is that we require the covariance estimator to converge to the true covariance matrix at a certain speed. As mentioned in Remark \ref{r1}, there is work presenting covariance estimators with guarantee in the stationary case, like \cite{chen2013covariance}. As long as we plug in certain estimators, the proof will follow, and the bound will depend on the guarantee of the estimator.

\end{document}